\newif\ifapp
\newtheorem{thm}{Theorem}
\newtheorem{lem}[thm]{Lemma}
\newtheorem{cor}[thm]{Corollary}
\theoremstyle{definition}
\newtheorem{definition}[thm]{Definition}
\newtheorem{assumption}[thm]{Assumption}
\newcommand{\eq}[1]{\begin{footnotesize}\begin{align}#1\end{align}\end{footnotesize}}
\newcommand{\nin}{\not\in}
\newcommand{\teb}[1]{\phantom{abc}\text{#1}}
\title{Understanding and Leveraging the Learning Phases of Neural Networks\footnote{This is the extended version with all proofs and additional datasets of the accepted paper at AAAI 2024 in the end.}}
\title{Understanding and Leveraging the Learning Phases of Neural Networks}
\author {
    Johannes Schneider\textsuperscript{\rm 1}\footnote{Work conducted during holidays.},
    Mohit Prabhushankar \textsuperscript{\rm 2}
}
\begin{document}

\maketitle

\begin{abstract}
The learning dynamics of deep neural networks are not well understood. The information bottleneck (IB) theory proclaimed separate fitting and compression phases. But they have since been heavily debated. We comprehensively analyze the learning dynamics by investigating a layer's reconstruction ability of the input and prediction performance based on the evolution of parameters during training. We empirically show the existence of three phases using common datasets and architectures such as ResNet and VGG: (i) near constant reconstruction loss, (ii) decrease, and (iii) increase. We also derive an empirically grounded data model and prove the existence of phases for single-layer networks. Technically, our approach leverages classical complexity analysis. It differs from IB by relying on measuring reconstruction loss rather than information theoretic measures to relate information of intermediate layers and inputs. Our work implies a new best practice for transfer learning: We show empirically that the pre-training of a classifier should stop well before its performance is optimal. 
\end{abstract} 

\section{Introduction}
Deep neural networks are arguably the key driver of the current boom in artificial intelligence(AI) in academia and industry. They achieve superior performance in a variety of domains. Still, they suffer from poor understanding, leading to an entire branch of research, i.e., explainability(XAI)~\cite{mesk22}, and to widespread debates on trust in AI within society. Thus, enhancing our understanding of how deep neural networks work is arguably one of key problems in ongoing machine learning research~\cite{pog20}. Unfortunately, the relatively few theoretical findings and reasonings are often subject to rich controversy. 

One debate surrounds the core of machine learning: learning behavior. ~\citet{tish17} leveraged the information bottleneck(IB) framework to investigate the learning dynamics of neural networks. IB relies on measuring mutual information between activations of a hidden layer and the input as well as the output. A key qualitative, experimental finding was the existence of a fitting and compression phase during the training process. Compression is conjectured a reason for good generalization performance. It is frequently discussed in the literature~\cite{gei21,jak19}. Such a finding can be considered a breakthrough in understanding deep neural networks. However, its validity has been disputed, i.e., ~\citet{saxe19} claimed that statements by~\citet{tish17} do not generalize to common activation functions. Today, the debate is still ongoing~\cite{lor21}. A key challenge is approximating the IB. This, makes a rigorous mathematical treatment very hard -- even empirical analysis is non-trivial.

\begin{figure}[!htb]
\vspace{-8pt}
\centering{\centerline{\includegraphics[width=0.45\textwidth]{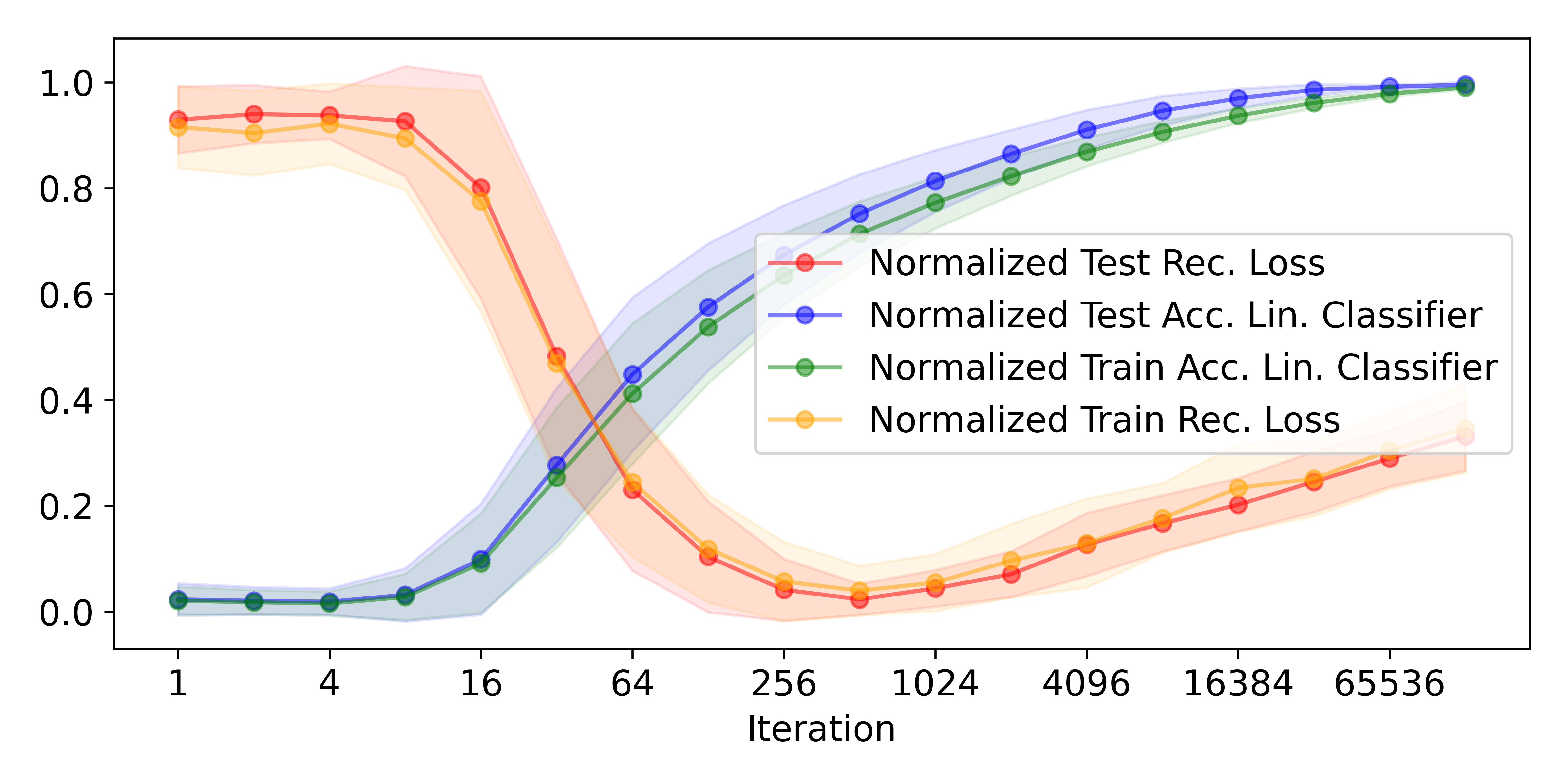}} 
\vspace{-8pt}
\caption{Normalized accuracy and reconstruction loss for a linear classifier and the FashionMNIST dataset} \label{fig:metF0Fa}}
\vspace{-8pt}
\end{figure}

In this work, we study the learning behavior with a similar focus, i.e., classification and reconstruction capability of layers. We propose a different lens for investigation rooted in classical complexity analysis that leads to more precise statements. We perform a rigorous analysis of a linear classifier and a data model built on empirical evidence. 
First, to the best of our knowledge prior work relies on general statements that lack mathematical proof. Instead, we show bounds on the duration of phases, also highlighting interplays among data characteristics, e.g., between the number of samples, the number of classes and the number of input attributes.
Second, we utilize different measures from IB. IB measures information of a layer with respect to input and output~\cite{tish17}. We measure (i) accuracy, e.g., how well the network classifies samples\footnote{In the full version we also train a separate classifier (as in~\cite{ala17}) rather than relying on network performance. We train a simple classifier on the layer activations, i.e., we look at linear separability of classes given layer activations}, and (ii) the reconstruction error of the input given the layer by utilizing a decoder.
Third, we provide a data model and prove for single-layer networks, i.e., linear networks, that it can cause the three learning phases, i.e., the reconstruction error is likely to decrease early in training after a phase of near constant behavior and increases later during training.   

As practical implication, we show that pre-training of classifiers that are later fine-tuned should stop well before the performance is optimal for the pre-training task. That is, we argue and show that it is decisive how much information on the original dataset is kept.

The paper begins with an empirical analysis showing that the alleged phases can be observed for multiple classifiers, datasets and layers followed by a theoretical analysis drawing on our empirical findings to model the problem and rigorously analyze a linear multi-class model. Finally, we elaborate on transfer learning, state related work, discuss our work, and conclude. 

\section{Empirical analysis}
For a model $M=(L_0,L_1,...)$ consisting of a sequence of layers $L_i$, we investigate the reconstruction loss of inputs of a decoder $DE^{(t)}$ trained for each iteration $t$. The decoder is trained to output a reconstruction $\hat{x}$ from a layer activation $L_i(x)$, i.e., $\hat{X}=DE(L_i(x))$. The reconstruction error $Rec^{(t)}$ is $||\hat{x}-x||^2$. We compare this error against the model's accuracy $Acc^{(t)}$. (In the full version, we also discuss accuracy of a linear classifier $CL$ trained on layer activations $L_i$.)




\subsection{Datasets, Networks and Setup} \label{sec:dat} 
As networks $M$ we used VGG-11~\cite{sim14}, Resnet-10~\cite{he16} and a fully connected network, i.e., we employed networks $F0$, which equals the theoretical setup, i.e., one dense layer followed by a softmax activation. After each hidden layer we applied the ReLU activation and batch-normalization. We used a fixed learning rate of 0.002 and stochastic gradient descent with batches of size 128 training for 256 epochs.  

We computed evaluation metrics $Acc^{(t)}$ and $Rec^{(t)}$ at iterations $2^i$. 
For the decoder $DE$ we used the same decoder architecture as in~\cite{sch21cla}, where a decoder from a (standard) auto-encoder was used.  For each computation of the metrics, we trained the decoder for 30 epochs using the Adam optimizer with a learning rate of 0.0003.
We reconstructed from the network outputs, i.e., the last dense layer (index -1), the second last layer (index -2), i.e., the one prior to the (last) dense layer and layer with index -3, which indicates using outputs of the second last conv layer for VGG and the second last block for ResNet.\footnote{We don't show reconstructions from softmax outputs as they follow the pattern even more strongly.} 
We used CIFAR-10/100~\cite{kri09}, Fashion-MNIST~\cite{xia17}, and MNIST~\cite{den12}, all scaled to 32x32, available under the MIT (first 3 datasets) and GNU 3.0 license. We trained each model $M$ five times. All figures show standard deviations. We report normalized metrics to better compare $Acc^{(t)}$ and $Rec^{(t)}$.

\subsection{Observations}


Figure \ref{fig:metF0Fa} shows results for FashionMNIST using a linear classifier $F0$. Figure \ref{fig:metMuFa} shows the outputs for a ResNet for multiple layers for the MNIST and FashionMNIST datasets. 
\ifapp
Additional datasets and classifiers are in the end.
\else
Additional datasets and classifiers are in the full version. 
\fi
The behavior of all three classifiers across datasets is qualitatively identical.  As expected, accuracy increases throughout training. The reconstruction loss remains stable for the first few iterations before decreasing and increasing towards the end, highlighting the existence of multiple phases. For layers closer to the input, phases become less pronounced and reconstruction loss overall is less. That is, the phases are well-observable when normalizing each line (left panel in \ref{fig:metMuFa}, but less so for lower layers as seen in the right panel, where lines are not normalized separately. Put differently, the closer to the output, the more easily the phases are observable. For layers close to the input they are not noticeable. This is aligned with existing knowledge that lower layers in classifiers converge faster and are more generic\cite{zeil14}, i.e., fit less to input data (especially, its labels). Thus, empirically, we have observed the existence of the phases, which we investigate more profoundly in our theoretical analysis.


\begin{figure*}
\vspace{-8pt}
  \centering
  
  \subfloat[ResNet-10 (each line normalized separately)]{\includegraphics[width=0.45\textwidth]{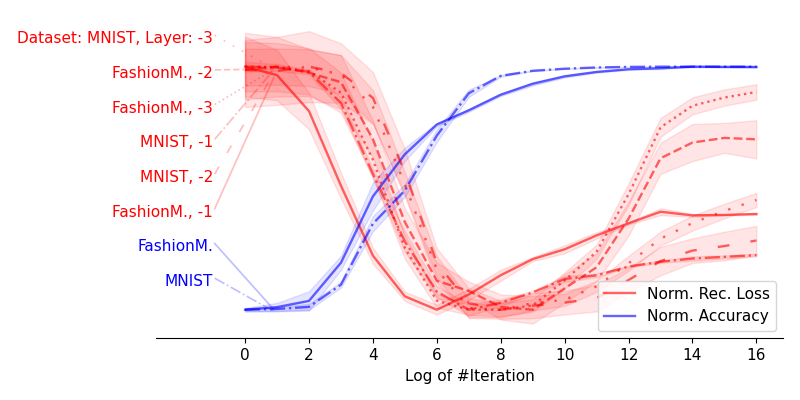} \label{fig:b1}} 
  \subfloat[ResNet-10 ]{\includegraphics[width=0.45\textwidth]{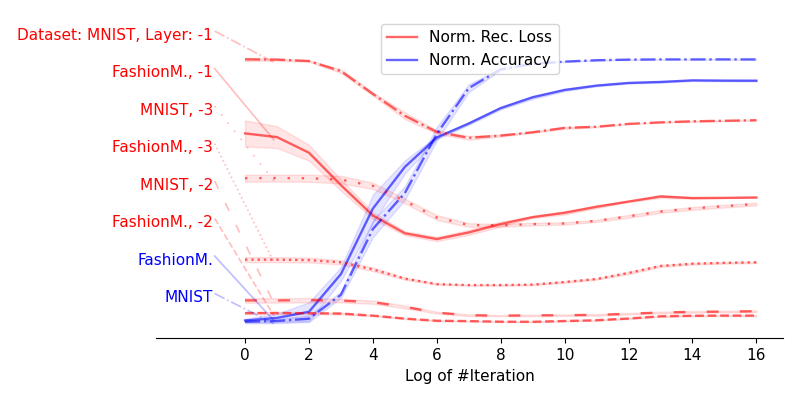} \label{fig:a1}} 
  \vspace{-4pt}
  \ifapp
  \caption{Accuracy and reconstruction loss for a ResNet for the FashionMNIST and the MNIST dataset. (Negative) layer indices indicate skipped layers from the output as described in text.  Other datasets and classifiers are in the end.} \label{fig:metMuFa}
  \else
    \caption{Accuracy and reconstruction loss for a ResNet for the FashionMNIST and the MNIST dataset. (Negative) layer indices indicate skipped layers from the output as described in text.  Other datasets and classifiers are in the extended version.} \label{fig:metMuFa}
    \fi
  \vspace{-8pt}
\end{figure*}

\section{Theoretical analysis} 
We analyze the reconstruction loss over time of a linear decoder taking outputs of a simple classifier as input. 
We follow standard complexity analysis from computer science using $O$-notation deriving bounds regarding the number of samples $n$, dimension $d$ of the inputs, and number of classes $l$. As common, we assume the quantities in the bounds such as $n$, $d$, and $l$ are large, allowing us to discard lower order terms in $n$, $d$, and $l$ and constants. 

\subsection{Definitions and Assumptions}
\begin{figure}
\vspace{-4pt}
  \centering
  \includegraphics[width=0.85\linewidth]{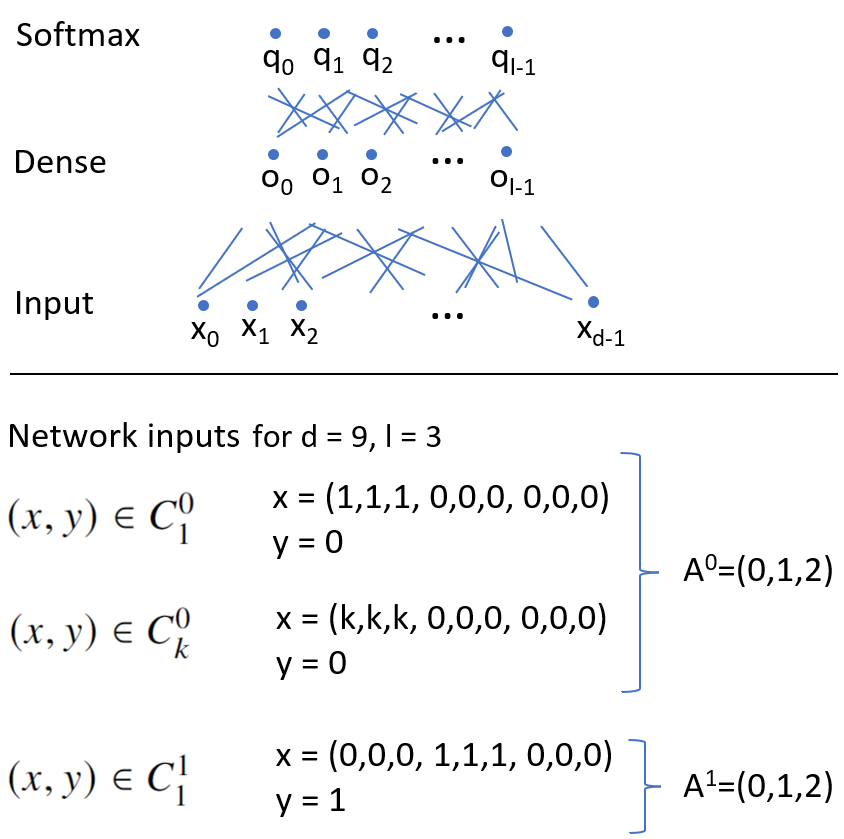}
\vspace{-4pt}
\caption{Illustration of network with all to all connections and inputs} \label{fig:netin}
  \vspace{-4pt}
\end{figure}
\noindent\textbf{Data:} We consider a labeled dataset $D=\{(x,y)\}$ consisting of pairs $(x,y)$ with $d$-dimensional input $x=(x_0,x_1,..,x_{d-1})$ and label $y \in [0,l-1]$, i.e., we have $l$ classes and each input has $d$ attributes. We denote $n =|D|$ as the number of samples. The set $C^y=\{\big(x |(x,y') \in D\big) \wedge \big(y'=y\big)\}$ comprises of all inputs of class $y$. We assume balanced classes, i.e., $|C^i|=|C^{j}|=n/l$. We denote the set of indexes for all $d$ input attributes as $A:=(0,1,...,d-1)$. For a sample $(x,y) \in D$ of a class $y$, only the subset of attributes $A_y:=[y\cdot d/l,(y+1)\cdot d/l] \subset A$ are non-zero. This builds on the assumption that features are only present for at most a few classes. 

Our data builds on the natural assumption that some samples are easier and others are more difficult to recognize, i.e., input features have different strengths. That is, some samples might appear like prototypical samples with well-notable characteristics of that class, i.e., \emph{strong features} and others might appear more ambiguous, i.e., have \emph{weak features}.
Rather than using a continuous distribution for feature strengths, we employ a discrete approximation, where a feature strength is either 0, 1 (weak) or $k$ (strong). That is, feature strengths differ by a factor $k$. We assume $k\geq 2$.\footnote{$k>1$ suffices with a more complex analysis.} We treat it as a constant. That is, although we do not subsume it in our asymptotic analysis, it should be seen as a constant, i.e., $\Theta(k)=\Theta(1)$.
We say that samples $C^y$ of class $y$ can be split into two equal-sized disjoint subsets, i.e., \emph{weak samples} with only \emph{weak features} $C^y_{1}$, and \emph{strong samples} with only \emph{strong features} $C^y_{k}$.\footnote{Samples having a mix of both do not change outcomes, but add to notational complexity}. 

The data samples $(x,y)$ are defined as:
\eq{ 
x:= \label{def:dat}
\begin{cases}
  x_{i \in A_{y}}=1,  & \text{if}\ x \in  C^y_1 \text{ 
 (weak features for weak samples) }\\ 
  x_{i \in A_{y}}=k,  & \text{if}\ x \in  C^y_k \text{ 
 (strong features for strong samples) }\\ 
  x_{i}=0  & \text{otherwise (non-present features for both)}
\end{cases} 
}

The data are illustrated in Figure \ref{fig:netin}. In our data model, the value of an attribute (or feature) is zero for most samples. It is non-zero, i.e., either 1 or $k$, for samples of a single class. Also, the input attributes do not change their values throughout training. Real data, i.e., inputs to the last layer throughout training, is shown for comparison in Figure \ref{fig:indistr}. For real data, the input distribution of one layer evolves during training as weights of lower layers change. For VGG-16, inputs stem from a ReLU layer and, thus, the majority of outputs is 0 as proclaimed in our model.  For Resnet-10, the inputs stem from a 4x4 average pooling, which averages outcomes of a ReLU layer. Thus, there is no strong peak at 0, but rather inputs are small positive values. Despite the averaging after a few iterations, well-noticeable, class-dependent differences between attributes emerge. Thus, the essence of our model is also captured for Resnet: Most input attributes of (samples of) a class are small. An input attribute is only large for samples of one or a few classes.

\begin{figure*}
  \vspace{-4pt}
  \subfloat[Data model]{\includegraphics[width=0.42\linewidth]{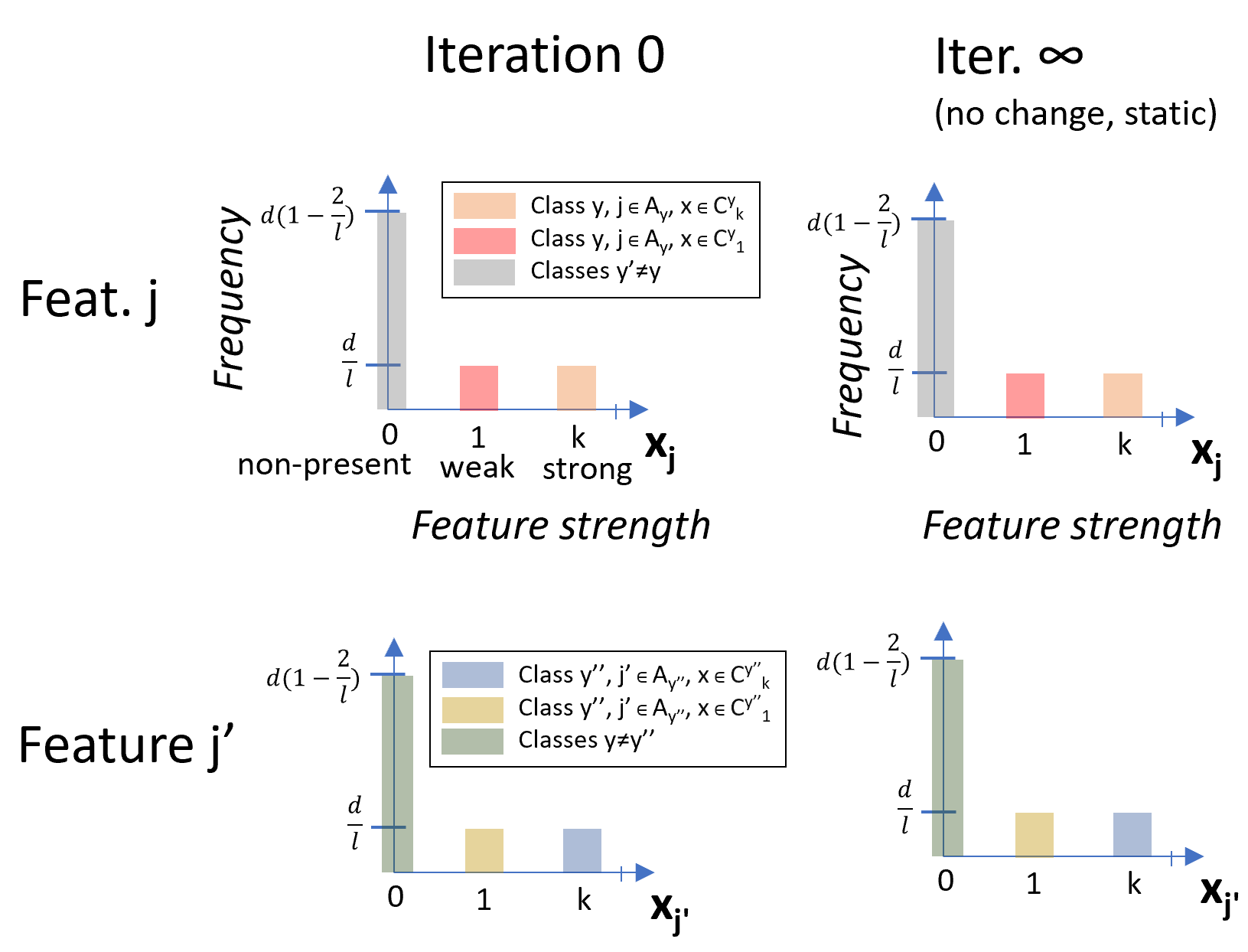}}
  \quad 
  \subfloat[Real data: ResNet-10 on FashionMNIST.]{\includegraphics[width=0.27\linewidth]{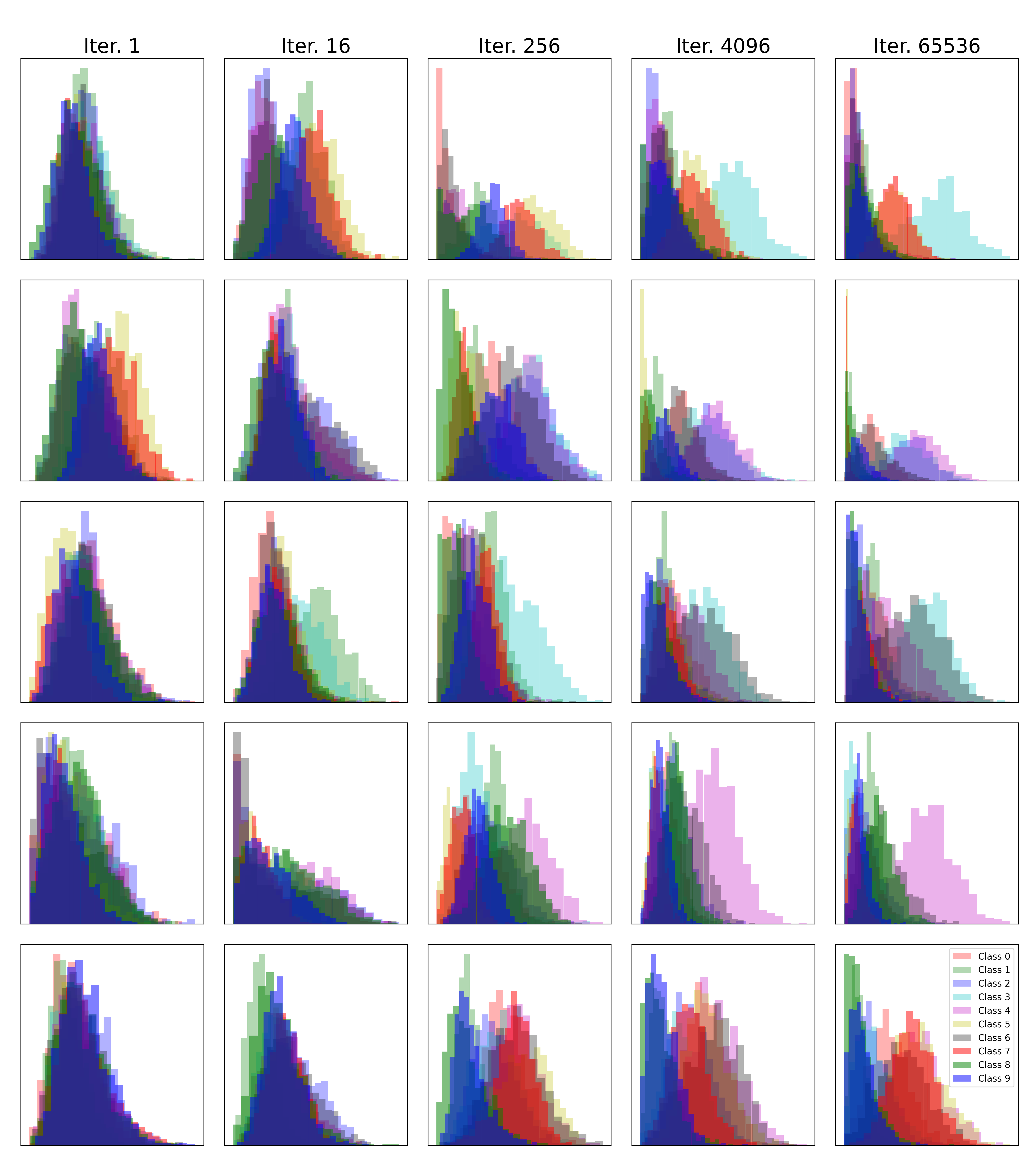}}
  \quad
  \subfloat[Real data: VGG-16 on CIFAR-10]{\includegraphics[width=0.27\textwidth]{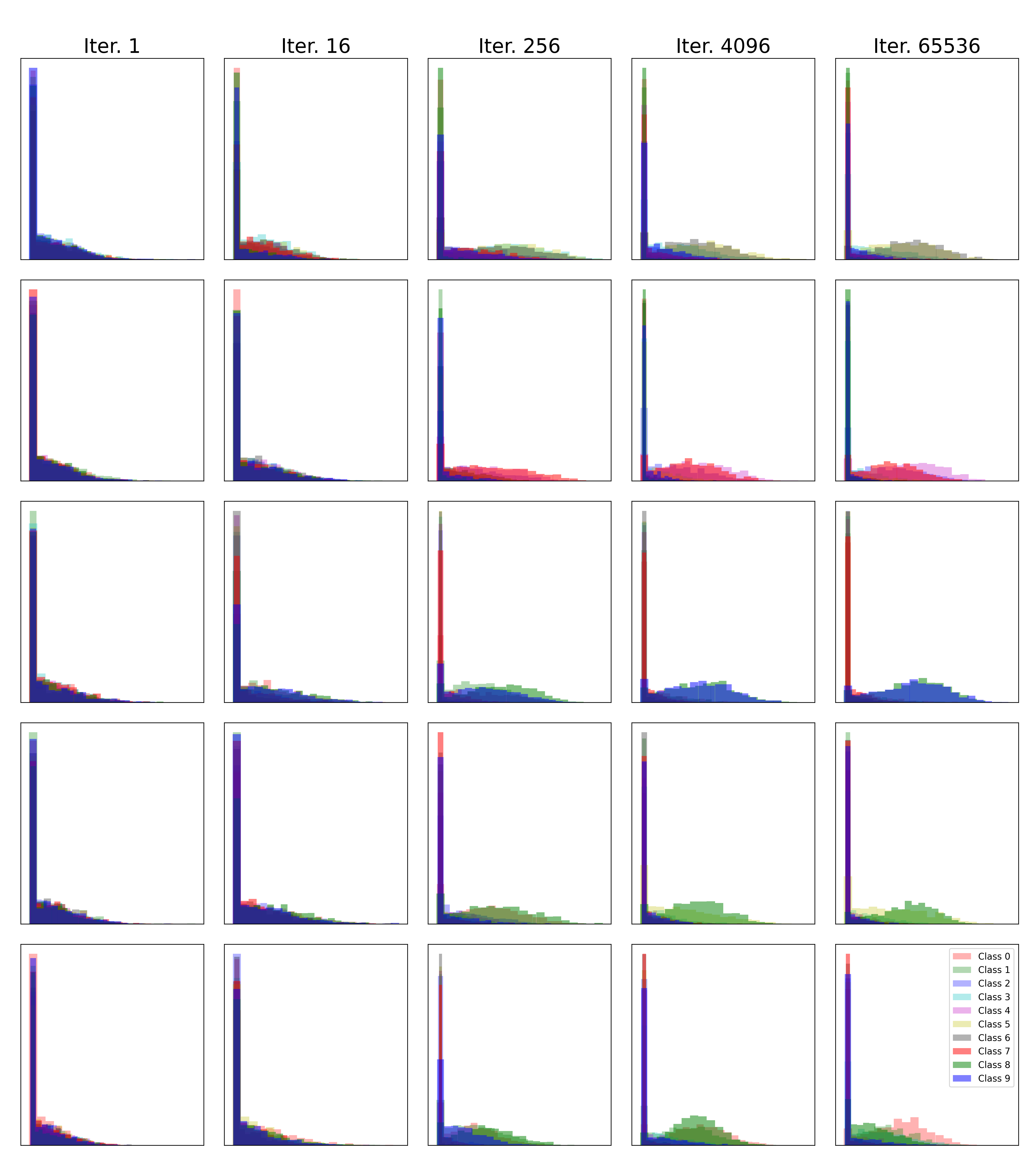} \label{fig:b6}} 
  \vspace{-4pt}
\caption{Class-dependent distribution of an input attribute fed into the last dense layer. Rows show features, columns iterations. Initially, inputs are equally distributed for all classes, but already after a few iterations samples of one or a few classes have larger values. ResNet is less concentrated due to 4x4 average pooling. } \label{fig:indistr}
\end{figure*}

\noindent\textbf{Network:}
The network is illustrated in Figure \ref{fig:netin}. All parameters (and parameter-dependent entities such as outputs) are time-dependent, i.e., with the superscript $^{(t)}$. For readability, we omit the superscript in expressions, where all entities share the same time $t$.

The network output stems from a single dense layer followed by a softmax activation.
\begin{definition}[Linear Layer Output] \label{def:out}
The output for a sample $x$ is $o(x)=(o_0(x),o_1(x),...,o_{l-1}(x))$. The scalar $o_y$ is the output for class $y$ defined as $o_y:=o_y(x):=w_y\cdot x= \sum_{i<d} w_{y,i}\cdot x_i$.
\end{definition} %

The softmax function to compute the class probability for a class $y$ given the output $o(x)$ for a sample $(x,y'')$ is: 
\eq{
q(y|x)=\frac{e^{o_y}}{\sum_{y'<l}e^{o_{y'}}} \label{def:q}
}
If all elements $x$ in a set $C$ are identical we write as an abbreviation:
\eq{
q(y|C):= q(y|x \in C)\\
o(C):= o(x \in C)
}

\begin{assumption}[Weight Initialization] \label{ass:initw}
$w^{(0)}_{y,j} \sim N(0,1/d)$ (random), $w^{(0)}_{y,j} = 1/d$ (deterministic)
\end{assumption}
For random initialization, initial noise becomes much smaller in magnitude relative to the changes in weights during training. This diminishes its impact over time.  Thus, we first assume deterministic initialization for simplicity and analyze random initialization in the extended version. The random initialization  follows common initialization schemes~\cite{he15,sch22}.

\noindent\textbf{Cross-entropy Loss and Optimization:}
We employ the cross-entropy loss for a sample $(x,y)$ defined as $L(x,y)=\sum_{y'<l}-\vmathbb{1}_{y=y'}\cdot\log(q(y'|x))$. The indicator variable $\vmathbb{1}_{cond}$ evaluates to $1$ if the condition $cond$ holds and otherwise to 0. We perform gradient descent. The update of weight $w^{(t)}_{y,j}$ at iteration $t$ is 
\begin{align}
w^{(t+1)}_{y,j}&=w^{(t)}_{y,j}-\frac{\lambda}{|D|} \sum_{(x,y') \in D}  \nabla_{w_{y,j}} L(x,y') \label{eq:grad}
\end{align}
A linear classifier can perfectly classify the data (Def. \ref{def:dat}): Weights $w_{y,i}$ for $i \in A_y$ should be very large since the loss tends to 0 as these weights tend towards $\infty$. All other weights should be very small since the loss tends to 0 as these weights tend towards -$\infty$.

\medskip
\noindent\textbf{Reconstruction:}
A linear reconstruction (or decoding) function $g_j$ is defined for each input attribute $j$.
\begin{definition}[Reconstruction Function] \label{def:brecF}
The reconstruction function $g_j$ takes as input the output vector $o(x)$ for a sample $(x,y)$ and learnable parameters are a bias $b$ and a slope $s_{y'}$ for each class $y'$, i.e., each scalar $o_{y'}$ in vector $o(x)$:
\begin{align}
g_{j}(o(x)):=b_j+ \sum_{y'<l} s_{y',j}\cdot o_{y'}
\end{align}
\end{definition}
Note, like the parameters $w^{(t)}$ of the classifier, the parameters $b^{(t)}$ and $s^{(t)}$ of the reconstruction function are not fixed throughout training. They are fit for each iteration $t$. That is, $g_j^{(t)}$ also depends on time. 
The reconstruction loss for an attribute $x_j$ of a sample $(x,y)$ is the squared difference between $x$ and the reconstructed inputs $g_j$. The total reconstruction loss $R^{(t)}$ is just the average of the individual losses. 
\eq{
R^{(t)}:= \sum_{(x,y) \in D} \frac{({x}_j-g^{(t)}_j(o(x)))^2}{|D|} \label{eq:recraw}
}

\ifapp
\subsection{Analysis}
We put all theorems with proofs at the very end of the extended version.
\else
\subsection{Analysis Outline}
Full details are given in the extended version. Here we only provide an outline.
\fi
We formally show the existence of two phases, i.e., that the reconstruction loss decreases and then increases again. To this end, we split the analysis into three stages based on sums of weights $f$, which are linked to iterations $t$. We show that the error decreases in Stage 1, starts to increase again in Stage 2 and stabilizes in Stage 3. Due to symmetry, it suffices to focus on one class $y$ and aim to fit four (weighted) points as shown in Figure \ref{fig:linvis}. In fact, we use an approximate reconstruction function and show that it is not far from the optimal. Our approach relies on heavy calculus, in particular (Taylor) series expansions, and classical O-notation to keep the complexity of the analysis manageable. 

To get a flavor, we derive an expression for the change of weights in each iteration, which is further expanded during the analysis. To this end, we leverage symmetries arising from the data definition and the deterministic initialization. 
We compute derivatives for weights $w_{y,j}$ for a sample $(x,y')$ separately for the class $y=y'$ and $y\neq y'$, and for samples of different strengths, i.e., $(x,y') \in C^y_{v}$ with $v \in \{1,k\}$.
The derivative $\frac{\partial L(x,y)}{\partial w_{y',j}}$ of the loss with respect to network parameters is non-zero if inputs are non-zero, i.e., $x_j\neq 0$, which only holds for $j\in A_{y}$ (Def. \ref{def:dat} and Figure \ref{fig:netin}). For derivative of loss with a softmax holds (see Section 5.10 in~\cite{jur22}):
\begin{small}
\eq{ 
\frac{\partial L(x,y)}{\partial w_{y',j}}=\frac{\partial L(x,y)}{\partial o_{y'}} \cdot \frac{\partial o_{y'}}{\partial w_{{y'},j}}=(q_{y'|x}-\vmathbb{1}_{y'=y})\cdot x_j\\
=\begin{cases}
v(q(y'|x)-\vmathbb{1}_{y=y'}) & \text{if}\ j\in A_{y}, x \in C^{y}_{v} \\ 
 0  & \text{otherwise} \label{eq:dLy}
\end{cases}
}
\end{small}

Next, we rephrase the reconstruction loss (Eq. \ref{eq:recraw}), which sums the error across all samples.
Due to symmetry, it suffices to focus on one attribute $x_j$ with $j \in A_y$ for some $y$. There are also only four different reconstruction errors due to symmetry: 
\begin{enumerate}
    \item  $R_1$ states the error for reconstructing $x_j=1$ with $j \in A_y$ given the output for a sample $x \in C^y_1$
    \item  $R_k$ for $x_j=k$ with $x \in C^y_k$
    \item    $R^{(t)}_{0,C^{y'}_1}$ for $x_j=0$ with $x \in C^{y\neq y'}_1$
    \item $R^{(t)}_{0,C^{y'}_k}$ for $x_j=0$ with $x \in C^{y\neq y'}_k$    
\end{enumerate}

Formally, these errors and the total error $R$ are given by:

\eq{
&R^{(t)}_{v,C}:= \sum_{x \in C, x_j=v} \frac{({x}_j-g^{(t)}_j(o(x)))^2}{|C|} \\
&R^{(t)}_{1}:=R^{(t)}_{1,C^y_1} \teb{\phantom{a}} R^{(t)}_{k}:=R^{(t)}_{k,C^y_k} \teb{ (Abbreviation)}\\
&R^{(t)}_{0,C^{y'}_1}:=R^{(t)}_{1,C^{y'\neq y}_1} \teb{\phantom{a}} R^{(t)}_{0,C^{y'}_k}:=R^{(t)}_{k,C^{y'\neq y}_k} \teb{ (Abbreviation)}\\
&R^{(t)}=\big((1-\frac{1}{2l})\cdot(R^{(t)}_{0,C^{y'}_1}+R^{(t)}_{0,C^{y'}_k})+\frac{1}{l}\cdot(R^{(t)}_{1}+R^{(t)}_{k})\big) \label{def:brecLoss} } 
That is, the total reconstruction error of an attribute $j \in A_y$ is the weighted sum of errors for $x_j=0$, which are most prevalent with a weight of $1-\frac{1}{2l}$, and the reconstruction errors for $x_j=1$ and $x_j=k$, which are less common, i.e., only a fraction $\frac{1}{l}$ of samples has $x_j=1$ or $x_j=k$ for $j \in A_y$. 

We aim to reconstruct input attributes $x_j \in \{0,1,k\}$ from output probabilities $q^{(t)}(y'|x)$ optimally for each time $t$. We choose non-optimal but simpler reconstruction functions depending on $t$ and bound the error due to the approximation. We show that these functions approximate the true reconstruction error asymptotically optimally. As approximation, we fit the reconstruction function $g_j$ using two of the four output values $q$, i.e., the outputs for $x_j=k$ with $j\in A_y$ for $x\in C^{y}_k$ and for $x_j=0$ for $x\in C^{y'\neq y}_k$  (see Figure \ref{fig:linvis} and the four errors listed in Def. \ref{def:brecLoss}). These two points are fitted optimally, i.e., without error, while the other two can have larger errors than the optimal reconstruction function $g^{opt}_j$. The motivation for the selection of the two specific points is as follows. We use $x_j=0$ since most attributes $x_j$ are 0, i.e., out of $d$ attributes only a fraction $2/l$ are non-zero. Therefore, as $d$ is assumed to be large, even small errors in reconstructing attributes $x_j=0$ can yield large overall errors. The choice of $x_j=k$ rather than $x_j=1$ is to have a larger spread between points used to fit the reconstruction function.

For $t>0$, we use:
\eq{
g_j(o)=\frac{k}{q(y|C^y_k)-q(y|C^{y'\neq y}_k)}\cdot (o_j-q(y|C^{y'\neq y}_k))  \label{eq:gj}
}
Put differently, geometrically, we fit a line $g_j$ without error through two points given as tuple (output $q$,value $x_j$ to reconstruct), i.e., $(q(y|C^{y'\neq y}_k),0)$ and $(q(y|C^y_k),k)$ as illustrated in Figure \ref{fig:linvis}. Thus, it follows from the construction of $g_j$ that the reconstruction errors $R_k$ (i.e., for $x_j=k$ and $x \in  C^y_k$) and  $R_{0,C^{y'}_k}$ (i.e., for $x_j=0$ and $x \in C^{y'\neq y}_k)$ are both zero.

For $t=0$, where outputs $q$ are identical for all inputs, we use a constant reconstruction function being a weighted average:
\eq{&g^{(t)}_j(o)=\frac{\frac{n}{l} + \frac{n}{l} k+ n(1-\frac{1}{2l})\cdot 0}{n}= \frac{1+k}{l}}

\medskip

\begin{figure*}
\vspace{-8pt}
  \centering
  \includegraphics[width=\linewidth]{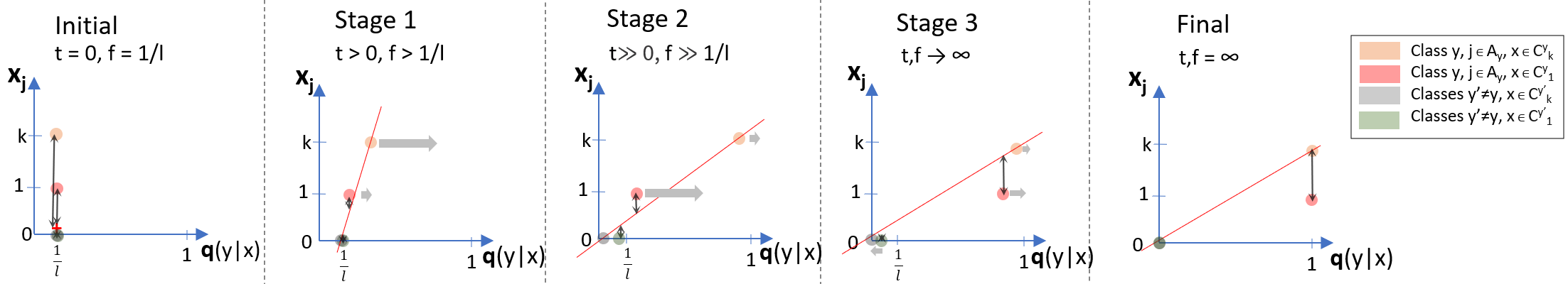}
\vspace{-8pt}
\caption{Stages distinguished for analysis. Stages are separated based on sums of weights ($f$), which are linked to iterations $t$. Each panel shows network outputs $q$ (blue points) versus attributes $x_i$ for samples $(x,y) \in D$ and the (approximate) reconstruction function of $x_i$ from $q$ (red line) over time $t$. The reconstruction  approximates the optimal reconstruction. Vertical bars indicate error bars. Grey horizontal arrows shows how outputs $q$ change compared to the panel on the left. } \label{fig:linvis}
\vspace{-8pt}
\end{figure*}

\medskip

\noindent \textbf{Reconstruction Error}: Next, we bound the error of the reconstruction function. We use different stages suitable for analysis. We first  express them not using time $t$ but in terms of sums of weights. 
Figure \ref{fig:linvis} illustrates the stages: 
Initially, all probabilities $q$ are equal and the reconstruction error is large. During the first stage $q(y|C^y_k)$ increases rapidly for (strong) samples with strong features, reducing the initial reconstruction error. In the second phase, $q(y|C^y_k)$ (for strong samples) changes much less, while $q(y|C^y_1)$ for (weak) samples grows fast and catches up. Within this stage the reconstruction error increases again. In the third stage both $q(y|C^y_1)$ and $q(y|C^y_k)$ have roughly the same magnitude and converge towards 1. The reconstruction error still slowly increases but also converges. The fact that differences between the two diminish, worsens the reconstruction as shown in the rightmost panel in Figure \ref{fig:linvis}, since the best reconstruction is in the middle of both well-separated points rather than being close to each of them.  



We formally derive each reconstruction error for each stage separately. The derivation for each error and stage follows the same schema. We simplify the reconstruction errors $R_1$ and $R_{0,C^{y'\neq y}_1}$ using series expansions. We do not derive a single expression for a reconstruction error for all iterations $t$ but rather split the analysis by looking at intervals of weight values, i.e., their sums $f$. These are then linked to iterations $t$. Considering intervals constraining the sum of weights allows to further simplify expressions. Still, a significant amount of calculus is required to obtain closed-form expressions.
We obtain bounds for each individual reconstruction error (see Eq. \ref{def:brecLoss}), combining all individual reconstruction errors yielding the total error based on our approximate reconstruction function $g_j$: 
\begin{cor}
The reconstruction error for $g_j$ decreases from $R^{(0)}=\Omega(k^2/l)$ to $R^{(t)}=O(k^2/l^2)$ with $t \in [1,\Theta(\frac{2l(c_f-1)}{\lambda d(k+1)})]$ for an arbitrary constant $c_f>1$ and increases to $R^{(t\rightarrow\infty)}=\Omega(k^2/l)$.
\end{cor}

But our proclaimed reconstruction function $g_j$ is not optimal. Thus, finally, we need to bound the deviation of our reconstruction error based on $g_j$ from the optimal reconstruction error. This can be best understood based on our illustration Figure \ref{fig:linvis}, where the optimal reconstruction $g^{opt}$ line might not go through two points exactly as $g_j$ does, but rather be more in the ``middle'' of all points. 

\begin{thm}\label{thm:recErr}  
For the approximation error of the reconstruction function $g_j$ of the optimal function $g^{opt}$ holds $||g_j-g^{opt}_j||<2$ .
\end{thm}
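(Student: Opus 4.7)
The plan is to exploit the fact that $g^{opt}_j$ is the unique minimizer of a small weighted least-squares problem with only four distinct $(q,x_j)$ data pairs, while $g_j$ already fits two of those pairs exactly. The gap between the two reconstruction functions is therefore driven only by the residuals of $g_j$ on the remaining two (weak-feature) points, which the previous stage-by-stage analysis already bounds.

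First I would reduce both functions to affine maps of a single scalar. By the symmetry used throughout the paper, the only component of the output vector $o$ that carries information about an attribute $j \in A_y$ is the class-$y$ output, so both $g_j$ and $g^{opt}_j$ can be written as $b + s\cdot q$ with $q := q(y|x)$. The total loss in Eq.~\ref{def:brecLoss} then becomes a quadratic form in $(b,s)$ supported on four weighted points $(q_i, x_i)$: weight $\frac{1}{l}$ for the two same-class points with $x_i \in \{1,k\}$, and $1-\frac{1}{2l}$ for the two off-class points with $x_i = 0$. By definition $g^{opt}_j$ is the minimizer of this quadratic, so it is characterized by the $2\times 2$ normal equations.

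Next I would parameterize the deviation $\Delta(q) := g^{opt}_j(q) - g_j(q)$. Since $g^{opt}_j$ also minimizes the weighted squared error of the residuals $e_i := x_i - g_j(q_i)$, the coefficients of $\Delta$ are exactly the weighted least-squares fit to the pairs $(q_i, e_i)$. Two of these residuals vanish by construction of $g_j$ in Eq.~\ref{eq:gj}; for the remaining weak-feature points $x_i \in \{0,1\}$ while $g_j(q_i) \in [0,k]$, giving an explicit bound on $|e_i|$ that I would sharpen using the stage-dependent bounds already derived for $R_1$ and $R_{0,C^{y'}_1}$. Inverting the normal-equation matrix then converts this residual bound into a bound on $\Delta$. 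Its determinant is proportional to the squared horizontal separation between the anchor outputs $q(y|C^y_k)$ and $q(y|C^{y'\neq y}_k)$, which is exactly the denominator in Eq.~\ref{eq:gj}, so the inverse is well-conditioned precisely on the range where $g_j$ is well-defined.

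The main obstacle will be the weight bookkeeping: the off-class residuals carry the large prefactor $1-\frac{1}{2l}$, so a naive bound on $\Delta$ would grow with $l$, and tightness of the final constant relies on the $\frac{1}{l}$ versus $1-\frac{1}{2l}$ weights combining with the horizontal separation in the normal equations in just the right way. Once this cancellation is extracted and $\Delta$ is evaluated over the relevant range $q \in [0,1]$, the uniform bound $||g_j - g^{opt}_j|| < 2$ follows.
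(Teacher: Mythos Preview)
Your route differs from the paper's. The paper never computes $g^{opt}_j$ via normal equations; instead it parameterizes every alternative affine decoder by perturbing the heights $h$ at the two anchor points $(q(y|C^y_k),k)$ and $(q(y|C^{y'\neq y}_k),0)$, and argues case by case that any such perturbation drives one of the currently-zero terms $R_k$ or $R_{0,C^{y'}_k}$ up to the order of the dominant term before that dominant term can shrink by more than a constant factor (e.g.\ moving $h$ off $k$ by more than $1/4$ already forces $R_k\geq 1/16$, while moving it by less than $1/4$ changes the slope by at most a bounded factor and leaves $R_1$ asymptotically unchanged). The conclusion is that $R(g_j)$ and $R(g^{opt}_j)$ have the same asymptotic order in $l$; the paper never bounds the sup-norm of the function difference. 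Your direct least-squares computation is a legitimate, arguably cleaner, alternative---provided it is aimed at the right quantity.

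That is the gap in your plan. You read $\|g_j-g^{opt}_j\|<2$ as $\sup_{q\in[0,1]}|g_j(q)-g^{opt}_j(q)|<2$, and this is false in general: at $t\to\infty$ both same-class outputs collapse to $q=1$ and the off-class outputs to $q=0$, so the weighted least-squares line through your four points has slope $(k+1)/2$ while $g_j$ has slope $k$, giving $\sup_q|\Delta(q)|=(k-1)/2$, which exceeds $2$ whenever $k>5$. What the paper actually establishes (made explicit in the proof of Theorem~\ref{le:recBInit}) is that the total reconstruction error under $g_j$ is within a constant factor---exactly $2$ at $t\to\infty$---of the optimum. Your residual/normal-equation machinery can deliver that: compute $R(g^{opt}_j)$ from the fitted $\Delta$ and compare it to $R(g_j)$ term by term, rather than bounding $\Delta$ pointwise. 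A secondary point: your reduction to the single scalar $q(y|x)$ is not ``by symmetry'' in the sense of being without loss of generality---the other output coordinates do carry information (they encode the strength $v$ of off-class samples)---but the paper's $g^{opt}_j$ is by construction an affine function of $q(y|x)$ alone, so the reduction is fine once stated as a modelling choice rather than a derived fact.
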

As proof strategy, we consider the individual terms $R_i$ that sum to the total reconstruction error $R$. Each $R_i$ represents the error of one set of points to reconstruct. We show that changing the points for reconstruction allows reducing the most dominant error $R_i$ only up to a constant factor before another error $R_j$ becomes dominant. Thus, asymptotically our approximation is optimal.

\section{Improved Transfer Learning} \label{sec:tra}
Our work shows that while the discriminative performance of networks increases, their ability to accurately describe the data (as measured by reconstruction ability) decreases. Thus, if a classifier is used for fine-tuning on a dataset $D'$ or as feature extractor that should describe the data well, it might be better to stop training of the classifier $C$ on the (original, large) dataset $D$ before the cross-entropy loss is minimal, i.e., before the classifier performance is maximized on $D$. When exactly to stop depends on the similarity of the datasets $D$ and $D'$, i.e., it might not be necessarily when the reconstruction loss is minimal. \\
To assess this hypothesis, we proceed analogously as for reconstruction (see Section \ref{sec:dat} for details). We train a classifier $C$, i.e., VGG or Resnet, on a dataset $D$. We then freeze the classifier $C$ and train a linear classifier on a dataset $D'$ taking as input the output of a layer (last (-1), second last(-2), etc.) of the classifier $C$. More precisely, for dataset $D$ being FashionMNIST, we use $D'$ being MNIST and for $D$ being CIFAR-10 we use CIFAR-100 as $D'$. We also assess the scenarios with $D$ and $D'$ switched. Furthermore, we also assess to predict, which color channel, i.e., `red',`green',`blue', has largest average value used across all pixels. Figures \ref{fig:trans} and \ref{fig:trans2} show two exemplary outputs. It can be seen that accuracy on the downstream have a clear maximum, which tends to be roughly after the same iterations when the reconstruction loss is minimal. Thus, we see that some maintaining more ``information'' on the original dataset (as observed due to lower reconstruction loss) is helpful for downstream task, since these tasks might exactly rely on this information.

\begin{figure}[!htb]
\centering{\centerline{\includegraphics[width=0.45\textwidth]{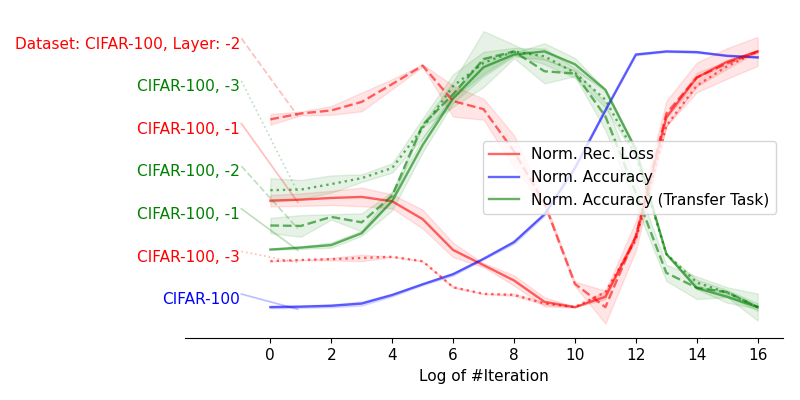}}} 
\vspace{-6pt}
\caption{Transfer Learning of a Resnet from Cifar-100 to Cifar-10. Lines are normalized.}\label{fig:trans}  
\vspace{-6pt}
\end{figure}
\begin{figure}[!htb]
\centering{\centerline{\includegraphics[width=0.45\textwidth]{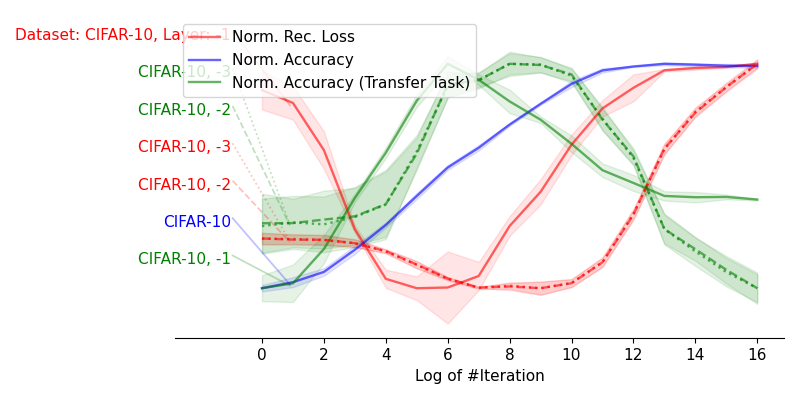}}} 
\vspace{-6pt}
\caption{Transfer Learning of VGG from predict Cifar-10 classes to predict the most dominant color channel. Lines are normalized.}\label{fig:trans2}
\vspace{-6pt}
\end{figure}

Figure \ref{fig:trans} shows the normalized performance of the original classifier, the reconstruction loss and the accuracy for the fine-tuning task.

\section{Discussion} \label{sec:gen}
We have shown empirically and analytically the existence of different phases for the reconstruction error. 
Our analysis is fairly different to prior works and, as such, interesting in its own right. That is, we aim to focus on approximating the governing optimization equations, which allows to capture the reconstruction dynamics over time. An essential part of our work are the underlying assumptions on which the theoretical analysis builds. While we verified our assumptions with empirical analysis (Figure \ref{fig:indistr}), any assumption is a limitation. Furthermore, our analysis relies on shallow networks, i.e., we essentially analyze the last dense layer and the softmax layer, which already requires multiple pages of calculus. Approaches as described in \cite{mai19} could help in extending it to deeper networks, but it is unclear, whether the analysis would reamin tractable. On the positive side, we derive concise bounds including multiple network parameters and covering also the number of iterations. This is often foregone in other works that focus on learning dynamics, e.g., they might require that width of layers tend to infinity to approximate layers with distribution\cite{jaco18,hua20}. 

While we have shown empirically  that the three phases of the reconstruction loss can occur for multiple datasets, networks, and layers, they are most notable for the last layers of a network. Features in the last layers are the least shared among classes, since the softmax layer paired with the cross-entropy loss of the classifier enforces discriminative features, i.e., features that strongly correlate with one or few classes only. That is, during training outputs of the last linear layer become more and more discriminative, forgoing information on the sample. The softmax also reduces the nuanced differences between classes. Thus, information content on the appearance is lost or difficult to retrieve in upper layers. This effect has also been observed in \cite{saxe19} using mutual information through simulations rather than mathematical proofs. 
This behavior can also be observed for other loss functions like the hinge loss as shown empirically in the extended version.
We anticipate that these phases are also prevalent in other architectures like RNNs and transformers, since the key assumptions such as the existence of weak samples (weak feature strenghts) and strong samples is not architecture dependent. 

\section{Related Work}
The information bottleneck~\cite{tish00} was used for analysis of deep learning ~\cite{tish15,tish17}. It suggests a principled way to ``find a maximally compressed mapping of the input variable that preserves as much as possible the information on the output variable''~\cite{tish17}. To this end, layers $h_i$ of a network are viewed as a Markov chain for which holds given $i\geq j$ using the data processing inequality: $I(Y;X) \geq I(Y;h_j) \geq I(Y;h_i) \geq I(Y;\hat{Y})$
Learning is seen as the process that maximizes $I(Y;h_i)$ while minimizing $I(h_{i-1};h_i)$, where the latter can be interpreted as the minimal description length of the layer. In our view, we agree on the former (at least on a qualitative level), but we do not see minimizing the description length as a goal of learning. In our perspective, it can be a consequence of the first objective, i.e. to discriminate among classes, and existing learning algorithms, i.e., gradient descent. From a generalization perspective, it seems preferable to cling onto even the smallest bit of information of the input $X$, even if it is highly redundant and, as long as it \emph{could} be useful for classification. This statement is also supported by~\cite{saxe19} who show that compression is not necessary for generalization behavior and that fitting and compression happen in parallel rather than sequentially. The review~\cite{gei21} also concludes that the absence of compression is more likely to hold. In contrast to our work, their analysis is within the IB framework. Still, it remedies an assumption of~\citet{tish17}. Namely,~\citet{saxe19} investigates different non-linearities, i.e., the more common ReLU activations rather than sigmoid activations. ~\citet{lor21} argues that compression is only observed consistently in the output layer. The IB framework has also been used to show that neural networks must lose information~\cite{liu20} irrespective of the data it is trained on. From our perspective, the alleged information loss measured in terms of the reconstruction capability is a fact though it can be small. In particular, it is evident that at least initially reconstruction is almost perfect for wide networks following theory on random projections, i.e., the Johnson-Lindenstrauss Lemma~\cite{joh84} proved that random projections allow embedding $n$ points into an $O(\log n/\epsilon^2)$ dimensional space while preserving distances within a factor of $1\pm \epsilon$. This bound is also tight according to~\citet{lar17}, and it can be extended to cases where we apply non-linearities, i.e., $ReLU$. \\
\citet{wang21pac} used an information measure based on weights to show empirically that fitting and compression phase exist. \citet{ach18} suggest two phases based on empirical analysis. They call the second phase ``forgetting''.

~\citet{gei21} also discussed the idea of geometric compression based on prior works on IB analysis. However, the literature was inconclusive according to~\citet{gei21} on whether compression occurs due to scaling or clustering. Our analysis is inherently geometry (rather than information) focused.\\ \citet{ala17} used a single linear layer to analyze networks empirically. However, their focus was to understand the suitability of intermediate layers for classification rather than learning dynamics. As the IB has also been applied to other types of tasks, i.e., autoencoding~\cite{tap20}, we believe that our approach might also be extended to such tasks.
The idea to reconstruct inputs from layer activations has been outlined in the context of explainability~\cite{sch21cla,sch22exp}. The idea is to compare reconstructions using a decoder with original inputs to assess what information (or concepts) are ``maintained'' in a model. Our work also touches upon linear decoders that have been studied extensively, e.g.,~\cite{kun19}. It also estimates reconstruction errors from noisy inputs $x_i+\epsilon$~\cite{car09}.

Dynamics of learning have been studied using Neural Tangent Kernel (NTK) \cite{jaco18,hua20}. NTK focuses on infinite-width networks by showing that they are equal to Gaussian processes. For example, it has been shown that fully connected networks of a particular size show linear rate convergence towards zero training error (Corollary 2.5 in \cite{hua20}). However, they do not discuss the relationship between reconstruction and classification, which is the focus of our work.

The impact of loss functions on transfer learning was studied in \cite{kor21}. Aligned with our work it was found that models performing better on the pre-training task can perform worse on downstream tasks.
Diversity of features\cite{nay22} has also been shown to lead to better downstream performance. This is also aligned with our work, since more diversity is also likely meaning that more information is captured, i.e., reconstructions are better. In contrast to these works, we proclaim that it is essential how much information on the input is maintained.  

\section{Conclusions}
Theory of deep learning is limited. This work focused on a very pressing problem, i.e., understanding the learning process. To this end, it rigorously analyzed a simple dataset modeling empirical observations of common datasets and classifiers. Our results highlight the existence of multiple phases for the reconstruction loss. This insight can be used to improve transfer-learning using early stopping of pre-training. 

\bibliography{refs}

\ifapp
\section{Appendix}
We provide theoretical analysis first and then also provide more empirical analysis.

\subsection{Sums of weights and symmetries} \label{sec:sym}

Due to Def. \ref{def:dat} we have $x_i=v$ for $x\in C^y_v$ with $v \in \{0,k\}$ for $i \in A_{y}$ and $x_i=0$ otherwise, yielding using Eq \ref{eq:oy}:
\eq{
o_{y'}(C^y_v)= \sum_{i<d} w_{y',i}\cdot x_i=v\cdot \sum_{i \in A_y} w_{y',i} = v\cdot S_{y,y'}  \label{eq:oy}
}
where we defined the sum of weights $S_{y,y'}$ for input attributes $A_y$ leading to outputs of class $y'$ as
\eq{
S_{y,y'}:=\sum_{i \in A_{y}} w_{y',i} \label{eq:sums0}
}
Weights $w_{y',i}=w_{y',j}$ are identical for $i,j \in A_y$:
Initially, all $w^{(0)}_{y',i}$ are equal, i.e. $1/l$ (Def \ref{ass:initw}). The update is the same for all $j \in A_y$ due to Eq. \ref{eq:dLy}. Therefore, $w^{(t)}_{y',i}=w^{(t)}_{y',j}$ for $i,j \in A_y$. 
Thus, for any $t$ holds:
\eq{
S^{(t)}_{y,y'}=\sum_{i \in A_{y}} w^{(t)}_{y',i}=|A_{y}| w^{(t)}_{y',i \in A_y}=\frac{d}{l} w^{(t)}_{y',i \in A_y} \label{eq:sums}
}
We also use the change $\partial S^{(t)}_{y,y'}$ of $S^{(t)}_{y,y'}$ in iteration $t$, i.e.:
\eq{
&S^{(t+1)}_{y,y'}=S^{(t)}_{y,y'}-\partial S^{(t)}_{y,y'}\\
&\partial S^{(t)}_{y,y'}:= \frac{\lambda}{|D|} \frac{d}{l} \sum_{x,y \in D| i \in A_y} \frac{\partial L(x,y)}{\partial w_{y',i \in A_y}}\\
&=\frac{\lambda}{n} \frac{d}{l} \frac{n}{2l} \sum_{v \in \{1,k\}}\Bigl( v\cdot(q^{(t)}(y'|C^y_v)-\vmathbb{1}_{y=y'})\Bigr) \\ 
&\teb{ using Eq. \ref{eq:dLy} and symmetries (see \ifapp Eq. \ref{eq:dlsum}  in \fi extended version)}\\
&=\frac{\lambda d}{2l^2}\cdot \sum_{v \in \{1,k\}}\Bigl( v\cdot(q^{(t)}(y'|C^y_v)-\vmathbb{1}_{y=y'})\Bigr) \label{eq:dSy} 
}
Sums of weights are identical across classes $S_{y,y'}=S_{y'',y'''}$ given that either ($y=y'$ and $y''=y'''$) or ($y\neq y'$ and $y''\neq y'''$). This is easy to see, since, initially, all weights are equal (Def \ref{ass:initw}) and the samples of each class are equal aside from symmetry. 


Formally, we aim to show that sums of weights are identical across classes $S_{y,y'}=S_{y'',y'''}$ given that either ($y=y'$ and $y''=y'''$) or ($y\neq y'$ and $y''\neq y'''$). On a high level this follows since, initially, all weights are equal and the samples of each class are equal aside from symmetry.

Furthermore, not only are weights within $j \in A_y$ are identical, but also sum of weights of different classes for an iteration. By definition samples of all classes are symmetric, classes are balanced and also all weights are identically initialized. Therefore, the change of a weight $w_{y',j}$, when we sum across all samples $(x,y) \in D$ becomes based on Eq. \ref{eq:dLy}  for a fixed $j$:
\eq{
&\sum_{x,y \in D}  \frac{\partial L(x,y)}{\partial w_{y',j}}\\
&=\sum_{x,y \in C^y|j \in A_y} \frac{\partial L(x,y)}{\partial w_{y',j}}+\sum_{x,y \in C^y|j \nin A_y} \frac{\partial L(x,y)}{\partial w_{y',j}}\\
&\teb{\phantom{a}}+\sum_{x,y \in D\setminus C^y} \frac{\partial L(x,y)}{\partial w_{y',j}}\\
&=\sum_{x,y \in C^y|j \in A_y} \frac{\partial L(x,y)}{\partial w_{y',j}} \teb{other terms are 0 (Eq \ref{eq:dLy})}\\
&=\sum_{v \in \{1,k\}} \sum_{x \in C^{y}_v|j \in A_y} \frac{\partial L(x,y)}{\partial w_{y',j}}\\
&=\sum_{v \in \{1,k\}} \sum_{x \in C^{y}_v|j \in A_y} \frac{\partial L(x,y)}{\partial w_{y',j}}\\
&=\sum_{v \in \{1,k\}} \sum_{x \in C^{y}_v|j \in A_y}  \frac{\partial L(x,y)}{do_{y'}} \cdot \frac{do_{y'}}{\partial w_{{y'},j}} \\
&=\sum_{v \in \{1,k\}} \sum_{x \in C^{y}_v|j \in A_y}  (q_{y'|x}-\vmathbb{1}_{y'=y})\cdot x_j\\
&=\sum_{v \in \{1,k\}} \sum_{x \in C^{y}_v}  (q_{y'|C^{y}_v}-\vmathbb{1}_{y'=y})\cdot v\\
&=\sum_{v \in \{1,k\}} |C^{y'}_v|  (q_{y'|C^{y}_v}-\vmathbb{1}_{y'=y})\cdot v\\
&=\sum_{v \in \{1,k\}} |\frac{n}{2l}|  (q_{y'|C^{y}_v}-\vmathbb{1}_{y'=y})\cdot v \label{eq:dlsum}
}

Thus, we need to show that for every two pair of classes $(y,y')$ and $(y'',y''')$ with (a) ($y=y'$ and $y''=y'''$) and (b) ($y\neq y'$ and $y''\neq y'''$), we have $w_{y',j \in A_y}=w_{y''',j \in A_{y''}}$ and $q(y'|C^{y}_v)=q(y'''|C^{y''}_v)$.

The proof is by induction. For the base case of the induction, i.e., the initial condition $t=0$, this follows since all weights are initialized to the same value $1/d$ and using Eq \ref{eq:soft}, i.e., for Softmax we have:
\eq{
q(y'|C^{y}_v)&=\frac{e^{o_{y'(C^y_v)}}}{\sum_{y''<l}e^{o_{y''}(C^y_v)}} \teb{ using Eq. \ref{def:q} } \\
&=\frac{e^{v\cdot S_{y,y'}}}{\sum_{y''<l}e^{v\cdot S_{y,y''}}} \teb{ using Eq. \ref{eq:oy}} \\
&=\frac{e^{v\cdot \frac{d}{l} w_{y',i \in A_y}}}{\sum_{y''<l}e^{v\frac{d}{l} w_{y'',i \in A_y}}} \teb{ using Eq. \ref{eq:sums} }  \label{eq:soft}\\
}

For the induction step from $t$ to $t+1$, we consider the update of a weight $w_{y',i}$ based on Eq. \ref{eq:grad} and the assumption  which holds for $t=0$ (Eq. \ref{eq:oy}), i.e.,
\eq{&
q^{(t)}(y|C^y_v)=q^{(t)}(y''|C^{y''}_v) \label{eq:symq}\\
& q^{(t)}(y'|C^y_v)=q^{(t)}(y'''|C^{y''}_v) \teb{ for $y\neq y'$ and $y''\neq y'''$} 
} 
Thus, for the gradient we have using Eq \ref{eq:dlsum}:
\eq{
&\sum_{v \in \{1,k\}} \sum_{x \in C^{y}_v} \frac{\partial L(x,y)}{\partial w_{y',j \in A_y}}\\
&=\sum_{v \in \{1,k\}} |\frac{n}{2l}|  (q_{y'|C^{y}_v}-\vmathbb{1}_{y'=y})\cdot v\\
&=\sum_{v \in \{1,k\}} |\frac{n}{2l}|  (q_{y'''|C^{y''}_v}-\vmathbb{1}_{y'''=y''})\cdot v \\
& \teb{ Using Induction assumption Eq \ref{eq:symq}}\\
&=\sum_{v \in \{1,k\}} \sum_{x \in C^{y'''}_v} \frac{\partial L(x,y''')}{\partial w_{y'',j \in A_{y'''}}}
}
Using Equation \ref{eq:grad} we get:
\eq{
w^{(t+1)}_{y,j}&=w^{(t)}_{y,j}-\frac{\lambda}{|D|} \sum_{(x,y') \in D}  \nabla_{w_{y,j}} L(x,y') \\
&=w^{(t)}_{y,j}-\frac{\lambda}{|D|} \sum_{v \in \{1,k\}} \sum_{x \in C^{y}_v} \frac{\partial L(x,y)}{\partial w_{y',j \in A_y}} \\
&=w^{(t)}_{y'',j}-\frac{\lambda}{|D|} \sum_{v \in \{1,k\}} \sum_{x \in C^{y}_v} \frac{\partial L(x,y)}{\partial w_{y'',j \in A_y}} \\
&=w^{(t+1)}_{y'',j}
}

\subsection{Reconstruction error for fitted points} \label{sec:zerorec}
For reconstructing attribute $j$, we fit a line $g_j$ without error through two points $(q(y|C^{y'\neq y}_k),0)$ and $(q(y|C^y_k),k)$ as illustrated in Figure \ref{fig:linvis}. Thus, it follows from the definition of $g_j$ that the reconstruction errors $R_k$ (i.e., for $x_j=k$ and $x \in  C^y_k$) and  $R_{0,C^{y'}_k}$ (i.e., for $x_j=0$ and $x \in C^{y'\neq y}_k)$ are both zero. We show this formally below.

The reconstruction error for $x_j=k$, i.e., $R_{k}$  is
\eq{
&R_{k}=\frac{1}{l}\cdot\bigl(k-  \frac{k}{q(y|C^y_k)-q(y|C^{y'\neq y}_k}) \cdot (q(y|C^y_k)-q(y|C^{y'\neq y}_k))\bigr)^2\\
&=\frac{1}{l}\cdot(k-  k)^2=0
}

Next, we compute the reconstruction error $R_{0,C^{y'}_k}$ for $x_i=0$. We reconstruct $x_i=0$ perfectly for $x \in C^{y'\neq y}_k$:
\eq{
&R_{0,C^{y'}_k}=(1-2/l)\cdot\bigl(0-  \frac{k}{q(y|C^{y'\neq y}_k)-q(y|C^{y'\neq y}_k)} \\
&\teb{ } \cdot (q(y|C^{y'\neq y}_k)-q(y|C^{y'\neq y}_k))\bigr)^2\\
&=(1-2/l)\cdot\bigl(0-  \frac{k}{q(y|C^y_k)-q(y|C^{y'\neq y}_k} \cdot 0\bigr)^2=0
}

\subsection{Series expansions} \label{sec:series}
\noindent \textbf{Prerequisites}: As a prerequisite, we state series expansions commonly used in some of our proofs. These expansions are crucial to simplify expressions with exponential functions arising from the softmax function. We only use first-order expansions, which suffice to obtain reconstruction errors of orders $O(1/l)$. Using higher orders would allow us to get an approximation error of order $O(1/l^j)$ for some $j>1$ at the price of more complex expressions. The Taylor series of the exponential function is:

\eq{
&e^{x}=\sum_{i=0}^{\infty} x^i/i!=1+x+x^2/2+x^3/6+O(x^4/(4!)) \label{eq:ser}
}
We derive series based on the assumption that $f$ and $u$ are of the same order, but both are small, i.e., $f \in O(1/l)$ and $u \in O(1/l)$, and $k \in O(1)$. So that we can neglect higher order terms in both $f$ and $u$. 
\eq{
&e^{fk}-e^{ku}=1+fk +O((fk)^2\\
&\teb{ }-(1+ku+O((ku)^2)\\
&= k(f-u) +O_0 \label{eq:efkeku}\\
& \teb{ with } O_0:=O(k^2(f-u)^2) \label{eq:O0}
}

\eq{
&e^{f+ku}- e^{(k+1)u}=1+(f+ku)+O((f+ku)^2)\\
&\teb{\phantom{ab} }-(1+(k+1)u+O(((k+1)u)^2))\\
&=(f-u)+O_1 \label{eq:efkueku}\\
& \teb{ with } O_1:=O((f+ku)^2-((k+1)u)^2)\\
& \teb{ \phantom{abcd  } }=O((f-u)(f+2ku+u)) \label{eq:O1}
}
\eq{
&e^{fk+u}-e^{f+ku}\\
&=1+(fk+u)+O((fk+u)^2)\\
&-(1+(f+ku)+O((f+ku)^2)) \\
&=(f(k-1)-(k-1)u)\\
&+ O((fk+u)^2)-O((f+ku)^2)) \\
&=(k-1)(f-u)+O_2 \label{eq:efkuefku}\\
& \teb{ with } O_2:=O((k^2-1)(f-u)(f+u))  \label{eq:O2}
}
We also derive series expansions for the same three terms under the premise that $u \in O(f/l)$, i.e., $u$ is much smaller than $f$, and $f\in[1/l,1]$. This essentially allows us to discard terms with $u$ to obtain an error bound of $O(f/l)$. The results follow from direct application of Eq \ref{eq:ser}.
\eq{
&e^{fk}-e^{ku}=e^{fk}-1+O(f/l) \label{eq:fkku2}\\
&e^{f+ku}- e^{(k+1)u}=e^{ku}(e^{f}- e^{u})\\
&= (1+ O(f/l)) (e^{f}- 1+O(f/l))\\
&= e^{f}- 1+O(f/l) \label{eq:fkuku2}\\
&e^{fk+u}-e^{f+ku}=e^u\cdot e^{fk}-e^{ku}\cdot e^{f} \\
&=(1+O(f/l)\cdot e^{fk}-(1+O(f/l)\cdot e^{f} \\
&=e^{fk}-e^{f}+O(f/l)\label{eq:fkufku}
}
\eq{
&\frac{1}{e^f+le^{fu}} \\
&=\frac{1}{O(1)+le^{fu}}\\
&=\frac{1}{O(1)+l(1+O(1/l))}\\
&=\frac{1}{l+O(1)}\label{eq:fraffu}
}

Finally, we also as an expansion that holds as long as $f\in[1/l,1]$.
\eq{
&e^{fu}=1+O(f/l) \label{eq:efu}\\
}

\subsection{Main Theorems with Proofs} \label{sec:qbproofs}
We replicate the theorems from the main text (but keep the theorem counter going).

Symmetry allows us to focus on a class $y$ without loss of generality.  It is also easy to see that all output probabilities $q$ are the same for all classes and samples initially.

\begin{thm} \label{eq:qinit}
Initially, $q^{(0)}(y'|x)=1/l$ for any sample $(x,y) \in D$ and any class $y'$.
\end{thm}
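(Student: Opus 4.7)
The plan is to observe that deterministic initialization makes the pre-softmax logits identical across all classes, which forces the softmax to return the uniform distribution $1/l$. This reduces to a one-line computation once the right objects are unpacked.

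First I would invoke Assumption \ref{ass:initw} in the deterministic case to write $w^{(0)}_{y',j}=1/d$ for every class index $y'$ and every attribute index $j$. Substituting into Definition \ref{def:out} then yields
\begin{align}
o^{(0)}_{y'}(x) \;=\; \sum_{j<d} w^{(0)}_{y',j}\,x_j \;=\; \frac{1}{d}\sum_{j<d} x_j,
\end{align}
which is manifestly independent of $y'$. In particular, for any sample $(x,y)\in D$ (whether $x\in C^{y}_1$ or $x\in C^{y}_k$), the scalar $c(x):=(1/d)\sum_j x_j$ is the same across all classes $y'$.

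Next I would plug this into the softmax expression (Eq.\ \ref{def:q}): since every $o^{(0)}_{y'}(x)$ equals $c(x)$,
\begin{align}
q^{(0)}(y'\mid x) \;=\; \frac{e^{c(x)}}{\sum_{y''<l} e^{c(x)}} \;=\; \frac{e^{c(x)}}{l\cdot e^{c(x)}} \;=\; \frac{1}{l},
\end{align}
which is the claimed identity and holds uniformly in $y'$ and in $(x,y)$.

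There is no real obstacle here; the only subtlety is conceptual, namely that the statement only requires deterministic initialization (the explicitly stated case), since under the random initialization of Assumption \ref{ass:initw} the equality would hold only in expectation. I would end by noting that this also immediately gives $o^{(0)}_{y'}(C^{y}_{v})=v\cdot S^{(0)}_{y,y'}=v\cdot d/l\cdot (1/d)=v/l$ via Eq.\ \ref{eq:oy} and Eq.\ \ref{eq:sums}, confirming consistency with the symmetry relations established in Section \ref{sec:sym} and providing the base case for the inductive symmetry argument used later.
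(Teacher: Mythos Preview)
Your proposal is correct and follows essentially the same route as the paper: both arguments use the deterministic initialization $w^{(0)}_{y',j}=1/d$ to show that all logits $o^{(0)}_{y'}(x)$ are equal across $y'$, whence the softmax collapses to $1/l$. The only cosmetic difference is that the paper routes the computation through the sums $S^{(0)}_{y,y'}=d/l\cdot 1/d=1/l$ (via Eq.~\ref{eq:oy} and Eq.~\ref{eq:sums}) rather than directly through Definition~\ref{def:out}, which you already reconcile in your final remark.
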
 
\begin{proof}
\eq{
&q^{(0)}(y''|C^y_v)=\frac{e^{o^{(0)}_{y''}}}{\sum_{y'<l}e^{o^{(0)}_{y'}}}  \teb{using Eq. \ref{def:q}}\\
&= \frac{e^{v\cdot S^{(0)}_{y,y''}}}{\sum_{y'<l}e^{v\cdot S^{(0)}_{y,y'}}} \teb{using Eq. \ref{eq:oy} }\\
&= \frac{e^{v/l}}{l\cdot e^{v/l}} =\frac{1}{l} \\
&\teb{using $S^{(0)}_{y,y'''}=d/l\cdot 1/d=1/l$ (Eq. \ref{eq:sums} and $w^{(0)}_{i,j}=1/d$  Ass.\ref{ass:initw})}
}
\end{proof}

After these preparations, we are ready to state the first theorem relating the magnitude of weights, i.e. sums of weights, and iterations $t$.

\begin{thm} \label{thm:boundS}
It holds that $S^{(t-1)}_{y,y}\leq f$ and $S^{(t)}_{y,y}\geq f$  for $f\in[1/l,\log(l)]$ and $S^{(t)}_{y,y'\neq y} = \frac{1-O(f-1/l)}{l}$ for $t \in  \Theta(\frac{2l^2(f-1/l)}{\lambda d(k+1)})$.
\end{thm}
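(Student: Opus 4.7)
The plan is to reduce the multi-dimensional weight dynamics to a two-variable recursion using the symmetry of Section~\ref{sec:sym}. Set $f_t := S^{(t)}_{y,y}$ and $u_t := S^{(t)}_{y,y'}$ for any $y'\neq y$; by that symmetry argument neither quantity depends on the particular class chosen, and Theorem~\ref{eq:qinit} combined with Eq.~\ref{eq:sums} gives the initial conditions $f_0 = u_0 = 1/l$. Because $f_{t+1}-f_t = -\partial S^{(t)}_{y,y}$ is non-negative (all $q$ are $\leq 1$), $f_t$ is monotonically non-decreasing, so the first iteration at which $f_t \geq f$ is well-defined. The strategy is to show that $f_t$ grows by $\Theta(c)$ per iteration with $c := \lambda d (k+1)/(2l^2)$, and $u_t$ shrinks by $\Theta(c/l)$, then invert the resulting arithmetic progression to read off the claimed $t$.

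The per-step change is computed from Eq.~\ref{eq:dSy} after substituting the softmax expression in Eq.~\ref{eq:soft}:
\[ q^{(t)}(y'|C^y_v) = \frac{e^{v\,S^{(t)}_{y,y'}}}{e^{v f_t}+(l-1)e^{v u_t}}. \]
Throughout the stage $f_t\in[1/l,\log l]$, with $u_t$ pinned near $1/l$, the probability $q^{(t)}(y|C^y_1)$ stays bounded above by a constant strictly less than one, so $1\cdot(q-1)$ remains $-\Theta(1)$. The $v=k$ term can shrink in magnitude once $e^{kf_t}$ approaches $l$, but $k$ is treated as a constant, so the overall $\partial S^{(t)}_{y,y}$ remains $-\Theta(c)$. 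For $y'\neq y$ the numerator is $e^{v u_t}$, which together with the denominator gives $q^{(t)}(y'|C^y_v) = (1+o(1))/l$, hence $\partial S^{(t)}_{y,y'\neq y} = \Theta(c/l)$.

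Summing these nearly-constant increments yields $f_t = 1/l + \Theta(tc)$ and $u_t = 1/l - \Theta(tc/l)$. Solving $\Theta(tc) = f - 1/l$ gives $t = \Theta((f-1/l)/c) = \Theta(2l^2(f-1/l)/(\lambda d(k+1)))$, matching the first claim. Substituting this $t$ back into the expression for $u_t$ produces $u_t = 1/l - \Theta((f-1/l)/l) = (1-O(f-1/l))/l$, matching the second claim.

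The main obstacle will be keeping the hidden constants of the two $\Theta$'s compatible as $f_t$ traverses the full interval $[1/l,\log l]$: the rate of $\partial f_t$ is exactly $-c(1+o(1))$ only while $kf_t\ll\log l$, and weakens by a constant factor near the upper end where the $v=k$ term saturates towards $q=1$. I would handle this by splitting the stage at $f=\log l / k$, applying the series expansions of Section~\ref{sec:series} (notably Eq.~\ref{eq:efu} and Eq.~\ref{eq:fraffu}) in the lower sub-range where both $v$-terms contribute to first order, and arguing separately in the upper sub-range that the surviving $v=1$ term alone still produces rate $\Theta(\lambda d / l^2) = \Theta(c)$ because $k$ is constant. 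The cumulative deviation this introduces into $u_t$ is a constant factor in $(f-1/l)/l$ and is therefore absorbed by the $O(f-1/l)$ in the second claim.
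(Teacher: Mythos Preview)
Your proposal is correct and follows essentially the same route as the paper: reduce to the two scalars $f_t,u_t$ via the symmetry of Section~\ref{sec:sym}, sandwich the per-step increment $\partial S^{(t)}_{y,y}$ between constant multiples of $c=\lambda d(k+1)/(2l^2)$, sum, and invert. The paper obtains the sandwich more crudely than you propose---it simply plugs the trivial bounds $q\ge 0$ and $q(y|C^y_k)\le 1$ (and the monotone bound $q(y'\neq y|C^y_v)\le 1/l$) into Eq.~\ref{eq:dSy}, together with a bootstrap verification that $q(y|C^y_1)\le 2/3$ whenever $f\le\log l$, to get explicit constants $c_0,c_1$; so your proposed split at $f=\log l/k$ and the appeal to the series expansions of Section~\ref{sec:series} are more work than needed, but not wrong.
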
 

\begin{proof} 
Based on the definition of an asymptotically tight bound $\Theta$, we need to show the existence of constants $c_0$ and $c_1$ so that the claims about $S^{(t)}_{y,y}$ and $S^{(t)}_{y,y'\neq y}$ hold for some $t\in [c_0 t^*,c_1 t^*]$ with $t^*=\frac{2l^2(f-1/l)}{\lambda d(k+1)}$. To this end, we first lower and upper bound the change $dS^{(t)}_{y,y}$,  $dS^{(t)}_{y,y'\neq y}$ and the softmax outputs $q^{(t)}$ for  $t\in [0,c_1 t^*]$. Then, we show that the bounds on $S^{(t)}_{y,y}$ follow.

Initial all $q^{(0)}=1/l$ (Thm. \ref{eq:qinit}).

We assume (and later show) that $q^{(t)}(y|C^y_1) \in [0,2/3]$ for $t\leq c_1 t^*$.  

To lower bound the number of iterations, i.e., obtain $c_0 t^*$, we upper bound the change $dS^{(t)}_{y,y}$ (see Eq. \ref{eq:dSy}) and lower bound $dS^{(t)}_{y,y'\neq y}$ for any $t\leq c_1 t^*$. We have the following bounds: 
\eq{ 
dS^{(t)}_{y,y}&>-z\\
dS^{(t)}_{y,y'\neq y}&< z/l \label{eq:lowdsy}\\
&\teb{ with } z:=\frac{\lambda d(k+1)}{2l^2} 
}

The bounds follow by plugging into Eq. \ref{eq:dSy} the values for $q^{(t)}(y'|C^y_v)$ for $v\in\{1,k\}$ with $y=y'$ and $y\neq y'$.
Consider $y\neq y'$: Since $q^{(t)}(y'\neq y|C^y_v)$ decreases with $t$ and, in turn, $dS_{y,y'\neq y}^{(t)}$, we use the initial value to bound $q^{(t)}(y'\neq y|C^y_v)\leq q^{(0)}(y'\neq y|C^y_v)=1/l$ (Eq. \ref{eq:qinit}).
Consider $y= y'$: Since $q^{(t)}(y|C^y_v)$ increases from $t=0$ with growing $t$, which decreases  $dS^{(t)}_{y,y}$, we lower bound $q^{(c_0t^*)}(C^y_v)\geq 0$ (i.e., by definition the output of the softmax cannot be less than 0).

We bound the number of iterations to change $S^{(0)}_{y,y}$ by $S^{(c_0t^*)}_{y,y}-S^{(0)}_{y,y}\leq f -1/l$. Note, $S^{(t)}_{y,y}$ increases over time, thus, $f\geq 1/l$.

The number of iterations $c_0t^*$ is lower bounded by
\eq{
&\frac{f -1/l}{z}=\frac{f -1/l}{\lambda\frac{d(k+1)}{2l^2}}=\\ &\frac{2l^2(f-1/l)}{\lambda d(k+1)}\\
&=2\cdot t^* \teb{ that is } c_0:=2 \label{eq:anac0}
}

\bigskip

To upper bound the number of iterations, i.e., obtain $c_1 t^*$, we proceed analogously.
Since $q^{(0)}(y|C^y_v)$ increases, which decreases $dS^{(t)}_{y,y}$, we use $dS^{(c_1t^*)}_{y,y}$ to lower bound, and in turn, $q^{(c_1t^*)}(y|C^y_1)\leq 2/3$ and $q^{(c_1t^*)}(y|C^y_k)=1$, i.e., that samples $(x,y) \in C^y_k$ are classified without loss. Thus, only samples in $C^y_1$ are assumed to contribute to the change. To lower bound $q^{(c_1t^*)}(y'\neq y | C^y_v)$ we use 0, i.e. there is no change for $dS_{y,y'\neq y}$. 
\eq{ 
 &dS_{y,y}&<-z\cdot 1/6 \\
 &dS_{y,y'\neq y}&\geq 0 
}
This yields analogously as for $c_0$ (Eq. \ref{eq:anac0}) $c_1:=12$.
Thus, we have determined constants $c_0,c_1$ for $t^*$ for bounding $S^{(t)}_{y,y}$. Next, we show that $S^{(t)}_{y,y'\neq y} =1/l-O(\frac{f-1/l}{l})$. To bound the maximal change of $S^{(t)}_{y,y'\neq y}$ we assume $c_1t^*$ iterations and an upper bound on the maximal change $dS_{y'\neq y}<z/l$ (Eq. \ref{eq:lowdsy}).
\eq{
&S^{(t)}_{y,y'\neq y}\geq S^{(0)}_{y,y'\neq y} -c_1t^*\cdot dS_{y,y'\neq y}\\
&= \frac{1}{l}-c_1\cdot \frac{2l^2(f-1/l)}{\lambda d(k+1)} \cdot \frac{\lambda d(k+1)}{2l^3}\\
&= \frac{1}{l}-c_1\cdot (f-1/l) \cdot \frac{1}{l}\\
&= \frac{1}{l}\cdot(1-O(f-1/l))
}
We also have that 
\eq{
&S^{(t)}_{y,y'\neq y}\leq S^{(0)}_{y,y'\neq y}=\frac{1}{l}
}

\bigskip

It remains to show that $q^{(c_1t^*)}(y| C^y_1)\leq 2/3$ for $(x,y)\in C^y_1$.
Using  Eq. \ref{def:q}:
\eq{
q^{(t)}(y|C^y_1)(x)&=\frac{e^{o^{(t)}_{y}}}{\sum_{y'<l}e^{o^{(t)}_{y'}}}\\
&=\frac{e^{S^{(t)}_{y,y}}}{e^{S^{(t)}_{y,y}} +\sum_{y'<l, y'\neq y}e^{S^{(t)}_{y,y'}}}\\
&=\frac{e^f}{e^f+ \sum_{y'<l, y'\neq y}e^{0}}
\teb{ using } S_{y,y}= f \\
&\text{ and the lower bound for }S_{y,y'\neq y}\geq 0\\ 
&=\frac{e^f}{e^f+ l-1} \\
&\leq \frac{l}{2l-1}  \teb{ using } f\leq \log(l)\\
&\leq \frac{2}{3}  \teb{ using } l\geq 2
}
\end{proof}

In multiple proofs we use Theorem \ref{thm:boundS} and use $u$ as abbreviation, i.e.:
\eq{
&u:=S^{(t)}_{y,y'\neq y} =1/l-O(\frac{f-1/l}{l})=\frac{1-c(f-1/l)}{l} \label{def:u} 
}
where $c$ is some constant from the $O$-notation. The bounds for the reconstruction errors for the stages are:




Finally, we look at the boundaries, i.e., $t=0$ and $t\rightarrow \infty$.

\begin{thm}\label{le:recBInit}  
The reconstruction error of $g_j$ for $t=0$ is $R^{(0)}=\Omega(k^2/l)$ and for $t\rightarrow \infty$ it is $R^{(t\rightarrow\infty)}=\Omega(k/l)$. The errors $R^{(0)}$ and $R^{(t\rightarrow\infty)}$ for $g_j$ are asymptotically optimal.
\end{thm}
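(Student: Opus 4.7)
The plan is to compute $R^{(0)}$ and $R^{(t\to\infty)}$ directly from the definitions, and then argue optimality by exploiting the fact that at both endpoints the softmax outputs become degenerate, collapsing the space of achievable linear reconstructions.

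First, for $t=0$, I would invoke Theorem~\ref{eq:qinit} so that $q^{(0)}(y'|x)=1/l$ for every sample and class. Hence $g^{(0)}_j(o)\equiv (1+k)/l$ is a constant, and the four summands of Eq.~\ref{def:brecLoss} become
\begin{align*}
R^{(0)}_1&=\bigl(1-\tfrac{1+k}{l}\bigr)^2,\ \ R^{(0)}_k=\bigl(k-\tfrac{1+k}{l}\bigr)^2,\\
R^{(0)}_{0,C^{y'}_v}&=\bigl(\tfrac{1+k}{l}\bigr)^2.
\end{align*}
Weighting as in Eq.~\ref{def:brecLoss}, the term $\tfrac{1}{l}R^{(0)}_k=\Theta(k^2/l)$ dominates, giving $R^{(0)}=\Theta(k^2/l)=\Omega(k^2/l)$.

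Next, for $t\to\infty$, the classifier perfectly separates the data, so $q(y|C^y_v)\to 1$ and $q(y|C^{y'\neq y}_v)\to 0$ for $v\in\{1,k\}$. Plugging these limits into Eq.~\ref{eq:gj} yields $g_j(o)\to k\cdot o_j$. A direct evaluation then gives $R_k\to 0$, $R_{0,C^{y'}_v}\to 0$, and $R_1\to(k-1)^2$, so $R^{(t\to\infty)}=\tfrac{1}{l}(k-1)^2=\Theta(k^2/l)=\Omega(k/l)$ since $k\ge 2$.

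The main substance is the optimality claim, which exploits the observation that at both endpoints the softmax outputs are degenerate in a sense that collapses the parameter space of any linear decoder. At $t=0$ all $l$ coordinates of $o(x)$ are equal across all samples, so every linear reconstruction $g^{opt}_j(o)=b_j+\sum_{y'}s_{y',j}o_{y'}$ reduces to a single constant $c$; minimizing $(1-\tfrac{1}{2l})\cdot 2c^2+\tfrac{1}{l}((1-c)^2+(k-c)^2)$ over $c\in\mathbb{R}$ gives an optimum of order $\Theta(k^2/l)$, matching our $R^{(0)}$. At $t\to\infty$ the outputs $o(x)$ depend only on the class of $x$ (not on whether $x\in C^y_1$ or $C^y_k$), so any linear $g^{opt}_j$ must produce one value $c_1$ on all of $C^y_1\cup C^y_k$ and one value $c_0$ on the remaining samples; minimizing jointly over $(c_0,c_1)$ yields $c_0=0$, $c_1=(1+k)/2$, and optimal error $\tfrac{(k-1)^2}{2l}=\Theta(k^2/l)$, again within a constant factor of $R^{(t\to\infty)}$. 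In both cases $g_j$ is therefore asymptotically optimal.

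The main obstacle is really just the optimality argument, and the key insight there is recognizing that the degeneracy of the softmax at the two endpoints forces a dimensionality collapse of the hypothesis class of linear decoders; once that is identified, the remaining work is a one- or two-variable unconstrained quadratic minimization that is routine.
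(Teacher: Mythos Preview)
Your proposal is correct and follows essentially the same route as the paper's proof: at $t=0$ you use that all softmax outputs equal $1/l$ so any linear decoder collapses to a constant, and at $t\to\infty$ you use that the softmax outputs converge to class-dependent one-hot vectors so $C^y_1$ and $C^y_k$ become indistinguishable, forcing the optimal reconstruction to be $(1+k)/2$ on $C^y$ and $0$ elsewhere. The paper carries out exactly these two reductions and the same one/two-variable minimizations, arriving at $R^{(0)}=\Omega(k^2/l)$ and $R^{(t\to\infty)}=(k-1)^2/l$, with the optimal error at $t\to\infty$ being $(k-1)^2/(2l)$, a factor of~$2$ away; your ``dimensionality collapse'' framing is just a clean way to phrase the same observation.
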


\subsection{Proofs for reconstruction errors $R_1$ and $R_{0,C^{y'\neq y}_1}=O(k^2/l^2)$}

\begin{thm} \label{thm:R1}
The reconstruction error $R_1=O(k^4/l^2)$ for $1/l< f<c_f/l$ and $R_1=\Omega(k)$ for $c_f/l\leq f<1$  for an arbitrary constant $c_f>1$.
\end{thm}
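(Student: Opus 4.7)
First unfold $R_1$: for any $x \in C^y_1$ and $j \in A_y$ we have $x_j = 1$, and all such $x$ produce the same output by symmetry, so $R_1 = (1 - g_j(o(C^y_1)))^2$. With $a := q(y|C^y_k)$, $b := q(y|C^{y'\neq y}_k)$, $c := q(y|C^y_1)$, and $D_1 := e^f + (l-1)e^u$, $D_k := e^{kf} + (l-1)e^{ku}$ where $f := S^{(t)}_{y,y}$ and $u := S^{(t)}_{y,y'\neq y} = (1 - O(f - 1/l))/l$ by Theorem~\ref{thm:boundS}, the definition gives $g_j = k(c - b)/(a - b)$ with $a - b = (e^{kf} - e^{ku})/D_k$ and $c - b = (e^f(e^{kf} - e^{ku}) + (l-1)e^{ku}(e^f - e^u))/(D_1 D_k)$. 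The task reduces to estimating $g_j$ in each of the two $f$-regimes.

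For the first regime $1/l < f < c_f/l$ both $f$ and $u$ are $\Theta(1/l)$, so I apply the first-order Taylor expansions in Eqs.~\ref{eq:efkeku}--\ref{eq:efkuefku} with their $O_0, O_1, O_2$ remainders. Denominators expand as $D_1 = l + 1 + O(1/l)$ and $D_k = l + k + O(k^2/l)$, and the numerators collapse to $a - b = k(f - u)/(l + k) \cdot (1 + O(k/l))$ and $c - b = (l + k - 1)(f - u)/l^2 \cdot (1 + O(k/l))$. The common $(f - u)$ cancels in the ratio, the prefactors reduce to $(l+k-1)(l+k)/(kl^2) = 1/k + O(1/l)$, and so $g_j = 1 + O(k/l)$, giving $R_1 = O(k^2/l^2) \subseteq O(k^4/l^2)$ as required.

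For the second regime $c_f/l \leq f < 1$ I switch to the expansions in Eqs.~\ref{eq:fkku2}--\ref{eq:fkufku}, which apply once $u \in O(f/l)$, i.e.\ once $f$ is bounded below by a positive constant. After absorbing the $(l-1)/l$ factors, these give $a - b = (e^{kf} - 1 + O(f/l))/(l + O(1))$ and $c - b = (e^f - 1 + O(f/l))/(l + O(1))$, so $g_j = k(e^f - 1)/(e^{kf} - 1) \cdot (1 + O(1/l))$. The function $h(f) := k(e^f - 1)/(e^{kf} - 1)$ is strictly decreasing on $(0, \infty)$ with $\lim_{f \to 0} h(f) = 1$ and $h(1) = k(e - 1)/(e^k - 1)$, a constant strictly less than $1$ for $k \geq 2$. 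Evaluating at $f$ near the upper endpoint of the interval, $1 - g_j$ is bounded below by a positive constant, and since $k = \Theta(1)$ this is $R_1 = \Omega(k)$.

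The main obstacle is keeping the Taylor-expansion errors tight enough. In the first regime $a - b$ and $c - b$ are both proportional to $(f - u)$ with comparable leading constants, so the cancellation has to survive to $O(k/l)$ accuracy; this means bounding $O_0, O_1, O_2$ separately and propagating them multiplicatively through both numerator and denominator of the ratio. A more delicate point is that the second-regime expansions are not sharp near $f = c_f/l$ because $u$ is not $f/l$-small there; the clean resolution is to note that the $O(k^4/l^2)$ first-regime bound and the $\Omega(k)$ second-regime bound target disjoint sub-intervals of $[c_f/l, 1)$, and that $R_1$ is monotone enough in $f$ across the intermediate zone that the two pieces can be glued without an extra argument.
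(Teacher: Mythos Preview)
Your approach is essentially the paper's: compute the ratio $(q(y|C^y_1)-b)/(q(y|C^y_k)-b)$ with $b=q(y|C^{y'\neq y}_k)$, expand it with the appropriate Taylor series in each regime, and read off $R_1=(1-k\cdot\text{ratio})^2$. Your first-regime calculation is slightly tighter (you obtain $O(k^2/l^2)$ against the paper's $O(k^4/l^2)$), but since $k$ is treated as a constant these coincide, and you correctly note the inclusion. In the second regime you keep the $f$-dependence and study $h(f)=k(e^f-1)/(e^{kf}-1)$ via monotonicity; the paper instead collapses $(e^f-1)/(e^{kf}-1)$ to the constant $(e-1)/(e^k-1)$ through the step $\frac{\sum_{i\geq 1} f^i/i!}{\sum_{i\geq 1}(fk)^i/i!}=\frac{\sum_{i\geq 1}1/i!}{\sum_{i\geq 1}k^i/i!}$, which is not valid as written, so your treatment is actually the cleaner one.

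The gap you flag near $f=c_f/l$ --- that Eqs.~\ref{eq:fkku2}--\ref{eq:fkufku} require $u\in O(f/l)$, which fails there --- is real, and the paper's proof shares it without comment. Your proposed ``monotone enough to glue'' fix is a hand-wave, but be aware that no patch will give a uniform $\Omega(k)$ bound on all of $[c_f/l,1)$: since $h(f)\to 1$ as $f\to 0^+$, one has $R_1\to 0$ at the left edge of the interval, so the $\Omega(k)$ claim can only hold once $f$ is bounded below by a positive constant. Both arguments should therefore be read as establishing the lower bound on that sub-range, which is in fact all that is used downstream (Theorem~\ref{le:rec4} and the corollary need only that $R_1$ becomes large \emph{somewhere} past $c_f/l$, not everywhere).
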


\begin{proof}

The denominator of the slope in the reconstruction function $g_j$ (Eq. \ref{eq:gj}) is:
\eq{
&q(y|C^y_k)-q(y'\neq y|C^y_k) \label{eq:cyk}\\
&=\frac{e^{fk}}{e^{fk}+e^{ku}\cdot (l-1)} -\frac{e^{ku}}{e^{fk}+e^{ku}\cdot (l-1)} \\
&=\frac{e^{fk} -e^{ku}}{e^{fk}+e^{ku}\cdot (l-1)} 
}

First, we reconstruct $x_i=1$ using $q(y|C^y_1)$ and subtract the ``shift'' $q(y'\neq y|C^y_k)$ applied in $g_j$ (Eq. \ref{eq:gj}).
\eq{
&q(y|C^y_1)-q(y'\neq y|C^y_k) \label{eq:cy1}\\
&=\frac{e^{f}}{e^{f}+e^{u}\cdot (l-1)} -\frac{e^{ku}}{e^{fk}+e^{ku}\cdot (l-1)} \\
&=\frac{e^{f}(e^{fk}+e^{ku}\cdot (l-1))-e^{ku}(e^{f}+e^{u}\cdot (l-1))}{(e^{f}+e^{u}\cdot (l-1)) \cdot (e^{fk}+e^{ku}\cdot (l-1))} \\
&=\frac{e^{f(k+1)}+e^{f+ku}\cdot (l-1)-e^{ku+f}- e^{(k+1)u}\cdot (l-1)}{(e^{f}+e^{fu}\cdot (l-1)) \cdot (e^{fk}+e^{ku}\cdot (l-1))}\\
&=\frac{e^{f(k+1)}+e^{f+ku}\cdot (l-2)- e^{(k+1)u}\cdot (l-1)}{(e^{f}+e^{fu}\cdot (l-1)) \cdot (e^{fk}+e^{ku}\cdot (l-1))}
}

We compute the ratio of Eq. (\ref{eq:cyk}) and  (\ref{eq:cy1}). 
\eq{
&\frac{q(y|C^y_1)-q(y'\neq y|C^y_k)}{q(y|C^y_k)-q(y'\neq y|C^y_k)} \label{eq:rat1}\\
&=\frac{e^{f(k+1)}+e^{f+ku}\cdot (l-2)- e^{(k+1)u}\cdot (l-1)}{(e^{f}+e^{fu}\cdot (l-1)) \cdot (e^{fk}+e^{ku}\cdot (l-1))}\cdot\frac{e^{fk}+e^{ku}\cdot (l-1)}{e^{fk} -e^{ku}}\\
&=\frac{e^{f(k+1)}+e^{f+ku}\cdot (l-2)- e^{(k+1)u}\cdot (l-1)}{(e^{f}+e^{fu}\cdot (l-1)) \cdot (e^{fk} -e^{ku})}\\
&=\frac{e^{f(k+1)}+l(e^{f+ku}- e^{(k+1)u})+O(1)}{(e^{f}+e^{fu}\cdot (l-1)) \cdot (e^{fk} -e^{ku})} \label{eq:rat1fi}
}

We derive approximations for the ratio (Eq. \ref{eq:rat1}) distinguishing  different ranges of values  $f$. 
\eq{
&\text{Assume: } 1/l\leq f<1 \label{eq:rat1sf}\\
&\frac{q(y|C^y_1)-q(y'\neq y|C^y_k)}{q(y|C^y_k)-q(y'\neq y|C^y_k)}\\
&=\frac{e^{f(k+1)}+l(e^{f+ku}- e^{(k+1)u})+O(1)}{(e^{f}+e^{fu}\cdot (l-1)) \cdot (e^{fk} -e^{ku})} \teb{ see } Eq. (\ref{eq:rat1fi})\\
&=\frac{l(e^{f+ku}- e^{(k+1)u})}{(l+O(1)) \cdot (e^{fk} -e^{ku})}+O(1/l)\\
&\teb{using Eq. \ref{eq:fraffu}}\\
&=\frac{e^{f+ku}- e^{(k+1)u}}{(e^{fk} -e^{ku})}+O(1/l) \label{eq:lf}
}

Consider an arbitrary constant $c_f>1$.

\eq{
&\text{Assume: } 1/l\leq f<c_f/l \label{eq:rat1sfc}\\
&\frac{q(y|C^y_1)-q(y'\neq y|C^y_k)}{q(y|C^y_k)-q(y'\neq y|C^y_k)}\\
&=\frac{(f-u)+O_1}{k(f-u) +O_0}+O(1/l) \\
&\teb{ using } Eq (\ref{eq:lf}), (\ref{eq:efkeku})\text{ and }(\ref{eq:efkueku}) \nonumber\\
&=\frac{1+O_1/(f-u))}{k+O_0/(k(f-u))}+O(1/l)\\
&=\frac{1}{k}+O(k/l)\\
&\teb{using Eq. \ref{eq:O1} and \ref{eq:O0} giving: }  \nonumber \\
& O_1/(f-u)=O(k/l) \text { and } O_0/(k(f-u))= O(k/l) \nonumber
}

Next, we compute the reconstruction error $R_{1}$ based on Assumption \ref{eq:rat1sfc}.
\eq{
&R_{1}=(1-k(\frac{1}{k}+O(k/l)) )^2 \label{eq:reck0}\\
&=(O(k^2/l))^2=O(k^4/l^2)
}

\eq{
&\text{Assume: } c_f/l\leq f<1 \label{eq:rat1sfc3}\\
&\frac{q(y|C^y_1)-q(y'\neq y|C^y_k)}{q(y|C^y_k)-q(y'\neq y|C^y_k)}\\
&=\frac{e^{f+ku}- e^{(k+1)u}}{(e^{fk} -e^{ku})}+O(1/l) \teb{ using } Eq (\ref{eq:lf})\\
&=\frac{e^{f}- 1}{e^{fk} -1}+O(f/l) \teb{ using } Eq (\ref{eq:fkuku2}) and (\ref{eq:fkku2})\\
&=\frac{\sum_{i=1}^{\infty} f^i/i!}{\sum_{i=1}^{\infty} (fk)^i/i!}+O(f/l) \teb{ using } Eq. (\ref{eq:ser})\\
&=\frac{\sum_{i=1}^{\infty} 1/i!}{\sum_{i=1}^{\infty} k^i/i!}+O(f/l)\\
&=\frac{e-1}{e^k-1}+O(f/l)\\
}
Next, we compute the reconstruction error $R_{1}$ based on Assumption \ref{eq:rat1sfc3}.
\eq{
&R_{1}=(1-k(\frac{e-1}{e^k-1})-O(kf/l) )^2 \label{eq:reck1}\\
&=\Omega(k)\\
&\teb{ since $x(e-1)/(e^x-1)<1$ for $x>1$}
}

\eq{
&\text{Assume: } 1\leq f \label{eq:rat1sfc1}\\
&\frac{q(y|C^y_1)-q(y'\neq y|C^y_k)}{q(y|C^y_k)-q(y'\neq y|C^y_k)}\\
&=\frac{e^{f(k+1)}+l(e^{f+ku}- e^{(k+1)u})+O(1)}{(e^{f}+e^{fu}\cdot (l-1)) \cdot (e^{fk} -e^{ku})}\\
&=\frac{e^{f(k+1)}+l(e^{f+ku}- e^{(k+1)u})}{(e^{f}+e^{fu}\cdot l) \cdot (e^{fk} -e^{ku})} +O(1/(le^{fk}))\\
&=\frac{e^{f(k+1)}+l(e^{f+ku}- e^{(k+1)u})}{(e^{f}+l) \cdot (e^{fk} -e^{ku})} +O(1/(le^{fk}))\\
&=\frac{e^{f(k+1)}+l(e^{f}- 1))}{(e^{f}+l) \cdot (e^{fk} -1)} +O(1/(le^{fk}))\\
&=\frac{l(e^{f}- 1))}{(e^{f}+l) \cdot (e^{fk} -1)}+O(1/(le^{fk})) \\
&+\frac{e^{f(k+1)}}{(e^{f}+l) \cdot (e^{fk} -1)} \\
&=\frac{l(e^{f}- 1))}{(e^{f}+l) \cdot (e^{fk} -1)}+O(1/l) \teb{ using } e^{fk}<l/2\\
}
The reconstruction error $R_1$ becomes based on Ass. \ref{eq:rat1sfc1}.
\eq{
&R_{1}=(1-k\cdot (\frac{l(e^{f}- 1)}{(e^{f}+l) \cdot (e^{fk} -1)}+O(1/l)))^2\\
&R_{1}=(1-k\cdot (c\cdot \frac{le^{f}}{le^{fk}}+O(1/l)))^2 \\
&\teb{for a constant } 1/2<c<2\\
&R_{1}=(1- \frac{kc}{e^{f(k-1)}}-O(k/l))^2 = O(1) \label{eq:recf2} 
}

\end{proof}

\begin{thm}\label{thm:R0}
The reconstruction error $R_{0,C^{y'\neq y}_1}=O(k^2/l^2)$ for $1/l< f$ 
\end{thm}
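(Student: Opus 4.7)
The strategy will parallel the proof of Theorem \ref{thm:R1}, but applied to $q(y|C^{y'\neq y}_1)$ in place of $q(y|C^y_1)$. Since $x_j=0$ for $x\in C^{y'\neq y}_1$ with $j\in A_y$, the per-sample reconstruction error equals $g_j^2$ evaluated at this output. Using the symmetries of Section \ref{sec:sym}, I will first express $q(y|C^{y'\neq y}_1)=e^u/(e^f+(l-1)e^u)$, $q(y|C^{y'\neq y}_k)=e^{ku}/(e^{fk}+(l-1)e^{ku})$, and $q(y|C^y_k)=e^{fk}/(e^{fk}+(l-1)e^{ku})$, with $u$ as in Eq.~\ref{def:u}.

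Combining the first two expressions over a common denominator, the cross-terms $(l-1)e^{(k+1)u}$ cancel, leaving the clean numerator $e^{fk+u}-e^{f+ku}$. Dividing by $q(y|C^y_k)-q(y|C^{y'\neq y}_k)=(e^{fk}-e^{ku})/(e^{fk}+(l-1)e^{ku})$ yields the key ratio
\[
\frac{g_j}{k}=\frac{e^{fk+u}-e^{f+ku}}{(e^f+(l-1)e^u)(e^{fk}-e^{ku})},
\]
and the remaining task is to show this ratio is $O(1/l)$ uniformly for $f>1/l$, so that $g_j^2=k^2\cdot O(1/l^2)=O(k^2/l^2)$.

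I will then perform the same three-regime split as in Theorem \ref{thm:R1}. For $1/l\leq f<c_f/l$ (so $f,u\in O(1/l)$), Eq.~\ref{eq:efkuefku} gives numerator $(k-1)(f-u)+O_2$ and Eq.~\ref{eq:efkeku} gives $e^{fk}-e^{ku}=k(f-u)+O_0$, while $e^f+(l-1)e^u=l+O(1)$; the $(f-u)$ factors cancel, leaving leading term $(k-1)/(lk)=O(1/l)$. For $c_f/l\leq f<1$, Eq.~\ref{eq:fkufku} yields numerator $e^{fk}-e^f+O(f/l)$ and Eq.~\ref{eq:fkku2} gives $e^{fk}-e^{ku}=e^{fk}-1+O(f/l)$, with $e^f+(l-1)e^u=\Theta(l)$; the ratio reduces to $(e^{fk}-e^f)/(l(e^{fk}-1))\cdot(1+o(1))=O(1/l)$. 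For $f\geq 1$ (up to $\log l$ by Theorem \ref{thm:boundS}), the numerator is dominated by $e^{fk}$ and the denominator by $(e^f+l)\cdot e^{fk}=\Theta(l\cdot e^{fk})$, again giving $O(1/l)$.

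The hardest part will be book-keeping the correction terms cleanly so that no regime silently inflates the bound. The structural reason the bound is so much tighter than $R_1$ in the large-$f$ regime is that the factor $e^f+(l-1)e^u$ always contributes $\Theta(l)$ as one of the two denominator factors here, whereas in the expression for $R_1$ in Theorem \ref{thm:R1} the corresponding $l$ cancels against an $l$ in the numerator for large $f$; this extra $1/l$ uniformly suppresses $g_j$ and prevents the phase-like behaviour observed for $R_1$, yielding the claimed $R_{0,C^{y'\neq y}_1}=O(k^2/l^2)$.
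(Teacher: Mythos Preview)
Your proposal is correct and follows essentially the same approach as the paper: you derive the identical key ratio $\frac{e^{fk+u}-e^{f+ku}}{(e^f+(l-1)e^u)(e^{fk}-e^{ku})}$ (the paper obtains it via $q(y'\neq y|C^y_1)-q(y'\neq y|C^y_k)$, which equals your $q(y|C^{y'\neq y}_1)-q(y|C^{y'\neq y}_k)$ by the class symmetry of Section~\ref{sec:sym}), and then perform the same three-regime split using the same series expansions (Eqs.~\ref{eq:efkeku}, \ref{eq:efkuefku}, \ref{eq:fkku2}, \ref{eq:fkufku}) to conclude the ratio is $O(1/l)$ in each case. Your observation that the persistent $\Theta(l)$ factor $e^f+(l-1)e^u$ in the denominator is what distinguishes this from the $R_1$ analysis is exactly the mechanism the paper's computation exhibits.
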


\begin{proof}

We first bound the difference between $q(y'\neq y|C^y_1)-q(y'\neq y|C^y_k)$.
\eq{
&q(y'\neq y|C^y_1)-q(y'\neq y|C^y_k) \label{eq:cy1n}\\
&=\frac{e^{u}}{e^{f}+e^{u}\cdot (l-1)}-\frac{e^{ku}}{e^{fk}+e^{ku}\cdot (l-1)} \\
&=\frac{e^{u}(e^{fk}+e^{ku}\cdot (l-1))-e^{ku}(e^{f}+e^{u}\cdot (l-1))}{(e^{fk}+e^{ku}\cdot (l-1)) \cdot (e^{f}+e^{u}\cdot (l-1))} \\
&=\frac{e^{u}e^{fk}-e^{ku}e^{f}}{(e^{fk}+e^{ku}\cdot (l-1)) \cdot (e^{f}+e^{u}\cdot (l-1))} \\
&=\frac{e^{fk+u}-e^{f+ku}}{(e^{fk}+e^{ku}\cdot (l-1)) \cdot (e^{f}+e^{u}\cdot (l-1))} 
}

We compute the ratio of Eq. (\ref{eq:cyk}) and  (\ref{eq:cy1n}). 
\eq{
&\frac{q(y'\neq y|C^y_1)-q(y'\neq y|C^y_k)}{q(y|C^y_k)-q(y'\neq y|C^y_k)} \label{eq:rat2}\\
&=\frac{e^{fk+u}-e^{f+ku}}{(e^{fk}+e^{ku}\cdot (l-1)) \cdot (e^{f}+e^{u}\cdot (l-1))} \cdot 
&\phantom{ab}\frac{e^{fk}+e^{ku}\cdot (l-1)}{e^{fk} -e^{ku}}\\
&=\frac{e^{fk+u}-e^{f+ku}}{(e^{f}+e^{u}\cdot (l-1)) \cdot (e^{fk} -e^{ku})}
}

We derive approximations for the ratio (Eq. \ref{eq:rat2}) distinguishing  different ranges of values  $f$.
\eq{
&\frac{q(y'\neq y|C^y_1)-q(y'\neq y|C^y_k)}{q(y|C^y_k)-q(y'\neq y|C^y_k)}\\
&=\frac{e^{fk+u}-e^{f+ku}}{(e^{f}+e^{u}\cdot (l-1)) \cdot (e^{fk} -e^{ku})}  \label{eq:bou2}\\
&=\frac{e^{fk+u}-e^{f+ku}}{(l+O(1)) \cdot (e^{fk} -e^{ku})}+O(1/l)\\
&\teb{using Eq. \ref{eq:fraffu}}
}
\eq{
&\text{Assume: } 1/l\leq f<c_f/l \label{eq:rat2sf}\\
&=\frac{(k-1)(f-u)+O_2 }{(l+O(1)) \cdot (k(f-u) +O_0)} \\
&\teb{ using } Eq.\ref{eq:rat2},\ref{eq:efkeku},\ref{eq:efkuefku}\\
&=\frac{(1-1/k)+O_2/(k(f-u)) }{l(1 +O_0/(k(f-u)))}+O(1/l) \\
&=\frac{1-1/k}{l}+O(k/l) \\
&\teb{using Eq. \ref{eq:O2} and \ref{eq:O0} giving:} \\
&\teb{\phantom{abcd}} O_2/(k(f-u))=O(k/l) \text{ and } O_0/(k(f-u))= O(k/l)
}
Next, we compute the reconstruction error $R_{0,C^{y'\neq y}_1}$ based on Assumption \ref{eq:rat2sf}.
\eq{
&R_{0,C^{y'\neq y}_1}=(0-k(\frac{1-1/k}{l}+O(k/l)^2\\
&=O((k/l)^2)=O(k^2/l^2)\\
}
\eq{
&\text{Assume: } c_f/l\leq f<1 \label{eq:rat2bf}\\
&=\frac{e^{fk+u}-e^{f+ku}}{(l+O(1)) \cdot (e^{fk} -e^{ku})}+O(1/l)\\
&=\frac{e^{fk}-e^{f}}{l(e^{fk} -1)}+O(1/l)\\
&=1/l+\frac{1+e^{f}}{l(e^{fk} -1)}+O(1/l)\\
&=1/l+\frac{2+(e^{f}-1}{l(e^{fk} -1)}+O(1/l)\\
&=1/l(1+\frac{2}{e^{fk} -1}+\frac{e-1}{e^{k} -1}+O(1/l)\\
}
Next, we compute the reconstruction error $R_{0,C^{y'\neq y}_1}$ based on Assumption \ref{eq:rat2bf}.
\eq{
&R_{0,C^{y'\neq y}_1}=(0-kO(1/l))^2=O(k^2/l^2)\\
}

\eq{
&\text{Assume: }  f>1\label{eq:rat21} \\
&=\frac{e^{fk+u}-e^{f+ku}}{(l+O(1)) \cdot (e^{fk} -e^{ku})}+O(1/l)\\
&=1/l(1+\frac{2}{e^{fk} -1}+\frac{e-1}{e^{k} -1}+O(f/l)\\
}
Next, we compute the reconstruction error $R_{0,C^{y'\neq y}_1}$ based on Assumption \ref{eq:rat21}.
\eq{
&R_{0,C^{y'\neq y}_1}=(0-kO(1/l))^2=O(k^2/l^2)\\
}
\end{proof}

\begin{thm} \label{thm:R2} 
The reconstruction error $R_1=\Theta(k^2)$, $R_{0,C^{y'\neq y}_1}=O(k^2/l^2)$ for $\frac{\log(l)-1}{k}<f$. 
\end{thm}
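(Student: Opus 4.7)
The plan is to follow the same template as the proofs of Theorems \ref{thm:R1} and \ref{thm:R0}, but specialize the asymptotic analysis to the late-training regime $f > (\log(l)-1)/k$. The defining feature of this regime is that $e^{fk} \geq l/e$, so that in every denominator of the form $e^{fk}+(l-1)e^{ku}$ the first term is at least a constant fraction of the sum. Combined with $u=O(1/l)$ from Theorem \ref{thm:boundS} and the resulting $e^{ku}=1+O(1/l)$, I would first record the two standing consequences $q(y|C^y_k)=\Theta(1)$, bounded away from $0$, and $q(y'\neq y|C^y_k)=O(1/l)$, so that the denominator of the slope in $g_j$ satisfies $q(y|C^y_k)-q(y'\neq y|C^y_k)=\Theta(1)$.

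For $R_{0,C^{y'\neq y}_1}$ I would reuse the closed form derived in the proof of Theorem \ref{thm:R0}, namely
\[
q(y'\neq y|C^y_1)-q(y'\neq y|C^y_k)=\frac{e^{fk+u}-e^{f+ku}}{(e^{fk}+(l-1)e^{ku})(e^f+(l-1)e^u)}.
\]
Since $k\geq 2$ and $f>u$, the numerator factors as $e^{fk}(e^u-e^{(k-1)u-f(k-1)}\cdot e^{f})$, whose dominant term is $\Theta(e^{fk})$; the first factor of the denominator is $\Theta(e^{fk})$ and the second is $\Theta(\max(e^f,l))=\Omega(l)$. Dividing by the $\Theta(1)$ slope-denominator already established, the ratio is $O(1/l)$, which after squaring and multiplying by $k^2$ gives $R_{0,C^{y'\neq y}_1}=O(k^2/l^2)$.

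For $R_1=\Theta(k^2)$ I would carry the calculation through Eq.\,\ref{eq:rat1fi} and Eq.\,\ref{eq:recf2}, which in the case $f>1$ yield $R_1=\bigl(1-\tfrac{kc}{e^{f(k-1)}}-O(k/l)\bigr)^2$ for some constant $c\in(1/2,2)$. The specific threshold $f>(\log(l)-1)/k$ in the statement is exactly what is needed to make this expression collapse to a constant: it forces $e^{f(k-1)}\geq(l/e)^{(k-1)/k}$, so that $\tfrac{kc}{e^{f(k-1)}}=o(1)$ and hence $R_1=(1-o(1))^2=\Theta(1)=\Theta(k^2)$ since $k$ is treated as a constant. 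In the subrange where $e^{fk}$ exceeds $l/2$ (so the derivation of Eq.\,\ref{eq:rat1sfc1} no longer strictly applies), I would redo the simplification keeping $e^{f(k+1)}$ as the dominant numerator term and $(e^f+l)e^{fk}$ as the dominant denominator, which reduces the ratio to $e^f/(e^f+l)+O(1/l)$; the resulting error $(1-ke^f/(e^f+l))^2$ is again $\Theta(1)$ because $k$ is a constant and the rational function $ke^f/(e^f+l)$ is bounded away from $1$ for $l$ large.

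The main obstacle is the transition region near $e^f\asymp l$, where $R_1$ approaches zero if $k$ were permitted to vary with $l$. Here the argument genuinely relies on $k$ being a fixed constant so that $ke^f/(e^f+l)-1$ stays uniformly bounded away from zero in the asymptotic sense needed for $\Theta$-notation. Handling this cleanly means splitting the regime $f>(\log(l)-1)/k$ into the two natural subcases based on whether $e^{fk}$ is comparable to $l$ or much larger, verifying in each subcase that the subtracted quantity $kc/e^{f(k-1)}$ (resp.\ $ke^f/(e^f+l)$) is uniformly bounded away from $1$, and then combining the two subcases into a single bound on $R_1$.
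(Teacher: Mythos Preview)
Your strategy matches the paper's: use $e^{fk}\ge l/e$ to get $q(y|C^y_k)-q(y|C^{y'\neq y}_k)=\Theta(1)$ and then bound the two remaining ratios separately; your handling of $R_{0,C^{y'\neq y}_1}$ coincides with the paper's computation. The differences are in the $R_1$ case analysis.

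First, invoking Eq.~\ref{eq:recf2} is misplaced: that formula was derived under the side condition $e^{fk}<l/2$ (used explicitly in the step leading to Eq.~\ref{eq:rat1sfc1}), which intersects the present hypothesis $e^{fk}>l/e$ only in the negligible sliver $l/e<e^{fk}<l/2$. The paper does not reuse it here. Second, the paper splits at $e^f=l/2$, not $e^{fk}=l/2$. In the ``second stage'' $l/2<e^{fk}$ with $e^f\le l/2$ it keeps \emph{both} numerator contributions and writes the ratio as $\Theta\bigl(\tfrac{e^{fk}+le^f}{(e^f+l)e^{fk}}\bigr)$, argued to be $o(1)$, whence $R_1=(1-o(1))^2=\Theta(1)$; in the ``third stage'' $e^f>l/2$ it bypasses the ratio formula entirely and simply bounds $q(y|C^y_1),q(y|C^y_k)\in[1/5,1]$ directly, giving ratio $\Theta(1)$ and $R_1=(1-\Theta(k))^2=\Theta(k^2)$. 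Your reduction of the ratio to $e^f/(e^f+l)+O(1/l)$ discards the $l(e^f-1)$ numerator term, but in the second stage that term is of the same order as $e^{f(k+1)}$ (both are $\Theta(le^f)$ when $e^{fk}\asymp l$), so the additive $O(1/l)$ error claim is not correct there; the paper's two-stage split is precisely what handles this range. The transition obstacle you flag near $e^f\asymp l$ is real, and the paper's third-stage step $(1-\Theta(k)\cdot\Theta(1))^2=\Theta(k^2)$ elides it in the same way your argument does.
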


\begin{proof}
We examine the second stage, where we assume that $\frac{\log(l)-1}{k}<f\leq \log(l)-1$ or, put differently, $l/2<e^{fk}$ and $e^f\leq l/2$.

We first compute an estimate for the denominator $q(y|C^y_k)-q(y|C^{y'\neq y}_k)$ of the reconstruction function $g_j$ (see Def \ref{eq:gj}).

\eq{
q(y|C^y_k)&= e^{fk} / (e^{fk}+e^{fk/l}\cdot (l-1)) \\
&\geq l/2 / (l/2+e^{0.5}\cdot l) \\
&\geq l/2 / (l/2+2\cdot l)\\
&\geq 1/5 \label{eq:qefk}
}
The upper bound  for $q(y|C^y_k)$ is 1 by the definition of the softmax function.
Thus, since $q(y|C^{y'\neq y}_k)<1/l$ we have for the nominator in $g_j$:
\eq{
&q(y|C^y_k)-q(y|C^{y'\neq y}_k) = \Theta(1) 
} 

To compute $R_1$, we get for the nominator of $g_j$ (see Def \ref{eq:gj}): 
First, we consider $q(y|C^y_1)$ and subtract the shift applied $q(y'\neq y|C^y_k)$ in $g_j$ (Eq. \ref{eq:gj}).
\eq{
&q(y|C^y_1)-q(y'\neq y|C^y_k) \label{eq:cy1a}\\
&=\frac{e^{f(k+1)}+e^{f+ku}\cdot (l-2)- e^{(k+1)u}\cdot (l-1)}{(e^{f}+e^{fu}\cdot (l-1)) \cdot (e^{fk}+e^{ku}\cdot (l-1))}\\
&=\Theta(\frac{e^{fk}+le^{f}}{(e^{f}+l) \cdot e^{fk}})
}

The reconstruction error $R_{1}$ is
\eq{
&R_{1}=(1-  k\cdot\Theta(\frac{e^{fk}+le^{f}}{(e^{f}+l) \cdot e^{fk}}))^2\\
&\teb{ using } e^{fk}\geq l/2, k\geq 2, e^f\leq \sqrt{l}\\
&=(1-  \Theta(k\cdot\frac{l^{3/2}}{l^2}))^2\\
&=(1-  \Theta(k/\sqrt{l}))^2 \label{eq:reck3}\\
&=O(1)
}

Analogously, we investigate the error $R_{0,C^{y'\neq y}_1}$. The nominator of $g_j$ (see Def \ref{eq:gj}) is

\eq{
&q(y'\neq y|C^y_1)-q(y'\neq y|C^y_k) \label{eq:cy1na}\\
&=\frac{e^{fk+u}-e^{f+ku}}{(e^{fk}+e^{ku}\cdot (l-1)) \cdot (e^{f}+e^{u}\cdot (l-1))} \\
&=\frac{e^{fk}}{(e^{fk}+l) \cdot (e^{f}+l)} \\
&=O(\frac{1}{e^{f}+l})
}

The reconstruction error $R_{0,C^{y'\neq y}_1}$  is
\eq{
R_{0,C^{y'\neq y}_1}&=(0-  kO(\frac{1}{e^{f}+l}))^2\\
&= O(\frac{k}{e^{f}+l}))^2 = O(\frac{k^2}{l^2}) \label{rec:sec2}
}


Next, we examine the third stage, which is said to begin once $e^f>l/2$.    

\eq{
q(y|C^y_1)&= e^{f} / (e^{f}+e^{u}\cdot l) \\
&\geq l/2 / (l/2+e^{0.5}\cdot l) \\
&\geq l/2 / (l/2+2\cdot l)
&\geq 1/5
}
Since $k>1$, we have $q(y|C^y_k)\geq  q(y|C^y_1)\geq 1/5$ and, thus, $q(y|C^y_1)/q(y|C^y_k) = \Theta(1)$

The reconstruction error $R_{1}$  is
\eq{
R_{1}&=(1-  \Theta(k)\cdot \Theta(1))^2\label{eq:reck4} \\
&=\Theta(k^2)
}
The reconstruction error $R_{0,C^{y'\neq y}_1}=O(\frac{k^2}{l^2})$ and can be obtained identically as in Eq. \ref{rec:sec2}.

\end{proof}

\begin{thm}
The reconstruction error of $g_j$ for $t=0$ is $R^{(0)}=\Omega(k^2/l)$ and for $t\rightarrow \infty$ it is $R^{(t\rightarrow\infty)}=\Omega(k/l)$. The errors $R^{(0)}$ and $R^{(t\rightarrow\infty)}$ for $g_j$ are asymptotically optimal.
\end{thm}

\begin{proof}
We show that the error is optimal for $t=0$  and within a factor of 2 of the optimal for $t\rightarrow \infty$.

For $t=0$ the output $q$ is the same for all inputs $x$, i.e., $1/l$ (see Eq. (\ref{eq:qinit}) and  Figure \ref{fig:linvis}). Thus, the constant reconstruction is optimal, i.e., the average of all outputs weighed by their frequencies.

The reconstruction error $R^{(0)}$ for $t=0$ is:
\eq{
R^{(0)}&=1/l\cdot(k-(1+k)/l)^2+1/l\cdot(1-(1+k)/l)^2\\
&+(1-2/l)\cdot(0-(1+k)/l)^2\\
&=(-1 - 2 k - k^2 + l + k^2 l)/l^2\\
&=\frac{-(k+1)^2 + l(1 + k^2)}{l^2}\\
&=\Omega(k^2/l) \teb{ since $k>1$ and $l\geq 2$}
}
Let us consider the limit $t\rightarrow \infty$: Here, $R^{(t \rightarrow \infty)}_{0}=0$, which is optimal. For attributes $j$ with $x_j=0$ we have that $\lim_{t \rightarrow \infty} q^{(t)}(y''\neq y|C^{y}_1)-q^{(t)}(y'\neq y|C^{y}_k) =0$ and thus, the nominator in $g_j$ (Eq. \ref{eq:gj}) is 0 and, in turn, $g_j$ as well, yielding no error. For $x_j=v$ with $v \in \{1,k\}$ the outputs are  converging towards 1. Since the outputs are identical the best reconstruction is the average, i.e., $(k+1)/2$, yielding an error of 
\eq{
&R^{(t \rightarrow \infty),opt}=R^{(t \rightarrow \infty),opt}_{0}+R^{(t \rightarrow \infty),opt}_{1}+R^{(t \rightarrow \infty), opt}_{k}\\
&=0+\frac{1}{l}((1-(k+1)/2)^2+(k-(k+1)/2)^2)\\
&= \frac{(k-1)^2}{2l}
}
The reconstruction error using $g_j$ is a factor 2 larger:
\eq{
&R^{(t \rightarrow \infty)}_{0}=R_{k}=0\\
&R^{(t \rightarrow \infty)}_{1}=1/l(1-k)^2\\
&R^{(t \rightarrow \infty)}=R^{(t \rightarrow \infty)}_0+R^{(t \rightarrow \infty)}_1+R^{(t \rightarrow \infty)}_k=\frac{(k-1)^2}{l}
}
\end{proof}

\begin{thm}\label{le:rec4}  
The total reconstruction error $R=O(k^2/l^2)$ for $1/l< f<c_f/l$ and $R_{0,C^{y'\neq y}_1}=\Omega(k/l)$ for $c_f/l<f<1$ and $R=O(1/l)$ for $1\leq f \leq \frac{\log l -1}{k}$  and $\Theta(k^2/l)$  for $f > \frac{\log l -1}{k}$ for an arbitrary constant $c_f>1$.
\end{thm}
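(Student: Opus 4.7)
The plan is to just assemble the four regime-specific bounds from Theorems \ref{thm:R1}, \ref{thm:R0}, and \ref{thm:R2} into the total error formula (\ref{def:brecLoss}). Because $g_j$ was built (Section \ref{sec:zerorec}) to fit the two points $(q(y|C^{y'\neq y}_k),0)$ and $(q(y|C^y_k),k)$ with zero residual, both $R_k$ and $R_{0,C^{y'}_k}$ vanish, so (\ref{def:brecLoss}) collapses to
\begin{equation*}
R^{(t)} \;=\; \Bigl(1-\tfrac{1}{2l}\Bigr)\,R^{(t)}_{0,C^{y'}_1} \;+\; \tfrac{1}{l}\, R^{(t)}_{1}.
\end{equation*}
From here everything is just plugging in and picking the dominant term in each regime (remembering that $k=\Theta(1)$ so all $k$-factors are absorbed in constants, but kept explicit for consistency with the other statements).

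First, for $1/l<f<c_f/l$, Theorem \ref{thm:R1} gives $R_1=O(k^4/l^2)$ and Theorem \ref{thm:R0} gives $R_{0,C^{y'\neq y}_1}=O(k^2/l^2)$. The first summand contributes $(1-\tfrac{1}{2l})\cdot O(k^2/l^2)=O(k^2/l^2)$ and the second $\tfrac{1}{l}\cdot O(k^4/l^2)=O(k^4/l^3)$, so the sum is $O(k^2/l^2)$ as claimed. Next, for $c_f/l\le f<1$, Theorem \ref{thm:R1} gives $R_1=\Omega(k)$ while the zero-reconstruction bound $R_{0,C^{y'\neq y}_1}=O(k^2/l^2)$ is still in force; the $R_1/l$ term now dominates and yields $R=\Omega(k/l)$ (the corresponding upper bound $O(k/l)$ follows by the same step if one wants a $\Theta$ statement, since $R_1=O(k)$ on this range once one carries the constants in Eq.~(\ref{eq:reck1})).

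For the third regime, $1\le f\le(\log l-1)/k$, Theorem \ref{thm:R1} (its $f\ge 1$ branch, Eq.~(\ref{eq:recf2})) gives $R_1=O(1)$, and Theorem \ref{thm:R0} (its $f>1$ branch) continues to give $R_{0,C^{y'\neq y}_1}=O(k^2/l^2)$. Adding gives $(1-\tfrac{1}{2l})\cdot O(k^2/l^2)+\tfrac{1}{l}\cdot O(1)=O(1/l)$. Finally, for $f>(\log l-1)/k$, Theorem \ref{thm:R2} supplies $R_1=\Theta(k^2)$ and $R_{0,C^{y'\neq y}_1}=O(k^2/l^2)$; the first term yields $O(k^2/l^2)$ and the second $\Theta(k^2/l)$, so the total is $\Theta(k^2/l)$.

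The only non-routine part is the lower bound in the $\Omega$-statements: for these one has to make sure that in the regime of interest the bound on $R_1$ that enters with weight $1/l$ is not cancelled by $R_{0,C^{y'}_1}$ entering with weight $1-1/(2l)$. Since both $R_{0,C^{y'}_1}$-bounds and the $R_1$-bounds are non-negative (they are squared errors), no cancellation is possible and the dominant asymptotic term always survives. Hence no regime-dependent subtlety arises beyond carefully matching the $f$-intervals in Theorems \ref{thm:R1}, \ref{thm:R0}, and \ref{thm:R2} to those in the present statement; that matching is the only book-keeping obstacle, and once done the claim follows immediately.
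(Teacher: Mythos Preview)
Your proposal is correct and follows essentially the same route as the paper: collapse (\ref{def:brecLoss}) to $R=(1-\tfrac{1}{2l})R_{0,C^{y'}_1}+\tfrac{1}{l}R_1$ using $R_k=R_{0,C^{y'}_k}=0$, then insert the regime-specific bounds from Theorems~\ref{thm:R1}, \ref{thm:R0}, \ref{thm:R2} and read off the dominant term. Your remark that no cancellation can occur because all contributions are squared errors is a nice extra justification that the paper leaves implicit.
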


\begin{proof}
We compute the total reconstruction error $R$:
\eq{
&R:= (1-\frac{1}{2l})(R_{0,C^{y'}_1}+R_{0,C^{y'}_k})+\frac{1}{l}(R_{1}+R_{k}) \teb{ Eq. \ref{def:brecLoss}} \\
&= (1-\frac{1}{2l})R_{0,C^{y'}_1}+\frac{1}{l}R_{1} \\
& \teb{ since  $R_{0,C^{y'}_k}=0, R_k=0$ by Def. of $g_j$ (see Sec. \ref{sec:zerorec})} \label{eq:rectot}
}
In the derivation we use repeatedly Eq. \ref{eq:rectot} and Theorems \ref{thm:R2}, \ref{thm:R1} and \ref{thm:R0}.
\eq{
&\text{Assume: } 1/l< f<c/l \\
&R= (1-\frac{1}{2l})O(k^2/l^2)+\frac{1}{l}O(k^4/l^2) \\
&= O(k^2/l^2+k^4/l^3)=O(k^2/l^2) \\
&\text{Assume: } c/l< f<1 \\
&R=(1-\frac{1}{2l})O(k^2/l^2)+\frac{1}{l}\Omega(k)  \\
&=\Omega(k/l)\\
&\text{Assume: } 1\leq f \leq \frac{\log l -1}{k} \\
&R= O(1/l) \\
&\text{Assume: } f > \frac{\log l -1}{k} \\
&R= \Theta(k^2/l)  
}
\end{proof}

\begin{cor}
The reconstruction error for $g_j$ decreases from $R^{(0)}=\Omega(k^2/l)$ to $R^{(t)}=O(k^2/l^2)$ with $t \in [1,\Theta(\frac{2l(c_f-1)}{\lambda d(k+1)})]$ for an arbitrary constant $c_f>1$ and increases to $R^{(t\rightarrow\infty)}=\Omega(k^2/l)$.
\end{cor}

\begin{proof}
The claims for $R^{(0)}$ and $R^{(t\rightarrow\infty)}$ stem directly from Theorem \ref{le:recBInit}.
For $t\geq 1$, based on Theorem \ref{le:rec4}  we have $R=O(k^2/l^2)$ for $1/l< f<c_f/l$. Using Theorem \ref{thm:boundS} we have that for $f\in[1/l,\log(l)]$ in $t \in \Theta(\frac{2l^2(f-1/l)}{\lambda d(k+1)})$. Substitute $f=c_f/l$ yields $t =\Theta(\frac{2l(c_f-1)}{\lambda d(k+1)})$.
\end{proof}

\begin{thm}
For the approximation error of the reconstruction function $g_j$ of the optimal function $g^{opt}$ holds $||g_j-g^{opt}_j||<2$ .
\end{thm}

\begin{proof}
We show that any other linear reconstruction function leads to the same asymptotic error. We can obtain any reconstruction function by changing the height $h$ of a point $(q,h)$ of the two points  through which we fit $g_j$.  We show that altered points can reduce the most dominant part of the four error terms in the sum $R=R_{k}+R_{1}+R_{0,C^{y'}_1}+R_{0,C^{y'}_k}$ only up to a constant factor. Once two error terms are equal, reducing either of them further, increases the other, i.e., asymptotically the error cannot be further reduced. 
Assume we change $h=k$ in $(q(y|C^y_k),h=k)$. For $h=k$ there is no error by definition of $g_j$, since we fit $g_j$ using this point. Say we choose $h \notin [k-1/4,k+1/4]$ then for $g_j(q(y|C^y_k))$ and the error $R_k$ we get
\eq{
&g_j(q(y|C^y_k))=\frac{h}{q(y|C^y_k)-q(y|C^{y'\neq y}_k)}\cdot (q(y|C^y_k)-q(y|C^{y'\neq y}_k))=h \\
& \teb{ Using Eq \ref{eq:gj}}\\
& R_k= (k-h)^2 \geq 1/16 = O(1)
}
Thus, we cannot change the height $h$ by 1/4 or more, since otherwise $R_{k}$ is the (asymptotically) the dominant error. 
Furthermore, changing by 1/4 does not help in reducing other errors. Varying the height by $1/4$ has only a modest impact on the slope in $g_j$, i.e., $\frac{h}{q(y|C^y_k)-q(y|C^{y'\neq y}_k)}$. That is, the slope is changed by a factor of at most $7/4\leq (k\pm 1/4)/k\leq 9/4$ for $k\geq 2$. Changing the slope in this interval has no asymptotic impact on $R_1$ as can be seen by replacing $k$ with $ck$ for $c \in [7/4,9/4]$ in the equations used to compute $R_1$ for different ranges Eq. \ref{eq:reck0},\ref{eq:reck1},\ref{eq:reck3} and\ref{eq:recf2}.

Let us examine changing $h=0$ in the other point used for fitting $g_j$, i.e., $(q(y|C^{y'\neq y}_k,h=0)$. In an analogous manner, if we choose $h \notin [-1/l,1/l]$ the error $R_{0,C^{y'\neq y}_k}$ for samples $C^{y'\neq y}_k$ increases to $(0-1/l)^2=O(1/l^2)$. Thus, let us investigate using $(q(y|C^{y'\neq y}_k,s)$ for a value $s\in [-1/l,1/l]$. For ease of analysis, let us assume that we shift both points by that value, i.e. we add an offset $s$ to the line by $g_j$. That is, we use points  $(q(y|C^{y'\neq y}_k,s)$ and $(q(y|C^y_k),k+s)$. Note that we have already shown that changing the height of $(q(y|C^y_k),k)$ by $1\gg s$ has no impact, so we can certainly shift it by $s$ without any asymptotic change. Thus, we compute $R_1$ (Eq. \ref{eq:reck0},\ref{eq:reck1},\ref{eq:reck3} and\ref{eq:recf2}), when adding an offset of $s$.
\eq{
&R_{1}=(1-k(\frac{e-1}{e^k-1})-O(kf/l)+s )^2 \teb{ using Eq \ref{eq:reck1}}\\
&=\Omega(k)\\
&R_{1}=(1-  \Theta(k(1/\sqrt{l}+s))^2  \teb{ using Eq\ref{eq:reck3}}\\
&=O(1)\\
&R_{1}=(1- \frac{kc}{e^{f(k-1)}}-ks-O(k/l))^2 = O(1) \teb{ using Eq \ref{eq:recf2}}
}
Comparing these errors with shifting by $s\in [-1/l,1/l]$ to the original ones (Eq. \ref{eq:reck1},\ref{eq:reck3} and\ref{eq:recf2}), it becomes apparent that no asymptotic change took place. 

Still, we need to assess Eq. \ref{eq:reck0}. Note that $R_{0,C^{y'\neq y}_k}=(0-s)^2=O(s^2)$, while $R_1$ is:

\eq{
&R_{1}=(1-k(\frac{1}{k}+O(k/l)+s) )^2 \teb{ using Eq\ref{eq:reck0}}\\
&= (O(k/l)+ks)^2
}
Here, in principle $(O(k/l)+ks)$ could cancel, i.e. $R_1$ could become 0, while without shifting $R_1 = O(k^4/l^2)$ (see Eq. \ref{eq:reck0}). However, since $R_{0,C^{y'\neq y}_k}=O(s^2)$ reducing $R_1$ increases $R_{0,C^{y'\neq y}_k}=O(s^2)$ leading to no possibility for an asymptotic change.\footnote{As a side remark, in this interval for $f$, i.e., for $1/l<f<c_f/l$, smaller $R_1$ would not hamper our overall claim of the existence of phases, since here the error is shown to be smallest (among all phases) using $g_j$, thus, if it is even smaller for the optimal reconstruction using $g^{opt}$, the statement still hold on a qualitative level.}
\end{proof}


\subsection{Random initialization} \label{sec:ra}
The prior analysis assumed deterministic initialization. Next, we discuss the impact of random initialization. 
We show that outputs of a network with randomly initialized weights are concentrated around the mean. In particular, the initial noise for common initialization schemes, e.g. using  $w \sim N(1,1/d)$ (see Eq. \ref{ass:initw}) is much less than the changes done to weights during early training. Thus, the noise has some impact on reconstruction very early in training but gets less relevant as changes due to learning become dominant. A tail bound can be used to show that the sum of weights $S_{y,y'}$ is tightly concentrated. 

We use the next lemma to derive our tail bounds.

\begin{lem} \label{lem:gau}
For a Gaussian random variable $X \in N(0,\sigma)$ holds for $a>0$ that $e^{-(a+1)^2/(2\sigma^2)}\leq P(X\geq a) \leq  e^{-a^2/(2\sigma^2)}$
\end{lem}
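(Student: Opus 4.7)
The plan splits into the two tail estimates, which I would handle by two quite different techniques.

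\textbf{Upper bound.} I would apply the standard Chernoff/exponential-Markov argument. For any $t>0$, Markov's inequality applied to $e^{tX}$ gives
\[
P(X\geq a)=P(e^{tX}\geq e^{ta})\leq e^{-ta}\,E[e^{tX}]=e^{-ta+t^2\sigma^2/2},
\]
using the Gaussian moment generating function $E[e^{tX}]=e^{t^2\sigma^2/2}$ for $X\sim N(0,\sigma^2)$. The exponent is a quadratic in $t$ minimized at $t=a/\sigma^2$, and plugging this in collapses it to $-a^2/(2\sigma^2)$, which is exactly the claimed upper bound. Note that this derivation produces no prefactor, so it matches the stated bound without any further work.

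\textbf{Lower bound.} Here I would discard most of the tail and keep only a short unit interval, bounding the Gaussian density from below by its smallest value on that interval. Changing variables via $x=a+y$,
\[
P(X\geq a)=\int_0^\infty \tfrac{1}{\sqrt{2\pi}\,\sigma}e^{-(a+y)^2/(2\sigma^2)}\,dy \;\geq\; \int_0^1 \tfrac{1}{\sqrt{2\pi}\,\sigma}e^{-(a+y)^2/(2\sigma^2)}\,dy.
\]
On $[0,1]$ the integrand is smallest at $y=1$, so
\[
P(X\geq a)\;\geq\; \tfrac{1}{\sqrt{2\pi}\,\sigma}\,e^{-(a+1)^2/(2\sigma^2)}.
\]
This matches the stated inequality provided $1/(\sqrt{2\pi}\,\sigma)\geq 1$, i.e.\ $\sigma\leq 1/\sqrt{2\pi}$, which is comfortably the regime of the paper since weights are initialized with $\sigma^2=1/d$ and $d$ is assumed large.

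\textbf{Main obstacle.} The delicate point is the prefactor in the lower bound. Mills' ratio gives the sharp asymptotic $P(X\geq a)\sim (\sigma/(a\sqrt{2\pi}))\,e^{-a^2/(2\sigma^2)}$, so the factor $1/(\sqrt{2\pi}\,\sigma)$ cannot simply be absorbed for free in general. One must pay for it in one of three ways: restrict $\sigma$ to the small regime (as is implicitly the case here), widen the integration window from length $1$ to length $\sqrt{2\pi}\,\sigma$ (which replaces the ``$+1$'' in the exponent by ``$+\sqrt{2\pi}\,\sigma$''), or weaken the exponent slightly to soak up the constant. In the paper's setting the crude interval $[a,a+1]$ suffices; otherwise one of the other routes must be taken.
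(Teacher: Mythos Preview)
Your argument is correct. For the upper bound you take the Chernoff/MGF route, whereas the paper works directly on the integral: it shifts $x\mapsto x+a$, uses $(x+a)^2\ge x^2+a^2$ for $x\ge 0$, and factors out $e^{-a^2/(2\sigma^2)}$, even gaining an extra $\tfrac12$ from $\int_0^\infty \tfrac{1}{\sigma\sqrt{2\pi}}e^{-x^2/(2\sigma^2)}dx=\tfrac12$. Both are equally short; the paper's version is sharper by that constant, while yours extends verbatim to any sub-Gaussian variable. For the lower bound the paper attempts the mirror trick, replacing $(x+a)^2$ by $x^2+2a+a^2$ over the whole half-line and again invoking $\int_0^\infty\phi=\tfrac12$; but the inequality $2ax\le 2a$ fails for $x>1$, so as written that step is not valid (and even granting it, the conclusion carries a stray $\tfrac12$). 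Your restriction to the unit interval $[a,a+1]$ is precisely the repair, and you correctly flag that the resulting prefactor $1/(\sqrt{2\pi}\,\sigma)$ needs the small-$\sigma$ regime---which is the only regime in which the lemma is invoked, and indeed the only regime in which the stated lower bound is true.
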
 

\begin{proof}
We have $p(X=x)= \frac{1}{\sigma\sqrt{2\pi}}e^{-x^2/(2\sigma^2)}$ and
\eq{
P(X\geq a) &= \int_{a}^{\infty} \frac{1}{\sigma\sqrt{2\pi}}e^{-x^2/(2\sigma^2)} \\
= \int_{0}^{\infty} \frac{1}{\sigma\sqrt{2\pi}}e^{-(x+a)^2/(2\sigma^2)}
}
We lower and upper bound the last expression, we use the fact integral of $p(X=x)$ must by 1 (since $X$ is a distribution), i.e., 
\eq{
\int_{-\infty}^{\infty} \frac{1}{\sigma\sqrt{2\pi}}e^{-x^2/(2\sigma^2)}=1} and thus due to symmetry 
\eq{\int_{0}^{\infty} \frac{1}{\sigma\sqrt{2\pi}}e^{-x^2/(2\sigma^2)}=1/2}
Upper bound:
\eq{
P(X\geq a)\leq \int_{0}^{\infty} \frac{1}{\sigma\sqrt{2\pi}}e^{-(x^2+a^2)/(2\sigma^2)}
=1/2e^{-(a^2)/(2\sigma^2)}
}
Lower bound:
\eq{
&P(X\leq a)\leq \int_{0}^{\infty} \frac{1}{\sigma\sqrt{2\pi}}e^{-(x^2+2a+a^2)/(2\sigma^2)}\\
&=1/2e^{-(a^2+2a)/(2\sigma^2)}\\
&\geq 1/2e^{-(a+1)^2/(2\sigma^2)}
}
\end{proof}

\begin{thm} \label{thm:init}
After initialization, with probability $1-1/l^{2}$ holds for all classes $y$ that $S_{y,y'} \in [-a,a]$ with $a:=-2\sqrt{\log{l}}/l$
\end{thm}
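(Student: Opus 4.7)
The plan is to exploit the fact that each sum $S_{y,y'}=\sum_{i\in A_y} w_{y',i}$ is a linear combination of independent Gaussians and is therefore itself Gaussian. Under the random initialization of Assumption \ref{ass:initw}, each $w_{y',i}\sim N(0,1/d)$, and $|A_y|=d/l$, so
\[
S_{y,y'}\sim N\!\left(0,\tfrac{d}{l}\cdot\tfrac{1}{d}\right)=N(0,1/l),
\]
i.e.\ the standard deviation is $\sigma=1/\sqrt{l}$. This is the only probabilistic input needed; the rest is a tail bound plus a union bound.

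First, I would apply the upper tail bound of Lemma \ref{lem:gau} with this $\sigma$ to obtain
\[
P(S_{y,y'}\ge a)\;\le\; e^{-a^2/(2\sigma^2)}\;=\;e^{-la^2/2},
\]
and, by symmetry of the Gaussian, $P(|S_{y,y'}|\ge a)\le 2e^{-la^2/2}$. Next, since the statement is uniform over all classes, I would union-bound over the (at most) $l^2$ index pairs $(y,y')$: the probability that \emph{some} $|S_{y,y'}|$ exceeds $a$ is at most $2l^2\,e^{-la^2/2}$.

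Finally, I would choose $a$ to make this union bound at most $1/l^{2}$. Solving $2l^2 e^{-la^2/2}\le l^{-2}$ yields $a^2\ge (8\log l+2\log 2)/l$, so $a=2\sqrt{\log l/l}$ (up to an absolute constant swallowed in $\log 2$) suffices — i.e.\ the stated $a$, reading the paper's $-2\sqrt{\log l}/l$ as a typo for $2\sqrt{\log l/l}$. Taking the complementary event gives the claim with probability at least $1-1/l^2$.

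The argument is essentially mechanical once one observes the Gaussian structure of $S_{y,y'}$; the only mild obstacle is keeping track of where the factor $1/\sqrt{l}$ in the standard deviation comes from (namely, the combination $|A_y|=d/l$ Gaussians each of variance $1/d$) and then checking that the exponent $la^2/2$ dominates the union-bound factor $\log(2l^2)$ with the chosen $a$. No other non-trivial step is needed.
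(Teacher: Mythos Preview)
Your proposal is correct and follows essentially the same approach as the paper: identify $S_{y,y'}$ as a sum of $d/l$ independent $N(0,1/d)$ Gaussians so that $\mathrm{Var}(S_{y,y'})=1/l$, apply the tail bound of Lemma~\ref{lem:gau}, and finish with a union bound over classes. The only cosmetic differences are that the paper union-bounds over $l$ class indices rather than all $l^2$ pairs, and plugs in $a=2\sqrt{\log l}/l$ directly (your reinterpretation $a=2\sqrt{\log l/l}$ is in fact the value consistent with $\sigma=1/\sqrt{l}$).
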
 

\begin{proof}
We first show that $S_y$ is concentrated around its mean by deriving a distribution of $S_y$ being the sum of (products of) random variables and using tail bounds. Initially, the term $S_{y}:=\sum_{i \in A_{y}} w_i$ is a sum of $|A_y|=d/l$ independent normally distributed variables $w_i \sim N(0,1/d)$ (Def. \ref{ass:initw})
Thus, $S_y$ is also normally distributed with variance (due to linearity of the variance):
\eq{Var(S_y)=Var(\sum_{i \in A_{y}} w_i)=\sum_{i \in A_{y}} Var(w_i)=d/l\cdot 1/d=1/l}
Thus, $S_y \sim N(0,1/l)$.
We use the tail bounds from Lemma \ref{lem:gau} $p(S_{y} \geq a) \leq e^{a^2/(2\sigma^2)}/2$  with $a=2\sqrt{\log{l}}/l$ yielding
\eq{p(S_{y} \geq a) \leq e^{-(4\log l)/l^2/(1/l)^2}/2=1/(2l^4)}
The probability that the bound on $S_y$ holds for all $l$ classes, i.e., for all $y\in [0,l-1]$, is \eq{(1-1/(l^4))^l \leq 1-1/l^{2}}
\end{proof}

Based on the theorem, initially probabilities are tightly concentrated around $1/l$ similar to the deterministic case. We can see this by looking at the initial variance of $S^{(0)}_y$:
$Var(e^{S^{(0)}_y}) \approx Var(e^{w^{(0)}_{y,i}})^{d/l}$.
Since $w^{(0)}_{y,i}$ is normally distributed, $e^{w^{(0)}_{y,i}}$ has a log-normal distribution with variance $(e^{\sigma^2}-1)e^{2\mu + \sigma^2}=e^{2\sigma^2}-1$. Since $\sigma=1/d$ is close to 0 (as $d$ is large), $e^{2\sigma^2}$ will be close to 1 and the variance will be small, i.e., close to 0. Given the concentration of sum of weights, the changes to weights are already early on of similar magnitude for random and deterministic initialization.

\begin{figure}[!htb]
\centering{\centerline{\includegraphics[width=0.5\textwidth]{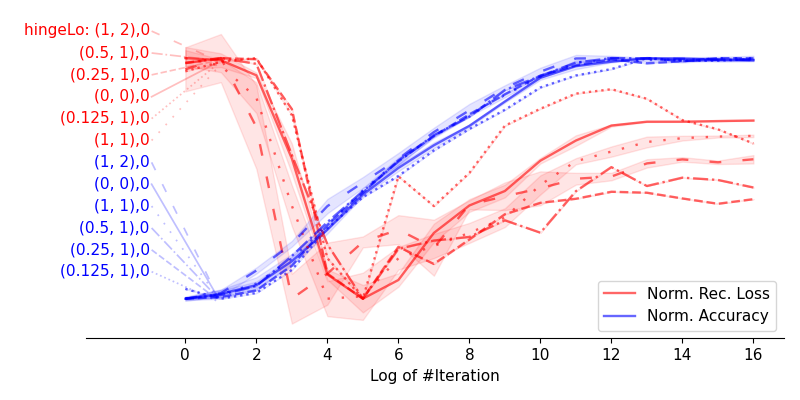}} 
\caption{Hinge Loss }\label{fig:hinge}
}
\end{figure}

\begin{figure}[!htb]
\centering{\centerline{\includegraphics[width=0.5\textwidth]{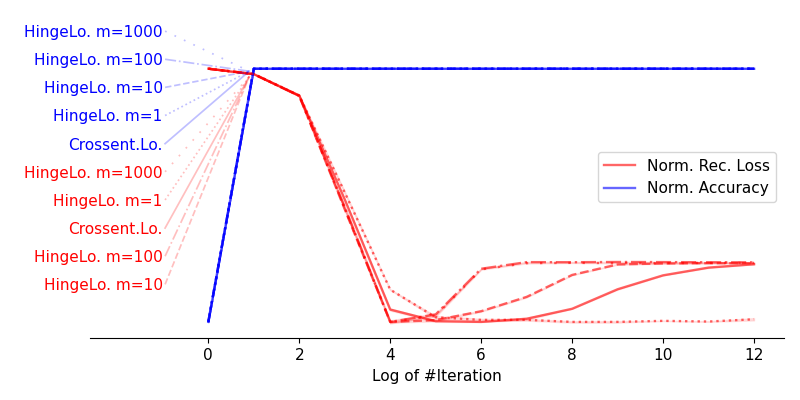}} 
\caption{Hinge Loss for linear layer network}\label{fig:linhinge}
}
\end{figure}

\subsection{Hinge losses}
See Figures \ref{fig:linhinge} and \ref{fig:hinge}.

\subsection{Other datasets and classifiers showing existence of phases} 
In Figures \ref{fig:metMuFa3},\ref{fig:metMuFa4}, \ref{fig:metMuFa5} and \ref{fig:metMuFa6} we show additional combinations not in the main paper for classifiers, i.e., F0, VGG-16, ResNet-10, and datasets, i.e., CIFAR-10 and CIFAR-100, and FashionMNIST. 

\begin{figure}
  \centering  
  \includegraphics[width=0.45\textwidth]{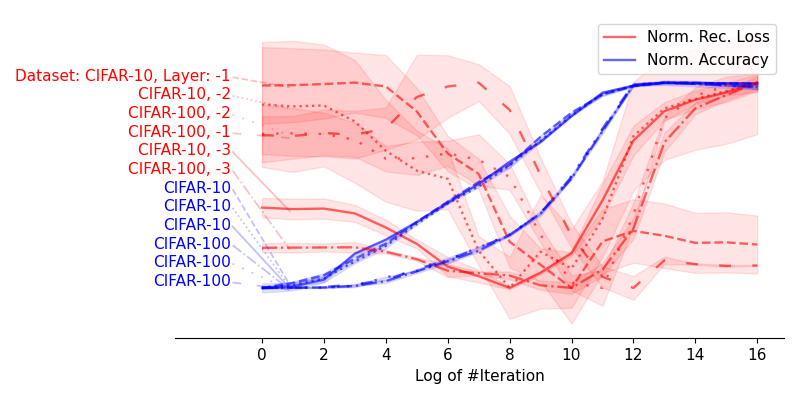} 
  \caption{Normalized accuracy and reconstruction loss for the CIFAR-10 and CIFAR-100 datasets for ResNet-10} \label{fig:metMuFa3}
\end{figure}

\begin{figure}
  \centering  
  \includegraphics[width=0.45\textwidth]{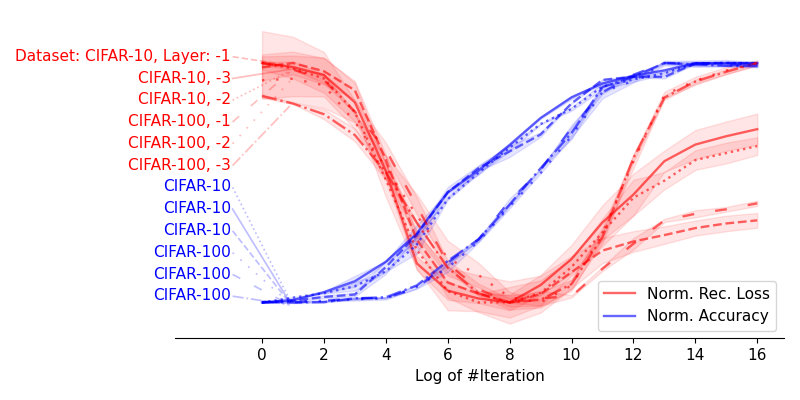} 
  \caption{Normalized accuracy and reconstruction loss for multiple classifiers for the CIFAR-10 and CIFAR-100 datasets for VGG-16} \label{fig:metMuFa4}
\end{figure}

\begin{figure}
  \centering  
  \includegraphics[width=0.45\textwidth]{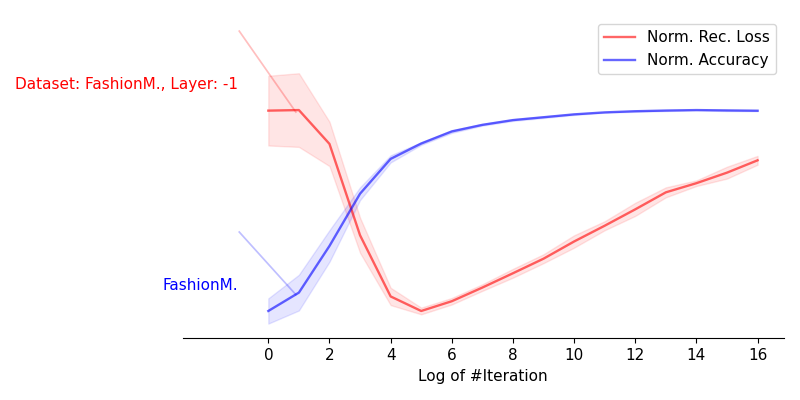} 
  \caption{Normalized accuracy and reconstruction loss  for the FashionMNIST dataset for VGG-16} \label{fig:metMuFa5}
\end{figure}

\begin{figure}
  \centering  
  \includegraphics[width=0.45\textwidth]{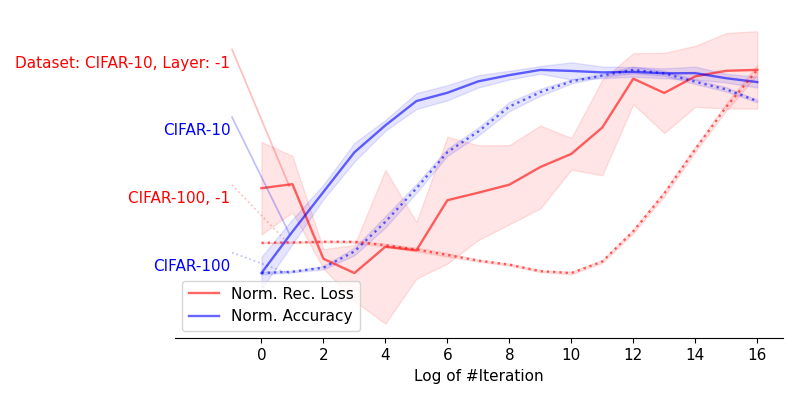} 
  \caption{Normalized accuracy and reconstruction loss  for the CIFAR-10 and CIFAR-100 dataset for classifier F0} \label{fig:metMuFa6}
\end{figure}

\subsection{Other datasets and classifiers showing transfer learning}
Figures \ref{fig:trans3} to \ref{fig:trans14} show additional experiments showing that performance on downstream tasks tends to be optimal when the pre-training of a classifier has not yet converged, i.e., its predictive performance could still be significantly improved through further training. This observation is well-visible for most almost all tasks, i.e., for 10 out of 12 experiments (Note, the two plots from the main manuscript are not shown again). The only exception is MNIST to FashionMNIST, here the effect is only very weak for both Resnet and VGG. This could be since MNIST is a very simple dataset, where not much information is discarded in the later phases.

\begin{figure}[!htb]
\centering{\centerline{\includegraphics[width=0.45\textwidth]{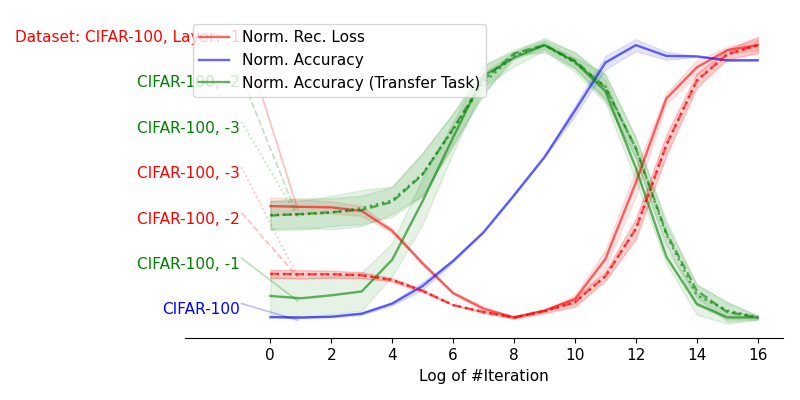}}} 
\vspace{-6pt}
\caption{Transfer Learning of VGG to predict Cifar-100 classes to predict the most dominant color channel. Lines are normalized.}\label{fig:trans3}
\vspace{-6pt}
\end{figure}

\begin{figure}[!htb]
\centering{\centerline{\includegraphics[width=0.45\textwidth]{figs/trans2/net-ds-layInd_R10Ci10010Norm1TransTask0.png}}} 
\vspace{-6pt}
\caption{Transfer Learning of Resnet to predict Cifar-100 classes to predict the most dominant color channel. Lines are normalized.}\label{fig:trans4}
\vspace{-6pt}
\end{figure}

\begin{figure}[!htb]
\centering{\centerline{\includegraphics[width=0.45\textwidth]{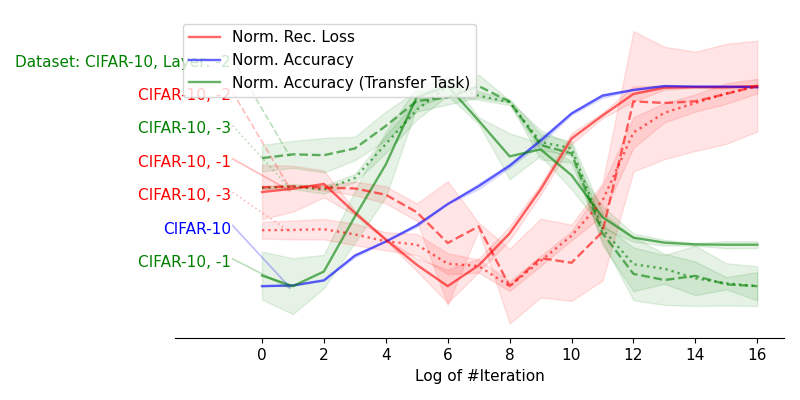}}} 
\vspace{-6pt}
\caption{Transfer Learning of Resnet to predict Cifar-10 classes to predict the most dominant color channel. Lines are normalized.}\label{fig:trans5}
\vspace{-6pt}
\end{figure}


  \begin{figure}[!htb]
    \centering{\centerline{\includegraphics[width=0.45\textwidth]{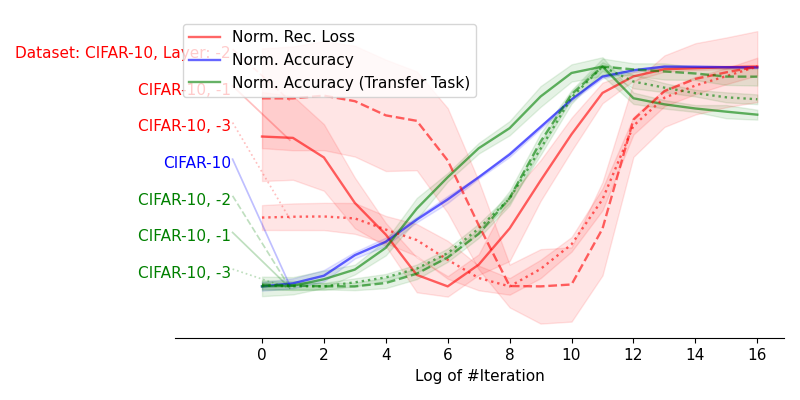}}} \vspace{-6pt}
    \caption{Transfer Learning of a Resnet from Cifar-10 to Cifar-100. Lines are normalized.}\label{fig:trans8}  \vspace{-6pt}\end{figure}

  \begin{figure}[!htb]
    \centering{\centerline{\includegraphics[width=0.45\textwidth]{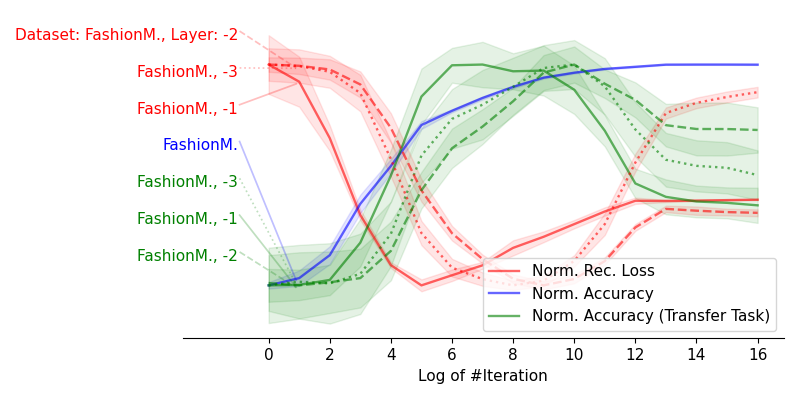}}} \vspace{-6pt}
    \caption{Transfer Learning of a Resnet from FashionMNIST to MNIST. Lines are normalized.}\label{fig:trans9}  \vspace{-6pt}\end{figure}

  \begin{figure}[!htb]
    \centering{\centerline{\includegraphics[width=0.45\textwidth]{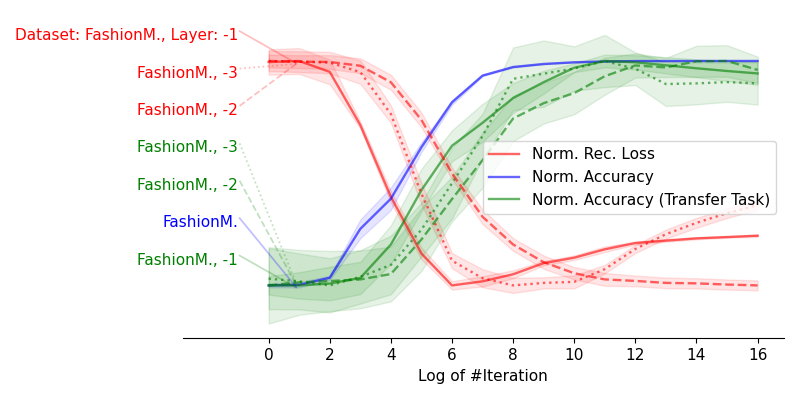}}} \vspace{-6pt}
    \caption{Transfer Learning of a Resnet from MNIST to FashionMNIST. Lines are normalized.}\label{fig:trans10}  \vspace{-6pt}\end{figure}

  \begin{figure}[!htb]
    \centering{\centerline{\includegraphics[width=0.45\textwidth]{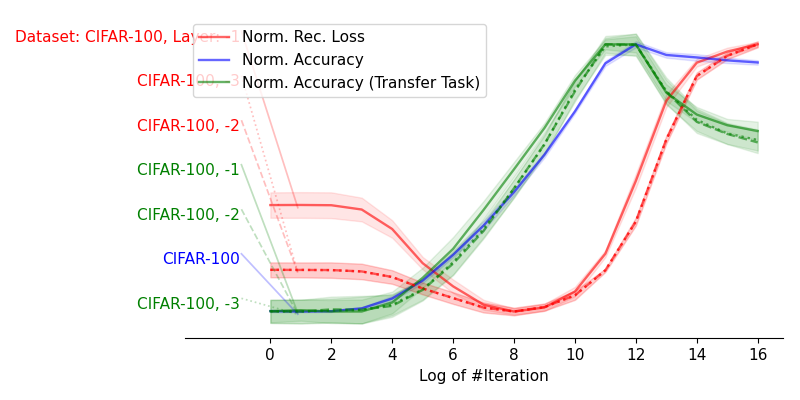}}} \vspace{-6pt}
    \caption{Transfer Learning of a VGG from Cifar-100 to Cifar-10. Lines are normalized.}\label{fig:trans11}  \vspace{-6pt}\end{figure}

  \begin{figure}[!htb]
    \centering{\centerline{\includegraphics[width=0.45\textwidth]{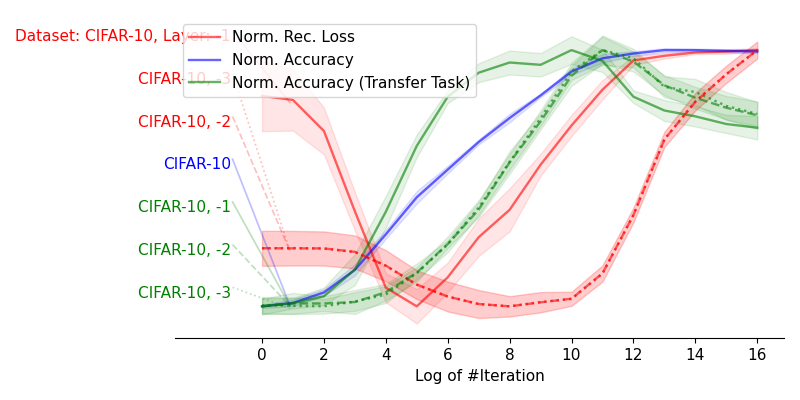}}} \vspace{-6pt}
    \caption{Transfer Learning of a VGG from Cifar-10 to Cifar-100. Lines are normalized.}\label{fig:trans12}  \vspace{-6pt}\end{figure}

  \begin{figure}[!htb]
    \centering{\centerline{\includegraphics[width=0.45\textwidth]{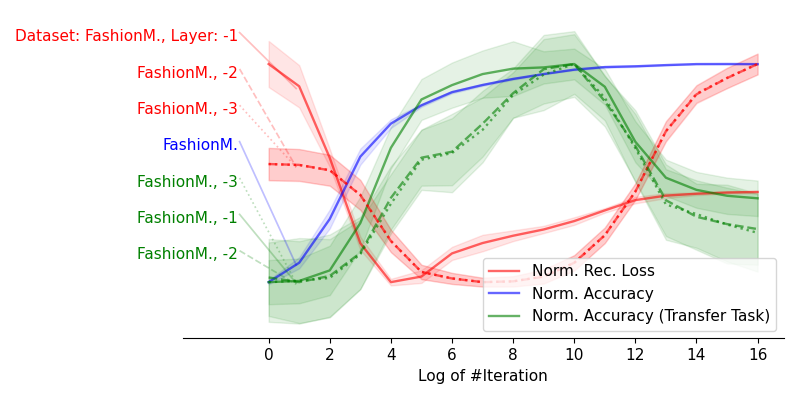}}} \vspace{-6pt}
    \caption{Transfer Learning of a VGG from FashionMNIST to MNIST. Lines are normalized.}\label{fig:trans13}  \vspace{-6pt}\end{figure}

  \begin{figure}[!htb]
    \centering{\centerline{\includegraphics[width=0.45\textwidth]{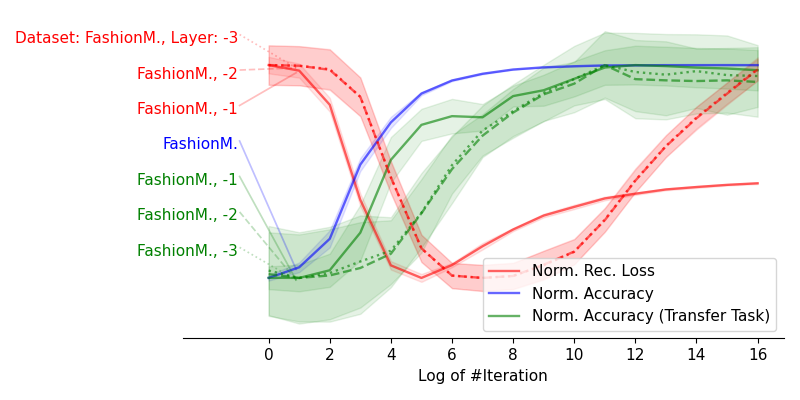}}} \vspace{-6pt}
    \caption{Transfer Learning of a VGG from MNIST to FashionMNIST. Lines are normalized.}\label{fig:trans14}  \vspace{-6pt}\end{figure}
\fi

\end{document}


\bibliography{refs}

\section{Appendix}
\subsection{Equal output per class} \label{sec:sym}
We aim to show that sums of weights are identical across classes $S_{y,y'}=S_{y'',y'''}$ given that either ($y=y'$ and $y''=y'''$) or ($y\neq y'$ and $y''\neq y'''$). On a high level this follows since, initially, all weights are equal and the samples of each class are equal aside from symmetry.

Furthermore, not only are weights within $j \in A_y$ are identical, but also sum of weights of different classes for an iteration. By definition samples of all classes are symmetric, classes are balanced and also all weights are identically initialized. Therefore, the change of a weight $w_{y',j}$, when we sum across all samples $(x,y) \in D$ becomes based on Eq. \ref{eq:dLy}  for a fixed $j$:
\eq{
&\sum_{x,y \in D}  \frac{\partial L(x,y)}{\partial w_{y',j}}\\
&=\sum_{x,y \in C^y|j \in A_y} \frac{\partial L(x,y)}{\partial w_{y',j}}+\sum_{x,y \in C^y|j \nin A_y} \frac{\partial L(x,y)}{\partial w_{y',j}}\\
&\teb{\phantom{a}}+\sum_{x,y \in D\setminus C^y} \frac{\partial L(x,y)}{\partial w_{y',j}}\\
&=\sum_{x,y \in C^y|j \in A_y} \frac{\partial L(x,y)}{\partial w_{y',j}} \teb{other terms are 0 (Eq \ref{eq:dLy})}\\
&=\sum_{v \in \{1,k\}} \sum_{x \in C^{y}_v|j \in A_y} \frac{\partial L(x,y)}{\partial w_{y',j}}\\
&=\sum_{v \in \{1,k\}} \sum_{x \in C^{y}_v|j \in A_y} \frac{\partial L(x,y)}{\partial w_{y',j}}\\
&=\sum_{v \in \{1,k\}} \sum_{x \in C^{y}_v|j \in A_y}  \frac{\partial L(x,y)}{do_{y'}} \cdot \frac{do_{y'}}{\partial w_{{y'},j}} \\
&=\sum_{v \in \{1,k\}} \sum_{x \in C^{y}_v|j \in A_y}  (q_{y'|x}-\vmathbb{1}_{y'=y})\cdot x_j\\
&=\sum_{v \in \{1,k\}} \sum_{x \in C^{y}_v}  (q_{y'|C^{y}_v}-\vmathbb{1}_{y'=y})\cdot v\\
&=\sum_{v \in \{1,k\}} |C^{y'}_v|  (q_{y'|C^{y}_v}-\vmathbb{1}_{y'=y})\cdot v\\
&=\sum_{v \in \{1,k\}} |\frac{n}{2l}|  (q_{y'|C^{y}_v}-\vmathbb{1}_{y'=y})\cdot v \label{eq:dlsum}
}

Thus, we need to show that for every two pair of classes $(y,y')$ and $(y'',y''')$ with (a) ($y=y'$ and $y''=y'''$) and (b) ($y\neq y'$ and $y''\neq y'''$), we have $w_{y',j \in A_y}=w_{y''',j \in A_{y''}}$ and $q(y'|C^{y}_v)=q(y'''|C^{y''}_v)$.

The proof is by induction. For the base case of the induction, i.e., the initial condition $t=0$, this follows since all weights are initialized to the same value $1/d$ and using Eq \ref{eq:soft}, i.e., for Softmax we have:
\eq{
q(y'|C^{y}_v)&=\frac{e^{o_{y'(C^y_v)}}}{\sum_{y''<l}e^{o_{y''}(C^y_v)}} \teb{ using Eq. \ref{def:q} } \\
&=\frac{e^{v\cdot S_{y,y'}}}{\sum_{y''<l}e^{v\cdot S_{y,y''}}} \teb{ using Eq. \ref{eq:oy}} \\
&=\frac{e^{v\cdot \frac{d}{l} w_{y',i \in A_y}}}{\sum_{y''<l}e^{v\frac{d}{l} w_{y'',i \in A_y}}} \teb{ using Eq. \ref{eq:sums} }  \label{eq:soft}\\
}

For the induction step from $t$ to $t+1$, we consider the update of a weight $w_{y',i}$ based on Eq. \ref{eq:grad} and the assumption  which holds for $t=0$ (Eq. \ref{eq:oy}), i.e.,
\eq{&
q^{(t)}(y|C^y_v)=q^{(t)}(y''|C^{y''}_v) \label{eq:symq}\\
& q^{(t)}(y'|C^y_v)=q^{(t)}(y'''|C^{y''}_v) \teb{ for $y\neq y'$ and $y''\neq y'''$} 
} 
Thus, for the gradient we have using Eq \ref{eq:dlsum}:
\eq{
&\sum_{v \in \{1,k\}} \sum_{x \in C^{y}_v} \frac{\partial L(x,y)}{\partial w_{y',j \in A_y}}\\
&=\sum_{v \in \{1,k\}} |\frac{n}{2l}|  (q_{y'|C^{y}_v}-\vmathbb{1}_{y'=y})\cdot v\\
&=\sum_{v \in \{1,k\}} |\frac{n}{2l}|  (q_{y'''|C^{y''}_v}-\vmathbb{1}_{y'''=y''})\cdot v \\
& \teb{ Using Induction assumption Eq \ref{eq:symq}}\\
&=\sum_{v \in \{1,k\}} \sum_{x \in C^{y'''}_v} \frac{\partial L(x,y''')}{\partial w_{y'',j \in A_{y'''}}}
}
Using Equation \ref{eq:grad} we get:
\eq{
w^{(t+1)}_{y,j}&=w^{(t)}_{y,j}-\frac{\lambda}{|D|} \sum_{(x,y') \in D}  \nabla_{w_{y,j}} L(x,y') \\
&=w^{(t)}_{y,j}-\frac{\lambda}{|D|} \sum_{v \in \{1,k\}} \sum_{x \in C^{y}_v} \frac{\partial L(x,y)}{\partial w_{y',j \in A_y}} \\
&=w^{(t)}_{y'',j}-\frac{\lambda}{|D|} \sum_{v \in \{1,k\}} \sum_{x \in C^{y}_v} \frac{\partial L(x,y)}{\partial w_{y'',j \in A_y}} \\
&=w^{(t+1)}_{y'',j}
}

\subsection{Reconstruction error for fitted points} \label{sec:zerorec}
For reconstructing attribute $j$, we fit a line $g_j$ without error through two points $(q(y|C^{y'\neq y}_k),0)$ and $(q(y|C^y_k),k)$ as illustrated in Figure \ref{fig:linvis}. Thus, it follows from the definition of $g_j$ that the reconstruction errors $R_k$ (i.e., for $x_j=k$ and $x \in  C^y_k$) and  $R_{0,C^{y'}_k}$ (i.e., for $x_j=0$ and $x \in C^{y'\neq y}_k)$ are both zero. We show this formally below.

The reconstruction error for $x_j=k$, i.e., $R_{k}$  is
\eq{
&R_{k}=\frac{1}{l}\cdot\bigl(k-  \frac{k}{q(y|C^y_k)-q(y|C^{y'\neq y}_k}) \cdot (q(y|C^y_k)-q(y|C^{y'\neq y}_k))\bigr)^2\\
&=\frac{1}{l}\cdot(k-  k)^2=0
}

Next, we compute the reconstruction error $R_{0,C^{y'}_k}$ for $x_i=0$. We reconstruct $x_i=0$ perfectly for $x \in C^{y'\neq y}_k$:
\eq{
&R_{0,C^{y'}_k}=(1-2/l)\cdot\bigl(0-  \frac{k}{q(y|C^{y'\neq y}_k)-q(y|C^{y'\neq y}_k)} \\
&\teb{ } \cdot (q(y|C^{y'\neq y}_k)-q(y|C^{y'\neq y}_k))\bigr)^2\\
&=(1-2/l)\cdot\bigl(0-  \frac{k}{q(y|C^y_k)-q(y|C^{y'\neq y}_k} \cdot 0\bigr)^2=0
}

\subsection{Series expansions} \label{sec:series}
\eq{
&e^{f+ku}- e^{(k+1)u}=1+(f+ku)+O((f+ku)^2)\\
&\teb{\phantom{ab} }-(1+(k+1)u+O(((k+1)u)^2))\\
&=(f-u)+O_1 \label{eq:efkueku}\\
& \teb{ with } O_1:=O((f+ku)^2-((k+1)u)^2)\\
& \teb{ \phantom{abcd  } }=O((f-u)(f+2ku+u)) \label{eq:O1}
}
\eq{
&e^{fk+u}-e^{f+ku}\\
&=1+(fk+u)+O((fk+u)^2)\\
&-(1+(f+ku)+O((f+ku)^2)) \\
&=(f(k-1)-(k-1)u)\\
&+ O((fk+u)^2)-O((f+ku)^2)) \\
&=(k-1)(f-u)+O_2 \label{eq:efkuefku}\\
& \teb{ with } O_2:=O((k^2-1)(f-u)(f+u))  \label{eq:O2}
}
We also derive series expansions for the same three terms under the premise that $u \in O(f/l)$, i.e., $u$ is much smaller than $f$, and $f\in[1/l,1]$. This essentially allows us to discard terms with $u$ to obtain an error bound of $O(f/l)$. The results follow from direct application of Eq \ref{eq:ser}.
\eq{
&e^{fk}-e^{ku}=e^{fk}-1+O(f/l) \label{eq:fkku2}\\
&e^{f+ku}- e^{(k+1)u}=e^{ku}(e^{f}- e^{u})\\
&= (1+ O(f/l)) (e^{f}- 1+O(f/l))\\
&= e^{f}- 1+O(f/l) \label{eq:fkuku2}\\
&e^{fk+u}-e^{f+ku}=e^u\cdot e^{fk}-e^{ku}\cdot e^{f} \\
&=(1+O(f/l)\cdot e^{fk}-(1+O(f/l)\cdot e^{f} \\
&=e^{fk}-e^{f}+O(f/l)\label{eq:fkufku}
}
\eq{
&\frac{1}{e^f+le^{fu}} \\
&=\frac{1}{O(1)+le^{fu}}\\
&=\frac{1}{O(1)+l(1+O(1/l))}\\
&=\frac{1}{l+O(1)}\label{eq:fraffu}
}

Finally, we also as an expansion that holds as long as $f\in[1/l,1]$.
\eq{
&e^{fu}=1+O(f/l) \label{eq:efu}\\
}

\subsection{Proof for Theorems \ref{eq:qinit} and \ref{thm:boundS}} \label{sec:qbproofs}
\begin{thm} 
Initially, $q^{(0)}(y'|x)=1/l$ for any sample $(x,y) \in D$ and any class $y'$.
\end{thm} 
\begin{proof}
\eq{
&q^{(0)}(y''|C^y_v)=\frac{e^{o^{(0)}_{y''}}}{\sum_{y'<l}e^{o^{(0)}_{y'}}}  \teb{using Eq. \ref{def:q}}\\
&= \frac{e^{v\cdot S^{(0)}_{y,y''}}}{\sum_{y'<l}e^{v\cdot S^{(0)}_{y,y'}}} \teb{using Eq. \ref{eq:oy} }\\
&= \frac{e^{v/l}}{l\cdot e^{v/l}} =\frac{1}{l} \\
&\teb{using $S^{(0)}_{y,y'''}=d/l\cdot 1/d=1/l$ (Eq. \ref{eq:sums} and $w^{(0)}_{i,j}=1/d$  Ass.\ref{ass:initw})}
}
\end{proof}

\begin{thm}
It holds that $S^{(t-1)}_{y,y}\leq f$ and $S^{(t)}_{y,y}\geq f$  for $f\in[1/l,\log(l)]$ and $S^{(t)}_{y,y'\neq y} = \frac{1-O(f-1/l)}{l}$ for $t \in  \Theta(\frac{2l^2(f-1/l)}{\lambda d(k+1)})$.
\end{thm} 

\begin{proof} 
Based on the definition of an asymptotically tight bound $\Theta$, we need to show the existence of constants $c_0$ and $c_1$ so that the claims about $S^{(t)}_{y,y}$ and $S^{(t)}_{y,y'\neq y}$ hold for some $t\in [c_0 t^*,c_1 t^*]$ with $t^*=\frac{2l^2(f-1/l)}{\lambda d(k+1)}$. To this end, we first lower and upper bound the change $dS^{(t)}_{y,y}$,  $dS^{(t)}_{y,y'\neq y}$ and the softmax outputs $q^{(t)}$ for  $t\in [0,c_1 t^*]$. Then, we show that the bounds on $S^{(t)}_{y,y}$ follow.

Initial all $q^{(0)}=1/l$ (Thm. \ref{eq:qinit}).

We assume (and later show) that $q^{(t)}(y|C^y_1) \in [0,2/3]$ for $t\leq c_1 t^*$.  

To lower bound the number of iterations, i.e., obtain $c_0 t^*$, we upper bound the change $dS^{(t)}_{y,y}$ (see Eq. \ref{eq:dSy}) and lower bound $dS^{(t)}_{y,y'\neq y}$ for any $t\leq c_1 t^*$. We have the following bounds: 
\eq{ 
dS^{(t)}_{y,y}&>-z\\
dS^{(t)}_{y,y'\neq y}&< z/l \label{eq:lowdsy}\\
&\teb{ with } z:=\frac{\lambda d(k+1)}{2l^2} 
}

The bounds follow by plugging into Eq. \ref{eq:dSy} the values for $q^{(t)}(y'|C^y_v)$ for $v\in\{1,k\}$ with $y=y'$ and $y\neq y'$.
Consider $y\neq y'$: Since $q^{(t)}(y'\neq y|C^y_v)$ decreases with $t$ and, in turn, $dS_{y,y'\neq y}^{(t)}$, we use the initial value to bound $q^{(t)}(y'\neq y|C^y_v)\leq q^{(0)}(y'\neq y|C^y_v)=1/l$ (Eq. \ref{eq:qinit}).
Consider $y= y'$: Since $q^{(t)}(y|C^y_v)$ increases from $t=0$ with growing $t$, which decreases  $dS^{(t)}_{y,y}$, we lower bound $q^{(c_0t^*)}(C^y_v)\geq 0$ (i.e., by definition the output of the softmax cannot be less than 0).

We bound the number of iterations to change $S^{(0)}_{y,y}$ by $S^{(c_0t^*)}_{y,y}-S^{(0)}_{y,y}\leq f -1/l$. Note, $S^{(t)}_{y,y}$ increases over time, thus, $f\geq 1/l$.

The number of iterations $c_0t^*$ is lower bounded by
\eq{
&\frac{f -1/l}{z}=\frac{f -1/l}{\lambda\frac{d(k+1)}{2l^2}}=\\ &\frac{2l^2(f-1/l)}{\lambda d(k+1)}\\
&=2\cdot t^* \teb{ that is } c_0:=2 \label{eq:anac0}
}

\bigskip

To upper bound the number of iterations, i.e., obtain $c_1 t^*$, we proceed analogously.
Since $q^{(0)}(y|C^y_v)$ increases, which decreases $dS^{(t)}_{y,y}$, we use $dS^{(c_1t^*)}_{y,y}$ to lower bound, and in turn, $q^{(c_1t^*)}(y|C^y_1)\leq 2/3$ and $q^{(c_1t^*)}(y|C^y_k)=1$, i.e., that samples $(x,y) \in C^y_k$ are classified without loss. Thus, only samples in $C^y_1$ are assumed to contribute to the change. To lower bound $q^{(c_1t^*)}(y'\neq y | C^y_v)$ we use 0, i.e. there is no change for $dS_{y,y'\neq y}$. 
\eq{ 
 &dS_{y,y}&<-z\cdot 1/6 \\
 &dS_{y,y'\neq y}&\geq 0 
}
This yields analogously as for $c_0$ (Eq. \ref{eq:anac0}) $c_1:=12$.
Thus, we have determined constants $c_0,c_1$ for $t^*$ for bounding $S^{(t)}_{y,y}$. Next, we show that $S^{(t)}_{y,y'\neq y} =1/l-O(\frac{f-1/l}{l})$. To bound the maximal change of $S^{(t)}_{y,y'\neq y}$ we assume $c_1t^*$ iterations and an upper bound on the maximal change $dS_{y'\neq y}<z/l$ (Eq. \ref{eq:lowdsy}).
\eq{
&S^{(t)}_{y,y'\neq y}\geq S^{(0)}_{y,y'\neq y} -c_1t^*\cdot dS_{y,y'\neq y}\\
&= \frac{1}{l}-c_1\cdot \frac{2l^2(f-1/l)}{\lambda d(k+1)} \cdot \frac{\lambda d(k+1)}{2l^3}\\
&= \frac{1}{l}-c_1\cdot (f-1/l) \cdot \frac{1}{l}\\
&= \frac{1}{l}\cdot(1-O(f-1/l))
}
We also have that 
\eq{
&S^{(t)}_{y,y'\neq y}\leq S^{(0)}_{y,y'\neq y}=\frac{1}{l}
}

\bigskip

It remains to show that $q^{(c_1t^*)}(y| C^y_1)\leq 2/3$ for $(x,y)\in C^y_1$.
Using  Eq. \ref{def:q}:
\eq{
q^{(t)}(y|C^y_1)(x)&=\frac{e^{o^{(t)}_{y}}}{\sum_{y'<l}e^{o^{(t)}_{y'}}}\\
&=\frac{e^{S^{(t)}_{y,y}}}{e^{S^{(t)}_{y,y}} +\sum_{y'<l, y'\neq y}e^{S^{(t)}_{y,y'}}}\\
&=\frac{e^f}{e^f+ \sum_{y'<l, y'\neq y}e^{0}}
\teb{ using } S_{y,y}= f \\
&\text{ and the lower bound for }S_{y,y'\neq y}\geq 0\\ 
&=\frac{e^f}{e^f+ l-1} \\
&\leq \frac{l}{2l-1}  \teb{ using } f\leq \log(l)\\
&\leq \frac{2}{3}  \teb{ using } l\geq 2
}
\end{proof}

\subsection{Proofs for reconstruction errors $R_1$ and $R_{0,C^{y'\neq y}_1}=O(k^2/l^2)$}

\begin{thm}
The reconstruction error $R_1=O(k^4/l^2)$ for $1/l< f<c_f/l$ and $R_1=\Omega(k)$ for $c_f/l\leq f<1$  for an arbitrary constant $c_f>1$.
\end{thm}

\begin{proof}

The denominator of the slope in the reconstruction function $g_j$ (Eq. \ref{eq:gj}) is:
\eq{
&q(y|C^y_k)-q(y'\neq y|C^y_k) \label{eq:cyk}\\
&=\frac{e^{fk}}{e^{fk}+e^{ku}\cdot (l-1)} -\frac{e^{ku}}{e^{fk}+e^{ku}\cdot (l-1)} \\
&=\frac{e^{fk} -e^{ku}}{e^{fk}+e^{ku}\cdot (l-1)} 
}

First, we reconstruct $x_i=1$ using $q(y|C^y_1)$ and subtract the ``shift'' $q(y'\neq y|C^y_k)$ applied in $g_j$ (Eq. \ref{eq:gj}).
\eq{
&q(y|C^y_1)-q(y'\neq y|C^y_k) \label{eq:cy1}\\
&=\frac{e^{f}}{e^{f}+e^{u}\cdot (l-1)} -\frac{e^{ku}}{e^{fk}+e^{ku}\cdot (l-1)} \\
&=\frac{e^{f}(e^{fk}+e^{ku}\cdot (l-1))-e^{ku}(e^{f}+e^{u}\cdot (l-1))}{(e^{f}+e^{u}\cdot (l-1)) \cdot (e^{fk}+e^{ku}\cdot (l-1))} \\
&=\frac{e^{f(k+1)}+e^{f+ku}\cdot (l-1)-e^{ku+f}- e^{(k+1)u}\cdot (l-1)}{(e^{f}+e^{fu}\cdot (l-1)) \cdot (e^{fk}+e^{ku}\cdot (l-1))}\\
&=\frac{e^{f(k+1)}+e^{f+ku}\cdot (l-2)- e^{(k+1)u}\cdot (l-1)}{(e^{f}+e^{fu}\cdot (l-1)) \cdot (e^{fk}+e^{ku}\cdot (l-1))}
}

We compute the ratio of Eq. (\ref{eq:cyk}) and  (\ref{eq:cy1}). 
\eq{
&\frac{q(y|C^y_1)-q(y'\neq y|C^y_k)}{q(y|C^y_k)-q(y'\neq y|C^y_k)} \label{eq:rat1}\\
&=\frac{e^{f(k+1)}+e^{f+ku}\cdot (l-2)- e^{(k+1)u}\cdot (l-1)}{(e^{f}+e^{fu}\cdot (l-1)) \cdot (e^{fk}+e^{ku}\cdot (l-1))}\cdot\frac{e^{fk}+e^{ku}\cdot (l-1)}{e^{fk} -e^{ku}}\\
&=\frac{e^{f(k+1)}+e^{f+ku}\cdot (l-2)- e^{(k+1)u}\cdot (l-1)}{(e^{f}+e^{fu}\cdot (l-1)) \cdot (e^{fk} -e^{ku})}\\
&=\frac{e^{f(k+1)}+l(e^{f+ku}- e^{(k+1)u})+O(1)}{(e^{f}+e^{fu}\cdot (l-1)) \cdot (e^{fk} -e^{ku})} \label{eq:rat1fi}
}

We derive approximations for the ratio (Eq. \ref{eq:rat1}) distinguishing  different ranges of values  $f$. 
\eq{
&\text{Assume: } 1/l\leq f<1 \label{eq:rat1sf}\\
&\frac{q(y|C^y_1)-q(y'\neq y|C^y_k)}{q(y|C^y_k)-q(y'\neq y|C^y_k)}\\
&=\frac{e^{f(k+1)}+l(e^{f+ku}- e^{(k+1)u})+O(1)}{(e^{f}+e^{fu}\cdot (l-1)) \cdot (e^{fk} -e^{ku})} \teb{ see } Eq. (\ref{eq:rat1fi})\\
&=\frac{l(e^{f+ku}- e^{(k+1)u})}{(l+O(1)) \cdot (e^{fk} -e^{ku})}+O(1/l)\\
&\teb{using Eq. \ref{eq:fraffu}}\\
&=\frac{e^{f+ku}- e^{(k+1)u}}{(e^{fk} -e^{ku})}+O(1/l) \label{eq:lf}
}

Consider an arbitrary constant $c_f>1$.

\eq{
&\text{Assume: } 1/l\leq f<c_f/l \label{eq:rat1sfc}\\
&\frac{q(y|C^y_1)-q(y'\neq y|C^y_k)}{q(y|C^y_k)-q(y'\neq y|C^y_k)}\\
&=\frac{(f-u)+O_1}{k(f-u) +O_0}+O(1/l) \\
&\teb{ using } Eq (\ref{eq:lf}), (\ref{eq:efkeku})\text{ and }(\ref{eq:efkueku}) \nonumber\\
&=\frac{1+O_1/(f-u))}{k+O_0/(k(f-u))}+O(1/l)\\
&=\frac{1}{k}+O(k/l)\\
&\teb{using Eq. \ref{eq:O1} and \ref{eq:O0} giving: }  \nonumber \\
& O_1/(f-u)=O(k/l) \text { and } O_0/(k(f-u))= O(k/l) \nonumber
}

Next, we compute the reconstruction error $R_{1}$ based on Assumption \ref{eq:rat1sfc}.
\eq{
&R_{1}=(1-k(\frac{1}{k}+O(k/l)) )^2 \label{eq:reck0}\\
&=(O(k^2/l))^2=O(k^4/l^2)
}

\eq{
&\text{Assume: } c_f/l\leq f<1 \label{eq:rat1sfc3}\\
&\frac{q(y|C^y_1)-q(y'\neq y|C^y_k)}{q(y|C^y_k)-q(y'\neq y|C^y_k)}\\
&=\frac{e^{f+ku}- e^{(k+1)u}}{(e^{fk} -e^{ku})}+O(1/l) \teb{ using } Eq (\ref{eq:lf})\\
&=\frac{e^{f}- 1}{e^{fk} -1}+O(f/l) \teb{ using } Eq (\ref{eq:fkuku2}) and (\ref{eq:fkku2})\\
&=\frac{\sum_{i=1}^{\infty} f^i/i!}{\sum_{i=1}^{\infty} (fk)^i/i!}+O(f/l) \teb{ using } Eq. (\ref{eq:ser})\\
&=\frac{\sum_{i=1}^{\infty} 1/i!}{\sum_{i=1}^{\infty} k^i/i!}+O(f/l)\\
&=\frac{e-1}{e^k-1}+O(f/l)\\
}
Next, we compute the reconstruction error $R_{1}$ based on Assumption \ref{eq:rat1sfc3}.
\eq{
&R_{1}=(1-k(\frac{e-1}{e^k-1})-O(kf/l) )^2 \label{eq:reck1}\\
&=\Omega(k)\\
&\teb{ since $x(e-1)/(e^x-1)<1$ for $x>1$}
}

\eq{
&\text{Assume: } 1\leq f \label{eq:rat1sfc1}\\
&\frac{q(y|C^y_1)-q(y'\neq y|C^y_k)}{q(y|C^y_k)-q(y'\neq y|C^y_k)}\\
&=\frac{e^{f(k+1)}+l(e^{f+ku}- e^{(k+1)u})+O(1)}{(e^{f}+e^{fu}\cdot (l-1)) \cdot (e^{fk} -e^{ku})}\\
&=\frac{e^{f(k+1)}+l(e^{f+ku}- e^{(k+1)u})}{(e^{f}+e^{fu}\cdot l) \cdot (e^{fk} -e^{ku})} +O(1/(le^{fk}))\\
&=\frac{e^{f(k+1)}+l(e^{f+ku}- e^{(k+1)u})}{(e^{f}+l) \cdot (e^{fk} -e^{ku})} +O(1/(le^{fk}))\\
&=\frac{e^{f(k+1)}+l(e^{f}- 1))}{(e^{f}+l) \cdot (e^{fk} -1)} +O(1/(le^{fk}))\\
&=\frac{l(e^{f}- 1))}{(e^{f}+l) \cdot (e^{fk} -1)}+O(1/(le^{fk})) \\
&+\frac{e^{f(k+1)}}{(e^{f}+l) \cdot (e^{fk} -1)} \\
&=\frac{l(e^{f}- 1))}{(e^{f}+l) \cdot (e^{fk} -1)}+O(1/l) \teb{ using } e^{fk}<l/2\\
}
The reconstruction error $R_1$ becomes based on Ass. \ref{eq:rat1sfc1}.
\eq{
&R_{1}=(1-k\cdot (\frac{l(e^{f}- 1)}{(e^{f}+l) \cdot (e^{fk} -1)}+O(1/l)))^2\\
&R_{1}=(1-k\cdot (c\cdot \frac{le^{f}}{le^{fk}}+O(1/l)))^2 \\
&\teb{for a constant } 1/2<c<2\\
&R_{1}=(1- \frac{kc}{e^{f(k-1)}}-O(k/l))^2 = O(1) \label{eq:recf2} 
}

\end{proof}

\begin{thm}
The reconstruction error $R_{0,C^{y'\neq y}_1}=O(k^2/l^2)$ for $1/l< f$ 
\end{thm}

\begin{proof}

We first bound the difference between $q(y'\neq y|C^y_1)-q(y'\neq y|C^y_k)$.
\eq{
&q(y'\neq y|C^y_1)-q(y'\neq y|C^y_k) \label{eq:cy1n}\\
&=\frac{e^{u}}{e^{f}+e^{u}\cdot (l-1)}-\frac{e^{ku}}{e^{fk}+e^{ku}\cdot (l-1)} \\
&=\frac{e^{u}(e^{fk}+e^{ku}\cdot (l-1))-e^{ku}(e^{f}+e^{u}\cdot (l-1))}{(e^{fk}+e^{ku}\cdot (l-1)) \cdot (e^{f}+e^{u}\cdot (l-1))} \\
&=\frac{e^{u}e^{fk}-e^{ku}e^{f}}{(e^{fk}+e^{ku}\cdot (l-1)) \cdot (e^{f}+e^{u}\cdot (l-1))} \\
&=\frac{e^{fk+u}-e^{f+ku}}{(e^{fk}+e^{ku}\cdot (l-1)) \cdot (e^{f}+e^{u}\cdot (l-1))} 
}

We compute the ratio of Eq. (\ref{eq:cyk}) and  (\ref{eq:cy1n}). 
\eq{
&\frac{q(y'\neq y|C^y_1)-q(y'\neq y|C^y_k)}{q(y|C^y_k)-q(y'\neq y|C^y_k)} \label{eq:rat2}\\
&=\frac{e^{fk+u}-e^{f+ku}}{(e^{fk}+e^{ku}\cdot (l-1)) \cdot (e^{f}+e^{u}\cdot (l-1))} \cdot 
&\phantom{ab}\frac{e^{fk}+e^{ku}\cdot (l-1)}{e^{fk} -e^{ku}}\\
&=\frac{e^{fk+u}-e^{f+ku}}{(e^{f}+e^{u}\cdot (l-1)) \cdot (e^{fk} -e^{ku})}
}

We derive approximations for the ratio (Eq. \ref{eq:rat2}) distinguishing  different ranges of values  $f$.
\eq{
&\frac{q(y'\neq y|C^y_1)-q(y'\neq y|C^y_k)}{q(y|C^y_k)-q(y'\neq y|C^y_k)}\\
&=\frac{e^{fk+u}-e^{f+ku}}{(e^{f}+e^{u}\cdot (l-1)) \cdot (e^{fk} -e^{ku})}  \label{eq:bou2}\\
&=\frac{e^{fk+u}-e^{f+ku}}{(l+O(1)) \cdot (e^{fk} -e^{ku})}+O(1/l)\\
&\teb{using Eq. \ref{eq:fraffu}}
}
\eq{
&\text{Assume: } 1/l\leq f<c_f/l \label{eq:rat2sf}\\
&=\frac{(k-1)(f-u)+O_2 }{(l+O(1)) \cdot (k(f-u) +O_0)} \\
&\teb{ using } Eq.\ref{eq:rat2},\ref{eq:efkeku},\ref{eq:efkuefku}\\
&=\frac{(1-1/k)+O_2/(k(f-u)) }{l(1 +O_0/(k(f-u)))}+O(1/l) \\
&=\frac{1-1/k}{l}+O(k/l) \\
&\teb{using Eq. \ref{eq:O2} and \ref{eq:O0} giving:} \\
&\teb{\phantom{abcd}} O_2/(k(f-u))=O(k/l) \text{ and } O_0/(k(f-u))= O(k/l)
}
Next, we compute the reconstruction error $R_{0,C^{y'\neq y}_1}$ based on Assumption \ref{eq:rat2sf}.
\eq{
&R_{0,C^{y'\neq y}_1}=(0-k(\frac{1-1/k}{l}+O(k/l)^2\\
&=O((k/l)^2)=O(k^2/l^2)\\
}
\eq{
&\text{Assume: } c_f/l\leq f<1 \label{eq:rat2bf}\\
&=\frac{e^{fk+u}-e^{f+ku}}{(l+O(1)) \cdot (e^{fk} -e^{ku})}+O(1/l)\\
&=\frac{e^{fk}-e^{f}}{l(e^{fk} -1)}+O(1/l)\\
&=1/l+\frac{1+e^{f}}{l(e^{fk} -1)}+O(1/l)\\
&=1/l+\frac{2+(e^{f}-1}{l(e^{fk} -1)}+O(1/l)\\
&=1/l(1+\frac{2}{e^{fk} -1}+\frac{e-1}{e^{k} -1}+O(1/l)\\
}
Next, we compute the reconstruction error $R_{0,C^{y'\neq y}_1}$ based on Assumption \ref{eq:rat2bf}.
\eq{
&R_{0,C^{y'\neq y}_1}=(0-kO(1/l))^2=O(k^2/l^2)\\
}

\eq{
&\text{Assume: }  f>1\label{eq:rat21} \\
&=\frac{e^{fk+u}-e^{f+ku}}{(l+O(1)) \cdot (e^{fk} -e^{ku})}+O(1/l)\\
&=1/l(1+\frac{2}{e^{fk} -1}+\frac{e-1}{e^{k} -1}+O(f/l)\\
}
Next, we compute the reconstruction error $R_{0,C^{y'\neq y}_1}$ based on Assumption \ref{eq:rat21}.
\eq{
&R_{0,C^{y'\neq y}_1}=(0-kO(1/l))^2=O(k^2/l^2)\\
}
\end{proof}

\begin{thm} 
The reconstruction error $R_1=\Theta(k^2)$, $R_{0,C^{y'\neq y}_1}=O(k^2/l^2)$ for $\frac{\log(l)-1}{k}<f$. 
\end{thm}

\begin{proof}
We examine the second phase, where we assume that $\frac{\log(l)-1}{k}<f\leq \log(l)-1$ or, put differently, $l/2<e^{fk}$ and $e^f\leq l/2$.

We first compute an estimate for the denominator $q(y|C^y_k)-q(y|C^{y'\neq y}_k)$ of the reconstruction function $g_j$ (see Def \ref{eq:gj}).

\eq{
q(y|C^y_k)&= e^{fk} / (e^{fk}+e^{fk/l}\cdot (l-1)) \\
&\geq l/2 / (l/2+e^{0.5}\cdot l) \\
&\geq l/2 / (l/2+2\cdot l)\\
&\geq 1/5 \label{eq:qefk}
}
The upper bound  for $q(y|C^y_k)$ is 1 by the definition of the softmax function.
Thus, since $q(y|C^{y'\neq y}_k)<1/l$ we have for the nominator in $g_j$:
\eq{
&q(y|C^y_k)-q(y|C^{y'\neq y}_k) = \Theta(1) 
} 

To compute $R_1$, we get for the nominator of $g_j$ (see Def \ref{eq:gj}): 
First, we consider $q(y|C^y_1)$ and subtract the shift applied $q(y'\neq y|C^y_k)$ in $g_j$ (Eq. \ref{eq:gj}).
\eq{
&q(y|C^y_1)-q(y'\neq y|C^y_k) \label{eq:cy1a}\\
&=\frac{e^{f(k+1)}+e^{f+ku}\cdot (l-2)- e^{(k+1)u}\cdot (l-1)}{(e^{f}+e^{fu}\cdot (l-1)) \cdot (e^{fk}+e^{ku}\cdot (l-1))}\\
&=\Theta(\frac{e^{fk}+le^{f}}{(e^{f}+l) \cdot e^{fk}})
}

The reconstruction error $R_{1}$ is
\eq{
&R_{1}=(1-  k\cdot\Theta(\frac{e^{fk}+le^{f}}{(e^{f}+l) \cdot e^{fk}}))^2\\
&\teb{ using } e^{fk}\geq l/2, k\geq 2, e^f\leq \sqrt{l}\\
&=(1-  \Theta(k\cdot\frac{l^{3/2}}{l^2}))^2\\
&=(1-  \Theta(k/\sqrt{l}))^2 \label{eq:reck3}\\
&=O(1)
}

Analogously, we investigate the error $R_{0,C^{y'\neq y}_1}$. The nominator of $g_j$ (see Def \ref{eq:gj}) is

\eq{
&q(y'\neq y|C^y_1)-q(y'\neq y|C^y_k) \label{eq:cy1na}\\
&=\frac{e^{fk+u}-e^{f+ku}}{(e^{fk}+e^{ku}\cdot (l-1)) \cdot (e^{f}+e^{u}\cdot (l-1))} \\
&=\frac{e^{fk}}{(e^{fk}+l) \cdot (e^{f}+l)} \\
&=O(\frac{1}{e^{f}+l})
}

The reconstruction error $R_{0,C^{y'\neq y}_1}$  is
\eq{
R_{0,C^{y'\neq y}_1}&=(0-  kO(\frac{1}{e^{f}+l}))^2\\
&= O(\frac{k}{e^{f}+l}))^2 = O(\frac{k^2}{l^2}) \label{rec:sec2}
}


Next, we examine the third phase, which is said to begin once $e^f>l/2$.    

\eq{
q(y|C^y_1)&= e^{f} / (e^{f}+e^{u}\cdot l) \\
&\geq l/2 / (l/2+e^{0.5}\cdot l) \\
&\geq l/2 / (l/2+2\cdot l)
&\geq 1/5
}
Since $k>1$, we have $q(y|C^y_k)\geq  q(y|C^y_1)\geq 1/5$ and, thus, $q(y|C^y_1)/q(y|C^y_k) = \Theta(1)$

The reconstruction error $R_{1}$  is
\eq{
R_{1}&=(1-  \Theta(k)\cdot \Theta(1))^2\label{eq:reck4} \\
&=\Theta(k^2)\\
}
The reconstruction error $R_{0,C^{y'\neq y}_1}=O(\frac{k^2}{l^2})$ and can be obtained identically as in Eq. \ref{rec:sec2}.

\end{proof}

\begin{thm}
The reconstruction error of $g_j$ for $t=0$ is $R^{(0)}=\Omega(k^2/l)$ and for $t\rightarrow \infty$ it is $R^{(t\rightarrow\infty)}=\Omega(k/l)$. The errors $R^{(0)}$ and $R^{(t\rightarrow\infty)}$ for $g_j$ are asymptotically optimal.
\end{thm}

\begin{proof}
We show that the error is optimal for $t=0$  and within a factor of 2 of the optimal for $t\rightarrow \infty$.

For $t=0$ the output $q$ is the same for all inputs $x$, i.e., $1/l$ (see Eq. (\ref{eq:qinit}) and  Figure \ref{fig:linvis}). Thus, the constant reconstruction is optimal, i.e., the average of all outputs weighed by their frequencies.

The reconstruction error $R^{(0)}$ for $t=0$ is:
\eq{
R^{(0)}&=1/l\cdot(k-(1+k)/l)^2+1/l\cdot(1-(1+k)/l)^2\\
&+(1-2/l)\cdot(0-(1+k)/l)^2\\
&=(-1 - 2 k - k^2 + l + k^2 l)/l^2\\
&=\frac{-(k+1)^2 + l(1 + k^2)}{l^2}\\
&=\Omega(k^2/l) \teb{ since $k>1$ and $l\geq 2$}
}
Let us consider the limit $t\rightarrow \infty$: Here, $R^{(t \rightarrow \infty)}_{0}=0$, which is optimal. For attributes $j$ with $x_j=0$ we have that $\lim_{t \rightarrow \infty} q^{(t)}(y''\neq y|C^{y}_1)-q^{(t)}(y'\neq y|C^{y}_k) =0$ and thus, the nominator in $g_j$ (Eq. \ref{eq:gj}) is 0 and, in turn, $g_j$ as well, yielding no error. For $x_j=v$ with $v \in \{1,k\}$ the outputs are  converging towards 1. Since the outputs are identical the best reconstruction is the average, i.e., $(k+1)/2$, yielding an error of 
\eq{
&R^{(t \rightarrow \infty),opt}=R^{(t \rightarrow \infty),opt}_{0}+R^{(t \rightarrow \infty),opt}_{1}+R^{(t \rightarrow \infty), opt}_{k}\\
&=0+\frac{1}{l}((1-(k+1)/2)^2+(k-(k+1)/2)^2)\\
&= \frac{(k-1)^2}{2l}
}
The reconstruction error using $g_j$ is a factor 2 larger:
\eq{
&R^{(t \rightarrow \infty)}_{0}=R_{k}=0\\
&R^{(t \rightarrow \infty)}_{1}=1/l(1-k)^2\\
&R^{(t \rightarrow \infty)}=R^{(t \rightarrow \infty)}_0+R^{(t \rightarrow \infty)}_1+R^{(t \rightarrow \infty)}_k=\frac{(k-1)^2}{l}
}
\end{proof}

\begin{thm}
The total reconstruction error $R=O(k^2/l^2)$ for $1/l< f<c_f/l$ and $R_{0,C^{y'\neq y}_1}=\Omega(k/l)$ for $c_f/l<f<1$ and $R=O(1/l)$ for $1\leq f \leq \frac{\log l -1}{k}$  and $\Theta(k^2/l)$  for $f > \frac{\log l -1}{k}$ for an arbitrary constant $c_f>1$.
\end{thm}

\begin{proof}
We compute the total reconstruction error $R$:
\eq{
&R:= (1-\frac{1}{2l})(R_{0,C^{y'}_1}+R_{0,C^{y'}_k})+\frac{1}{l}(R_{1}+R_{k}) \teb{ Eq. \ref{def:brecLoss}} \\
&= (1-\frac{1}{2l})R_{0,C^{y'}_1}+\frac{1}{l}R_{1} \\
& \teb{ since  $R_{0,C^{y'}_k}=0, R_k=0$ by Def. of $g_j$ (see Sec. \ref{sec:zerorec})} \label{eq:rectot}
}
In the derivation we use repeatedly  Eq. \ref{eq:rectot}and Theorems \ref{thm:R2}, \ref{thm:R1} and \ref{thm:R0}.
\eq{
&\text{Assume: } 1/l< f<c/l \\
&R= (1-\frac{1}{2l})O(k^2/l^2)+\frac{1}{l}O(k^4/l^2) \\
&= O(k^2/l^2+k^4/l^3)=O(k^2/l^2) \\
&\text{Assume: } c/l< f<1 \\
&R=(1-\frac{1}{2l})O(k^2/l^2)+\frac{1}{l}\Omega(k)  \\
&=\Omega(k/l)\\
&\text{Assume: } 1\leq f \leq \frac{\log l -1}{k} \\
&R= O(1/l) \\
&\text{Assume: } f > \frac{\log l -1}{k} \\
&R= \Theta(k^2/l)  
}
\end{proof}

\begin{thm}
For the approximation error of the reconstruction function $g_j$ of the optimal function $g^{opt}$ holds $||g_j-g^{opt}_j||<2$ .
\end{thm}

\begin{proof}
We show that any other linear reconstruction function leads to the same asymptotic error. We can obtain any reconstruction function by changing the height $h$ of a point $(q,h)$ of the two points  through which we fit $g_j$.  We show that altered points can reduce the most dominant part of the four error terms in the sum $R=R_{k}+R_{1}+R_{0,C^{y'}_1}+R_{0,C^{y'}_k}$ only up to a constant factor. Once two error terms are equal, reducing either of them further, increases the other, i.e., asymptotically the error cannot be further reduced. 
Assume we change $h=k$ in $(q(y|C^y_k),h=k)$. For $h=k$ there is no error by definition of $g_j$, since we fit $g_j$ using this point. Say we choose $h \notin [k-1/4,k+1/4]$ then for $g_j(q(y|C^y_k))$ and the error $R_k$ we get
\eq{
&g_j(q(y|C^y_k))=\frac{h}{q(y|C^y_k)-q(y|C^{y'\neq y}_k)}\cdot (q(y|C^y_k)-q(y|C^{y'\neq y}_k))=h \\
& \teb{ Using Eq \ref{eq:gj}}\\
& R_k= (k-h)^2 \geq 1/16 = O(1)
}
Thus, we cannot change the height $h$ by 1/4 or more, since otherwise $R_{k}$ is the (asymptotically) the dominant error. 
Furthermore, changing by 1/4 does not help in reducing other errors. Varying the height by $1/4$ has only a modest impact on the slope in $g_j$, i.e., $\frac{h}{q(y|C^y_k)-q(y|C^{y'\neq y}_k)}$. That is, the slope is changed by a factor of at most $7/4\leq (k\pm 1/4)/k\leq 9/4$ for $k\geq 2$. Changing the slope in this interval has no asymptotic impact on $R_1$ as can be seen by replacing $k$ with $ck$ for $c \in [7/4,9/4]$ in the equations used to compute $R_1$ for different ranges Eq. \ref{eq:reck0},\ref{eq:reck1},\ref{eq:reck3} and\ref{eq:recf2}.

Let us examine changing $h=0$ in the other point used for fitting $g_j$, i.e., $(q(y|C^{y'\neq y}_k,h=0)$. In an analogous manner, if we choose $h \notin [-1/l,1/l]$ the error $R_{0,C^{y'\neq y}_k}$ for samples $C^{y'\neq y}_k$ increases to $(0-1/l)^2=O(1/l^2)$. Thus, let us investigate using $(q(y|C^{y'\neq y}_k,s)$ for a value $s\in [-1/l,1/l]$. For ease of analysis, let us assume that we shift both points by that value, i.e. we add an offset $s$ to the line by $g_j$. That is, we use points  $(q(y|C^{y'\neq y}_k,s)$ and $(q(y|C^y_k),k+s)$. Note that we have already shown that changing the height of $(q(y|C^y_k),k)$ by $1\gg s$ has no impact, so we can certainly shift it by $s$ without any asymptotic change. Thus, we compute $R_1$ (Eq. \ref{eq:reck0},\ref{eq:reck1},\ref{eq:reck3} and\ref{eq:recf2}), when adding an offset of $s$.
\eq{
&R_{1}=(1-k(\frac{e-1}{e^k-1})-O(kf/l)+s )^2 \teb{ using Eq \ref{eq:reck1}}\\
&=\Omega(k)\\
&R_{1}=(1-  \Theta(k(1/\sqrt{l}+s))^2  \teb{ using Eq\ref{eq:reck3}}\\
&=O(1)\\
&R_{1}=(1- \frac{kc}{e^{f(k-1)}}-ks-O(k/l))^2 = O(1) \teb{ using Eq \ref{eq:recf2}}
}
Comparing these errors with shifting by $s\in [-1/l,1/l]$ to the original ones (Eq. \ref{eq:reck1},\ref{eq:reck3} and\ref{eq:recf2}), it becomes apparent that no asymptotic change took place. 

Still, we need to assess Eq. \ref{eq:reck0}. Note that $R_{0,C^{y'\neq y}_k}=(0-s)^2=O(s^2)$, while $R_1$ is:

\eq{
&R_{1}=(1-k(\frac{1}{k}+O(k/l)+s) )^2 \teb{ using Eq\ref{eq:reck0}}\\
&= (O(k/l)+ks)^2
}
Here, in principle $(O(k/l)+ks)$ could cancel, i.e. $R_1$ could become 0, while without shifting $R_1 = O(k^4/l^2)$ (see Eq. \ref{eq:reck0}). However, since $R_{0,C^{y'\neq y}_k}=O(s^2)$ reducing $R_1$ increases $R_{0,C^{y'\neq y}_k}=O(s^2)$ leading to no possibility for an asymptotic change.\footnote{As a side remark, in this phase, i.e., for $1/l<f<c_f/l$, smaller $R_1$ would not hamper our overall claim of the existence of phases, since during this phase the error is shown to be smallest (among all phases) using $g_j$, thus, if it is even smaller for the optimal reconstruction using $g^{opt}$, the statement still hold on a qualitative level.}
\end{proof}

\subsection{Proof Tail bound (Lemma \ref{lem:gau}) and Concentration after Initialization (Theorem \ref{thm:init})}
\begin{lem} 
For a Gaussian random variable $X \in N(0,\sigma)$ holds for $a>0$ that $e^{-(a+1)^2/(2\sigma^2)}\leq P(X\geq a) \leq  e^{-a^2/(2\sigma^2)}$
\end{lem} 
\begin{proof}
We have $p(X=x)= \frac{1}{\sigma\sqrt{2\pi}}e^{-x^2/(2\sigma^2)}$ and
\eq{
P(X\geq a) &= \int_{a}^{\infty} \frac{1}{\sigma\sqrt{2\pi}}e^{-x^2/(2\sigma^2)} \\
= \int_{0}^{\infty} \frac{1}{\sigma\sqrt{2\pi}}e^{-(x+a)^2/(2\sigma^2)}
}
We lower and upper bound the last expression, we use the fact integral of $p(X=x)$ must by 1 (since $X$ is a distribution), i.e., 
\eq{
\int_{-\infty}^{\infty} \frac{1}{\sigma\sqrt{2\pi}}e^{-x^2/(2\sigma^2)}=1} and thus due to symmetry 
\eq{\int_{0}^{\infty} \frac{1}{\sigma\sqrt{2\pi}}e^{-x^2/(2\sigma^2)}=1/2}
Upper bound:
\eq{
P(X\geq a)\leq \int_{0}^{\infty} \frac{1}{\sigma\sqrt{2\pi}}e^{-(x^2+a^2)/(2\sigma^2)}
=1/2e^{-(a^2)/(2\sigma^2)}
}
Lower bound:
\eq{
&P(X\leq a)\leq \int_{0}^{\infty} \frac{1}{\sigma\sqrt{2\pi}}e^{-(x^2+2a+a^2)/(2\sigma^2)}\\
&=1/2e^{-(a^2+2a)/(2\sigma^2)}\\
&\geq 1/2e^{-(a+1)^2/(2\sigma^2)}
}
\end{proof}

\begin{thm} 
After initialization, with probability $1-1/l^{2}$ holds for all classes $y$ that $S_{y,y'} \in [-a,a]$ with $a:=-2\sqrt{\log{l}}/l$
\end{thm} 

\begin{proof}
We first show that $S_y$ is concentrated around its mean by deriving a distribution of $S_y$ being the sum of (products of) random variables and using tail bounds. Initially, the term $S_{y}:=\sum_{i \in A_{y}} w_i$ is a sum of $|A_y|=d/l$ independent normally distributed variables $w_i \sim N(0,1/d)$ (Def. \ref{ass:initw})
Thus, $S_y$ is also normally distributed with variance (due to linearity of the variance):
\eq{Var(S_y)=Var(\sum_{i \in A_{y}} w_i)=\sum_{i \in A_{y}} Var(w_i)=d/l\cdot 1/d=1/l}
Thus, $S_y \sim N(0,1/l)$.
We use the tail bounds from Lemma \ref{lem:gau} $p(S_{y} \geq a) \leq e^{a^2/(2\sigma^2)}/2$  with $a=2\sqrt{\log{l}}/l$ yielding
\eq{p(S_{y} \geq a) \leq e^{-(4\log l)/l^2/(1/l)^2}/2=1/(2l^4)}
The probability that the bound on $S_y$ holds for all $l$ classes, i.e., for all $y\in [0,l-1]$, is \eq{(1-1/(l^4))^l \leq 1-1/l^{2}}
\end{proof}

\begin{figure}[!htb]
\centering{\centerline{\includegraphics[width=0.5\textwidth]{figs/hingeLo_.png}} 
\caption{Hinge Loss }
}\label{fig:hinge}
\end{figure}

\begin{figure}[!htb]
\centering{\centerline{\includegraphics[width=0.5\textwidth]{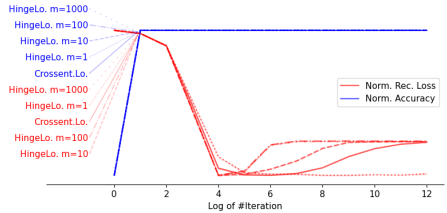}} 
\caption{Hinge Loss for linear layer network}
}\label{fig:linhinge}
\end{figure}

\subsection{Hinge losses}
See Figures \ref{fig:linhinge} and \ref{fig:hinge}.

\subsection{Other datasets and classifiers}
In Figures \ref{fig:metMuFa3},\ref{fig:metMuFa4},\ref{fig:metMuFa5} and \ref{fig:metMuFa6} we show additional datasets and classifier for CIFAR-10/100, MNIST and FashionMNIST. Here, we also report accuracy not based on the network's overall accuracy but we train a linear classifier $CL$ analogously to the decoder for every evaluation.  For the classifier $CL$ we used a single dense layer trained using SGD with fixed learning rate of 0.003 for 20 epochs. Overall, the findings are alignment wit the main part of the paper. Note that we do not normalize individual lines, thus, sometimes phases are less well-visible.

\begin{figure}
  \centering
  \subfloat[Last Layer]{\includegraphics[width=0.5\textwidth]{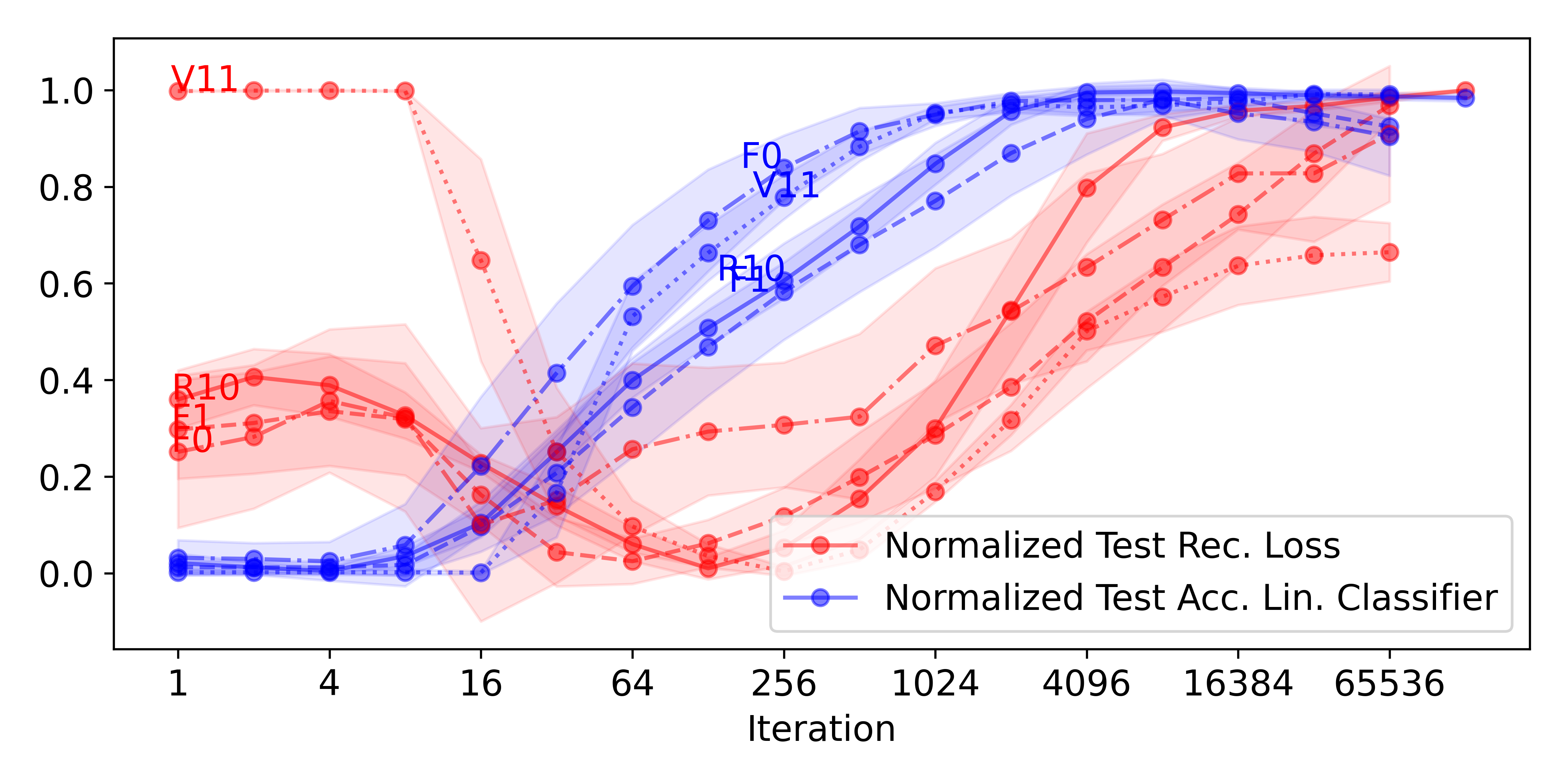} \label{fig:a}} \\
  \vspace{-12pt}
  \subfloat[Second last layer]{\includegraphics[width=0.5\textwidth]{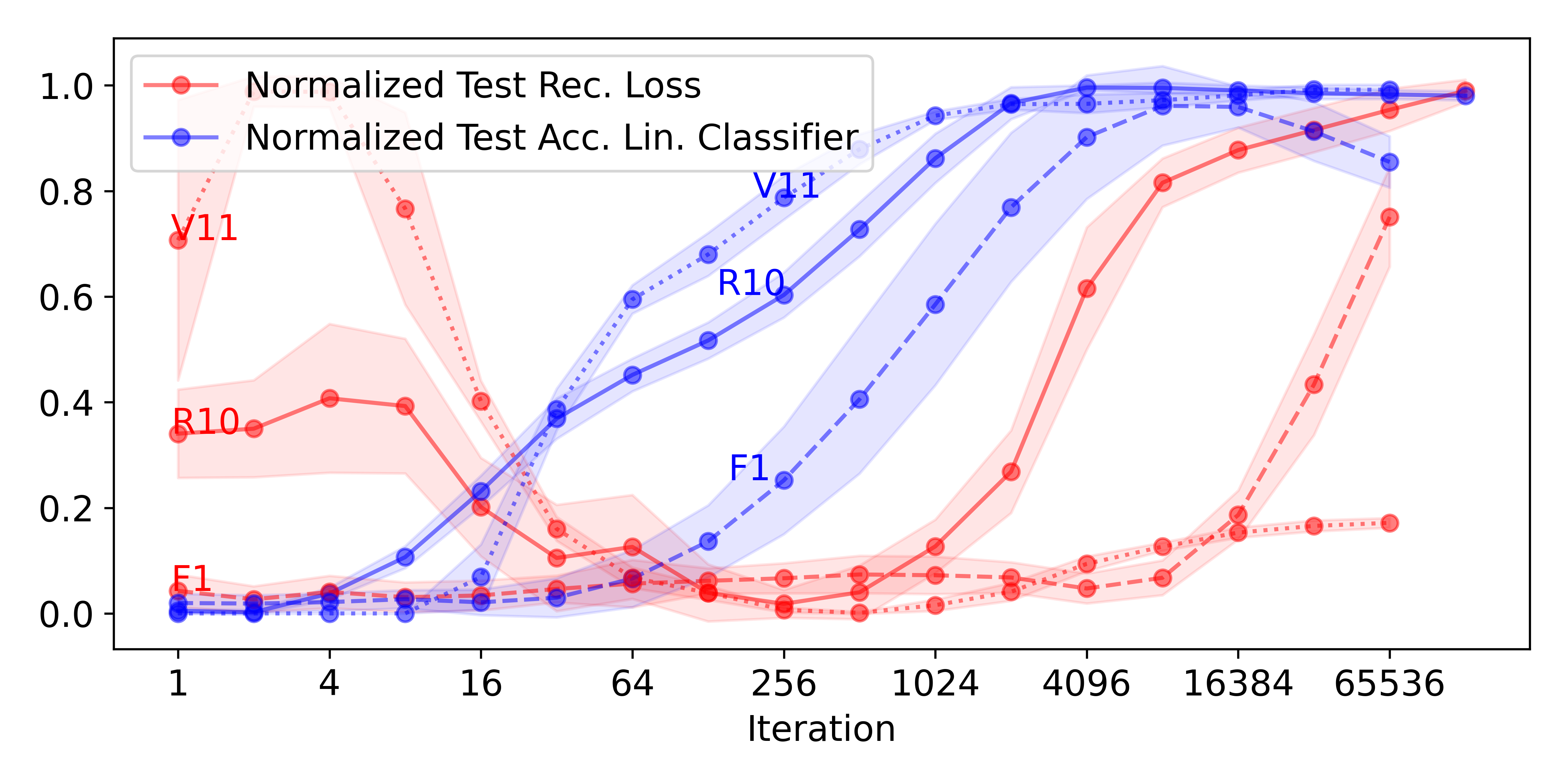} \label{fig:b}} \\
  \vspace{-6pt}
  \caption{Normalized accuracy and reconstruction loss for multiple classifiers for the CIFAR-10 dataset} \label{fig:metMuFa3}
  \vspace{-12pt}
\end{figure}

\begin{figure}
  \centering
  \subfloat[Last Layer]{\includegraphics[width=0.5\textwidth]{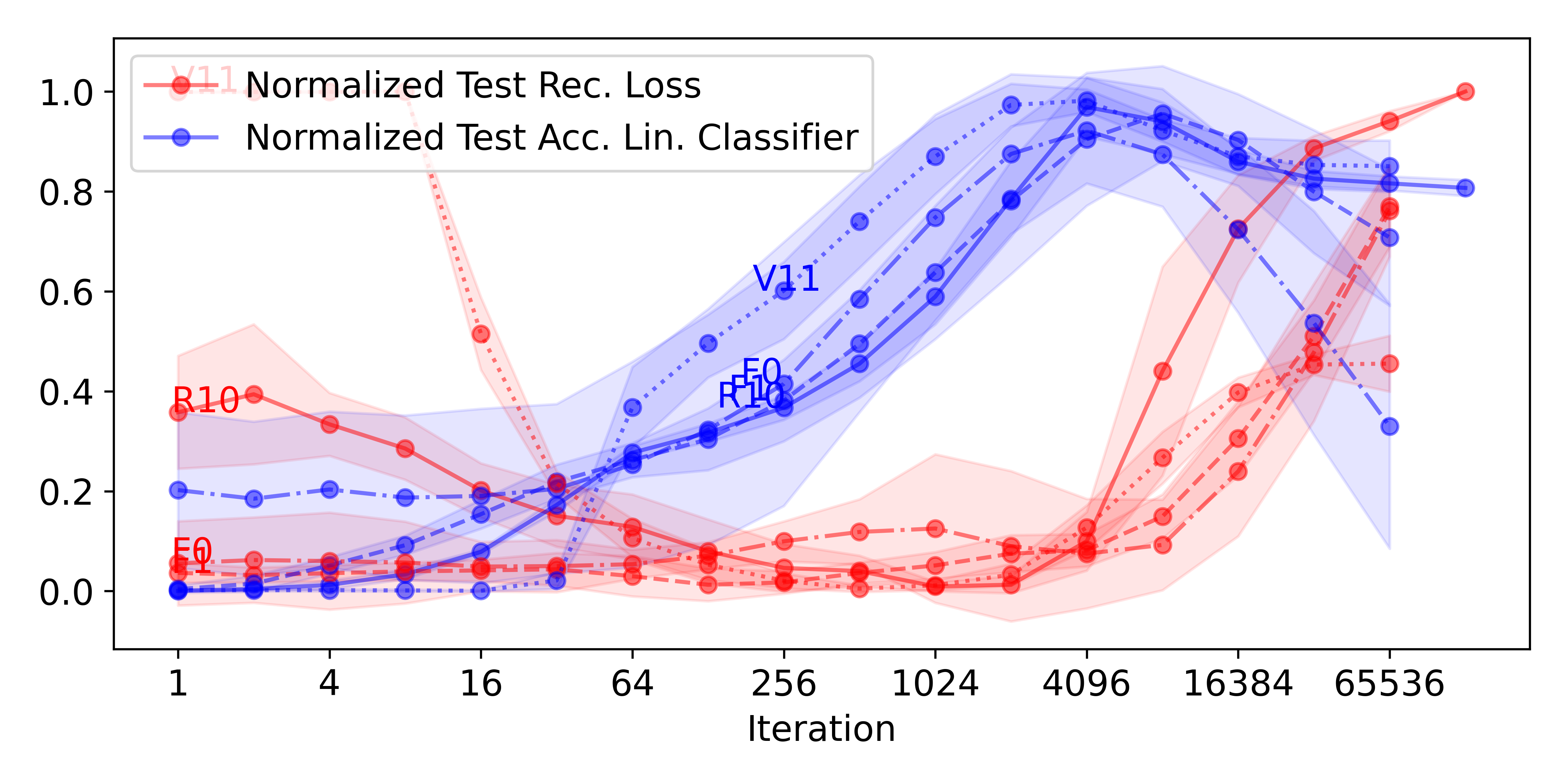} \label{fig:a}} \\
  \vspace{-12pt}
  \subfloat[Second last layer]{\includegraphics[width=0.5\textwidth]{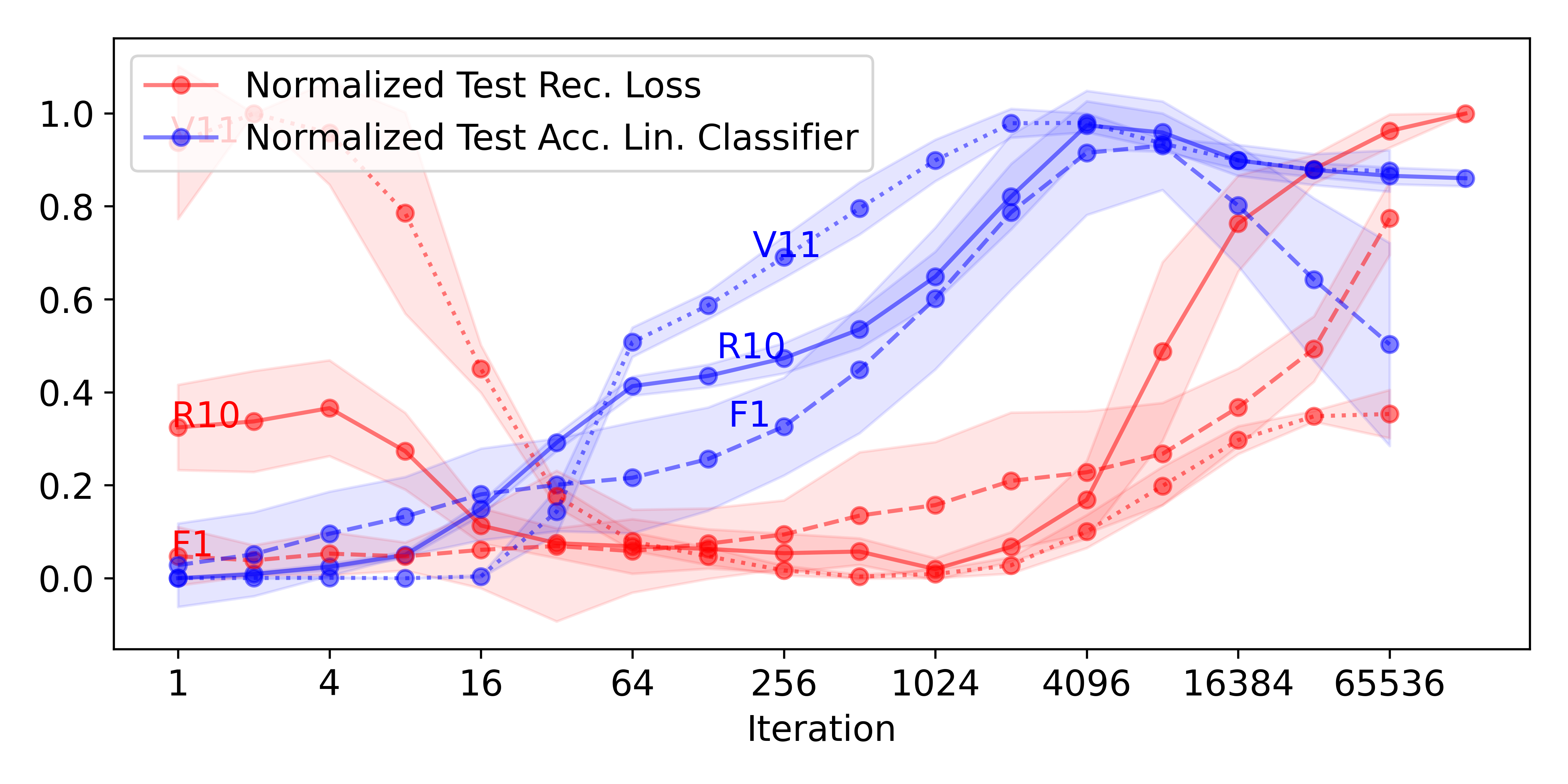} \label{fig:b}} \\
  \vspace{-6pt}
  \caption{Normalized accuracy and reconstruction loss for multiple classifiers for the CIFAR-100 dataset} \label{fig:metMuFa4}
  \vspace{-12pt}
\end{figure}

\begin{figure}
  \centering
  \subfloat[Last Layer]{\includegraphics[width=0.5\textwidth]{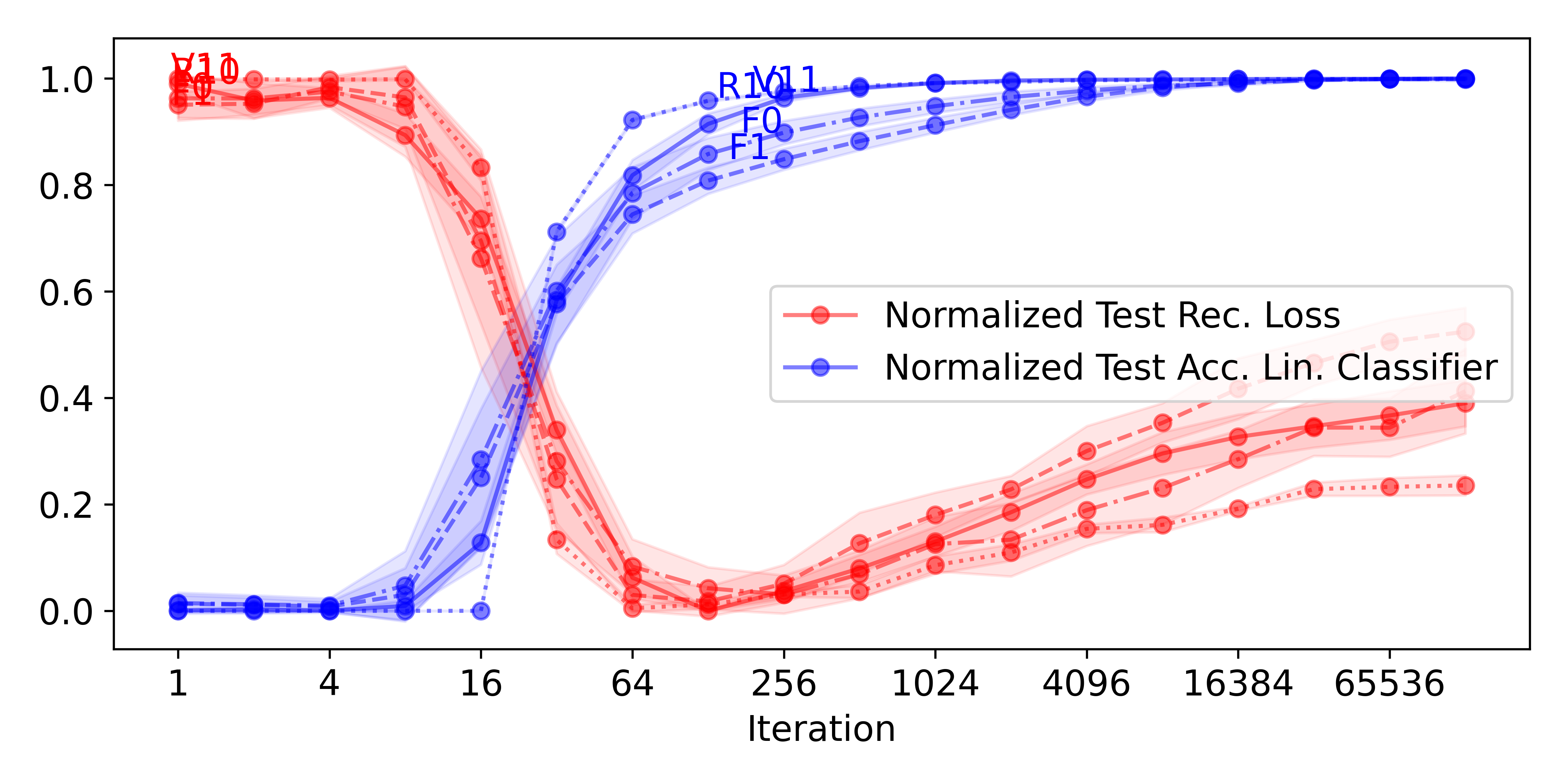} \label{fig:a}} \\
  \vspace{-12pt}
  \subfloat[Second last layer]{\includegraphics[width=0.5\textwidth]{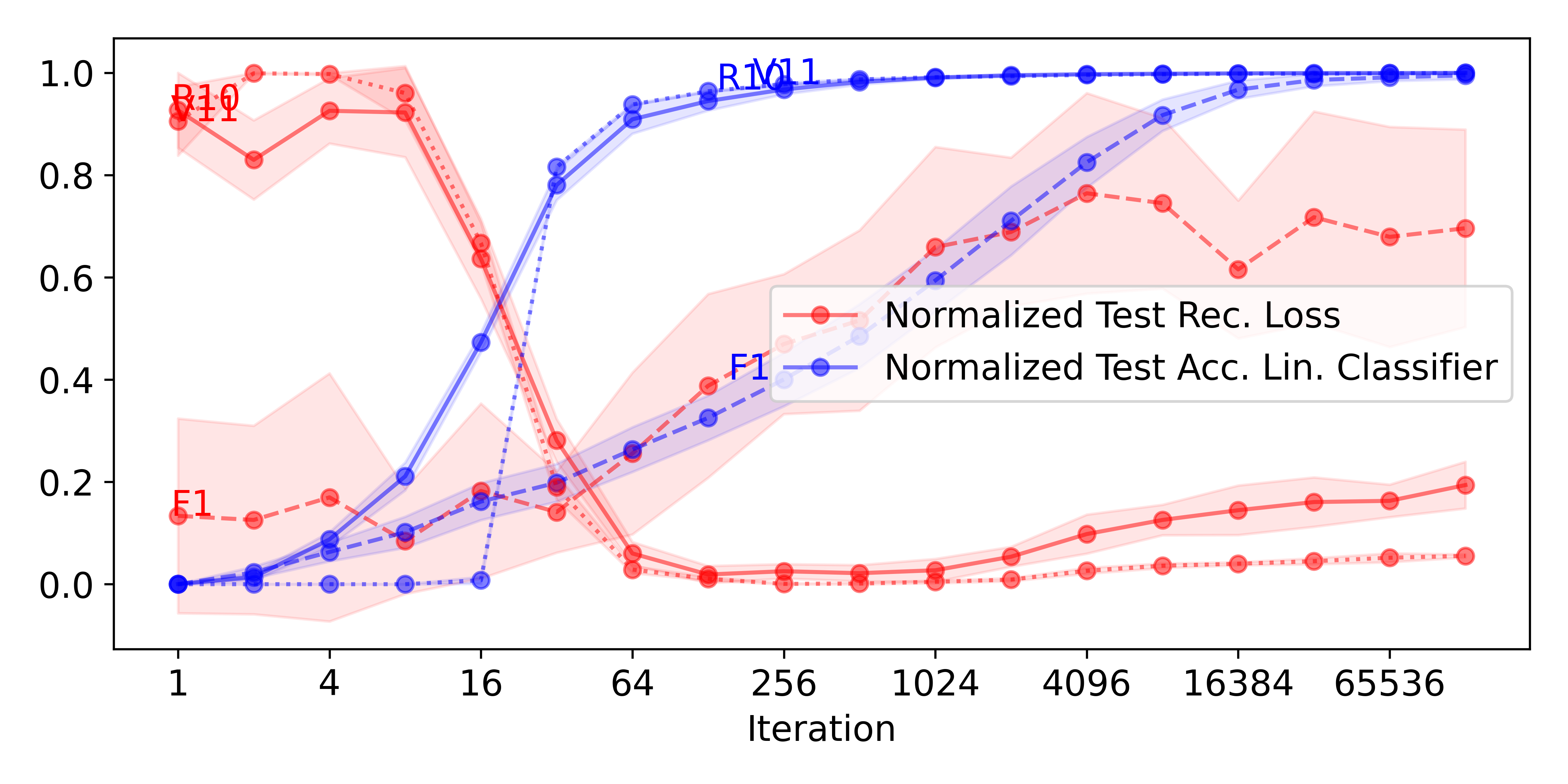} \label{fig:b}} \\
  \vspace{-6pt}
  \caption{Normalized accuracy and reconstruction loss for multiple classifiers for the MNIST dataset} \label{fig:metMuFa5}
  \vspace{-12pt}
\end{figure}

\begin{figure}
  \centering
  \subfloat[Last Layer]{\includegraphics[width=0.5\textwidth]{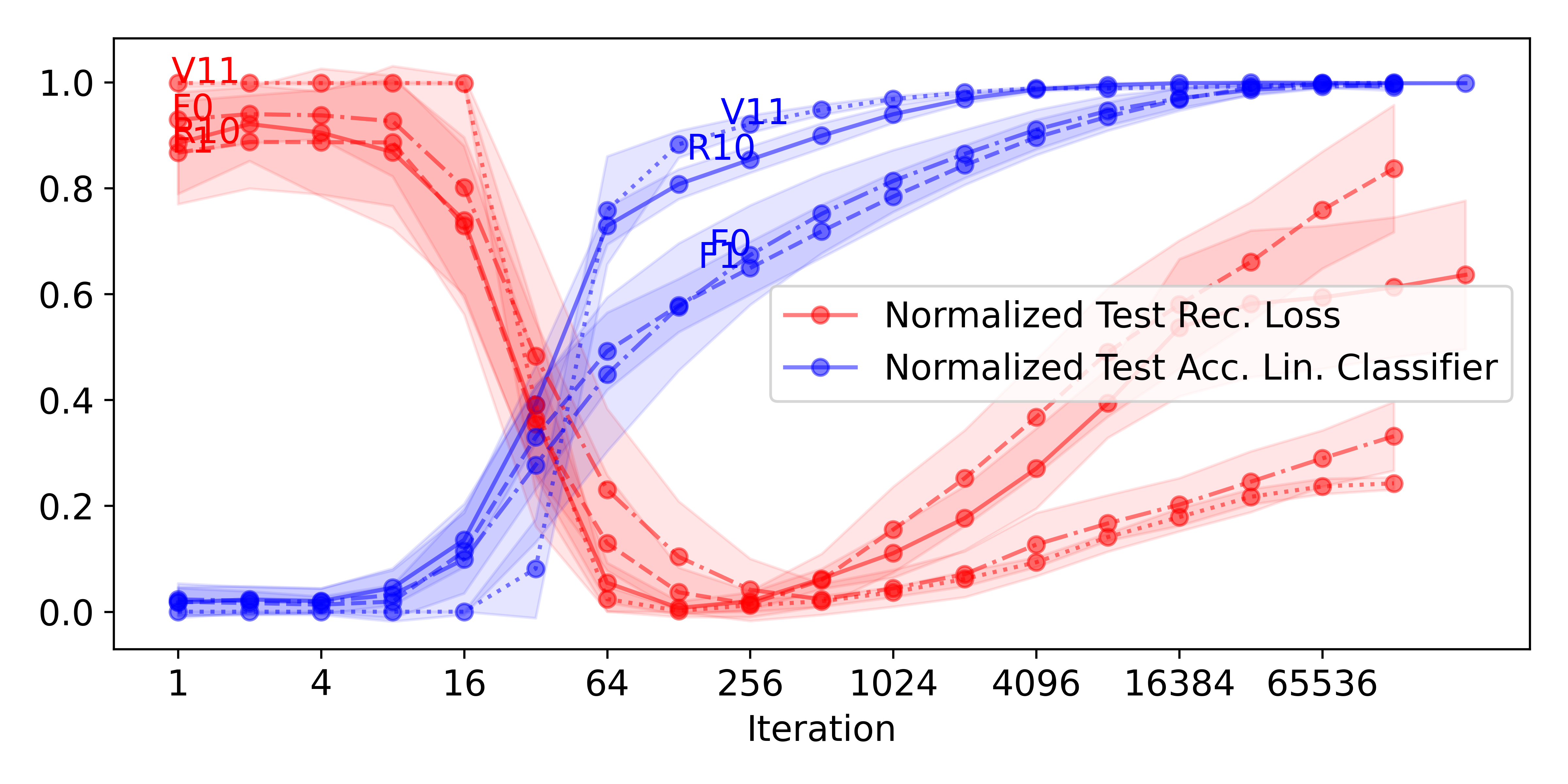} \label{fig:a}} \\
  \vspace{-12pt}
  \subfloat[Second last layer]{\includegraphics[width=0.5\textwidth]{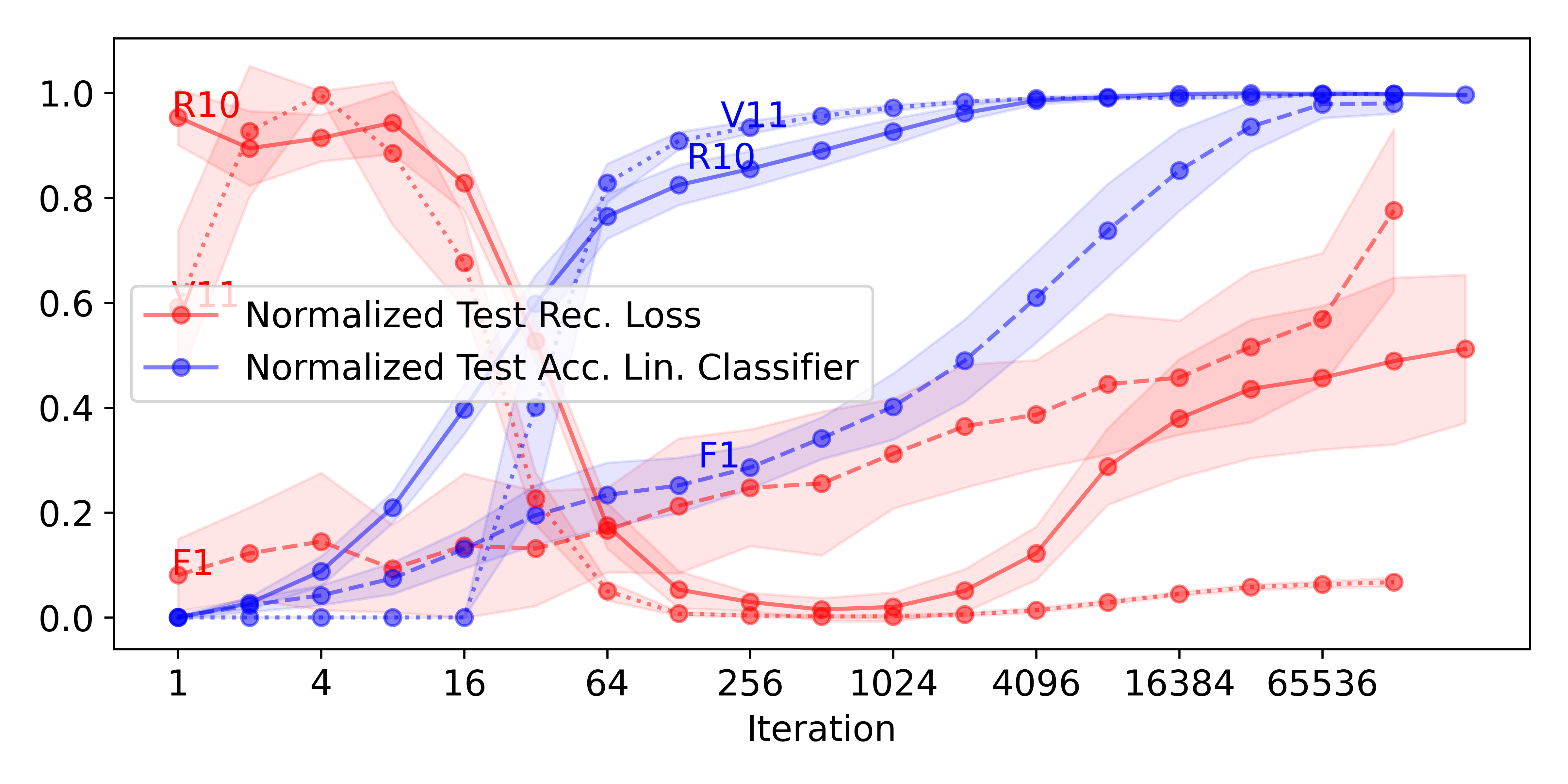} \label{fig:b}} \\
  \vspace{-6pt}
  \caption{Normalized accuracy and reconstruction loss for multiple classifiers for the FashionMNIST dataset} \label{fig:metMuFa6}
  \vspace{-12pt}
\end{figure}